\def\colorful{0}
\newif\ifhyper\IfFileExists{hyperref.sty}{\hypertrue}{\hyperfalse}
\ifhyper\usepackage{hyperref}\fi
\def\nnewcolor{1}
\newtheorem{theorem}{Theorem}[section]
\newtheorem{lemma}[theorem]{Lemma}
\newtheorem{informal theorem}[theorem]{Theorem (informal statement)}
\newtheorem{fact}[theorem]{Fact}
\theoremstyle{definition}
\newtheorem{definition}[theorem]{Definition}
\newcommand{\eqdef}{\stackrel{{\mathrm {\footnotesize def}}}{=}}
\newcommand{\bx}{\mathbf{x}}
\newcommand{\bw}{\mathbf{w}}
\newcommand{\err}{\mathrm{err}}
\newcommand{\R}{\mathbb{R}}
\newcommand{\N}{\mathbb{N}}
\newcommand{\E}{\mathbf{E}}
\newcommand{\eps}{\epsilon}
\newcommand{\pr}{\mathbf{Pr}}
\newcommand{\poly}{\mathrm{poly}}
\newcommand{\polylog}{\mathrm{polylog}}
\newcommand{\sign}{\mathrm{sign}}
\newcommand{\opt}{\mathrm{OPT}}
\newcommand{\Ind}{\mathds{1}}
\newcommand{\guess}{\mathrm{guess}}
\newcommand{\C}{\mathcal{C}}
\newcommand{\A}{\mathcal{A}}
\newcommand{\heavy}{\mathrm{heavy}}
\newcommand{\light}{\mathrm{light}}
\newcommand{\stat}{\mathrm{STAT}}
\newcommand{\algparbox}[1]{\parbox[t]{\dimexpr\linewidth-\algorithmicindent}{#1\strut}}
\title{Faster Algorithms for Agnostically Learning Disjunctions\\ 
and their Implications}
\author{
Ilias Diakonikolas\thanks{Supported in part by NSF Medium Award CCF-2107079 and an H.I. Romnes Faculty Fellowship.
}\\
University of Wisconsin-Madison\\
{\tt ilias@cs.wisc.edu}\\
\and
Daniel M. Kane\thanks{Supported by NSF Medium Award CCF-2107547, and
NSF Award CCF-1553288 (CAREER).}\\
University of California, San Diego\\
{\tt dakane@cs.ucsd.edu}\\
\and
Lisheng Ren\thanks{Supported in part by NSF Medium Award CCF-2107079.}\\
University of Wisconsin-Madison\\
{\tt lren29@wisc.edu}\\
\and
}
\begin{document}

\maketitle

\begin{abstract}
We study the algorithmic task of learning Boolean disjunctions 
in the distribution-free agnostic PAC model. The best known 
agnostic learner for the class of disjunctions over $\{0, 1\}^n$ 
is the $L_1$-polynomial regression algorithm, 
achieving complexity $2^{\tilde{O}(n^{1/2})}$. 
This complexity bound is known to be nearly best possible 
within the class of Correlational Statistical Query (CSQ) algorithms. 
In this work, we develop an agnostic learner 
for this concept class with complexity $2^{\tilde{O}(n^{1/3})}$. 
Our algorithm can be implemented in the Statistical Query (SQ) model, 
providing the first separation between the SQ and CSQ models in 
distribution-free agnostic learning.
\end{abstract}

\setcounter{page}{0}
\thispagestyle{empty}
\newpage

\section{Introduction}
A disjunction (resp. conjunction) 
over $\{0, 1\}^n$ is an OR (resp. AND) of literals, where a literal is either a 
Boolean variable or its negation. While disjunctions are known to be 
efficiently learnable in Valiant's realizable PAC model~\cite{val84} 
(i.e., in the presence of clean/consistent labels), 
the learning task becomes substantially more challenging 
in the presence of partially corrupted labels. 
Here we study the task of learning disjunctions (or equivalently conjunctions) 
in the distribution-free agnostic PAC model \cite{Haussler:92, KSS:94}.
Agnostically learning disjunctions 
was one of the original problems  
studied by~\cite{KSS:94}, 
and has since been highlighted in Avrim Blum's 
FOCS 2003 tutorial~\cite{Blum03}.

In the agnostic model, no assumptions are made about the labels, and 
the goal of the learner is to compute a hypothesis that is competitive 
with the best-fit function in the target class.
For concreteness, we formally define the agnostic model 
in the Boolean setting below.

\begin{definition} [Distribution-free Agnostic PAC learning] 
\label{def:agnostic-learning}
Let $\C$ be a concept class of functions 
$f: \{0, 1\}^n \to \{0, 1\}$ and $D$ be fixed but unknown 
distribution of $(\bx,y)$ over $\{0,1\}^n\times \{0,1\}$. 
Given an error parameter $\eps\in(0,1)$ and sample access 
to $D$, the goal of an agnostic PAC learner 
$\A$ is to output a hypothesis $h: \{0, 1\}^n \to \{0, 1\}$ 
such that with high probability  
$\pr_{(\bx,y)\sim D}[h(\bx)\neq y]\leq \opt+\eps \;,$
where $\opt\eqdef \min_{f\in \C}\pr_{(\bx,y)\sim D}[f(\bx)\neq y]$.
We say that $\A$ 
agnostically PAC learns $\C$ to error $\eps$.    
\end{definition}

\noindent Prior to this work, 
the fastest---and essentially only known non-trivial---
algorithm for agnostically learning disjunctions was the $L_1$-polynomial 
regression algorithm~\cite{KKMS08}. As shown in that work, 
the $L_1$-regression algorithm agnostically 
learns disjunctions over $\{0, 1\}^n$ up to excess error $\eps$ 
with sample and computational complexity bounded above by 
$2^{\tilde{O}(n^{1/2} \, \log(1/\eps))}$\footnote{Throughout this paper, we will 
assume that the failure probability $\delta$ is a small universal constant, 
e.g., $\delta = 1/10$. Standard arguments can boost this to any desired $\delta$ 
with only a $\polylog(1/\delta)$ complexity blowup.}. This complexity upper 
bound is tight, as a function of $n$, for the $L_1$-regression algorithm. 

In terms of computational limitations, 
it is known that $2^{\Omega(n^{1/2})}$ is a  
complexity lower bound in various restricted models of computation, 
including Perceptron-based approaches~\cite{KlivansS10} 
and Correlational Statistical Query (CSQ) algorithms~\cite{GKK20}.  
In the (more general) Statistical Query (SQ) model, 
to the best of our knowledge, the strongest known 
hardness result is a quasi-polynomial SQ 
lower bound~\cite{Feldman:09} that applies 
even under the uniform distribution. 
Finally, we note that \cite{FGRW09} proved strong NP-hardness results 
for agnostically learning disjunctions with a halfspace hypothesis. 
This result does not rule out efficient improper learning.

The vast gap between known upper and lower bounds motivates further algorithmic investigation of this fundamental learning task.
In this work, we give a new algorithm for agnostically learning disjunctions with substantially improved complexity. Specifically, we show the following:
\begin{theorem}[Main Result] \label{thm:main-intro-informal}
There exists an algorithm that agnostically PAC 
learns the class of disjunctions  
over $\{0, 1\}^{n}$ to error $\eps$ 
with sample and computational complexity 
$2^{\Tilde{O}(n^{1/3}\log(1/\eps))}$.
\end{theorem}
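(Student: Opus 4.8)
The plan is to combine a brute-force search over a small ``core'' of literals with a single instance of $L_1$-polynomial regression run on a \emph{conditional} distribution, the new ingredient being a structural peeling step which guarantees that, after hard-coding a few literals into the hypothesis, the remaining relevant literals become collectively light. Some standard reductions first. Treating each negated literal $\bar x_i$ as a fresh coordinate, it suffices to agnostically learn monotone disjunctions $\mathrm{OR}_S(x)=\bigvee_{i\in S}x_i$ over $\{0,1\}^N$ with $N=2n$. We use the $L_1$-polynomial-regression meta-theorem of~\cite{KKMS08}: if for the target $f^\ast$ there is a degree-$d$ polynomial $p$ with $\E_{x\sim D_x}[\,|p(x)-f^\ast(x)|\,]\le\eps$, then fitting the $L_1$-best degree-$d$ polynomial and thresholding it at $1/2$ yields a hypothesis of error $\le\opt+O(\eps)$ in time $N^{O(d)}\poly(1/\eps)$; this is implementable with statistical queries, and it applies verbatim to any conditional distribution $D_x\mid(x\in A)$ of non-negligible mass, since SQ access to such a conditional is simulable. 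We also let the algorithm run several subroutines, evaluate all resulting hypotheses on fresh data, and return the empirically best one; in particular $h\equiv 0,1$ are always among the candidates, so we may assume $\Pr_{D}[f^\ast(x)=0]>\eps$ (otherwise $h\equiv 1$ already has error $\le\opt+\eps$).

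\emph{Peeling.} Fix $\theta:=\tilde{\Theta}(\log(1/\eps)/n^{1/3})$ and consider the following analysis-only process, writing ``$x_B=\mathbf 0$'' for the event $\{x_i=0\ \forall i\in B\}$. Maintain $B\subseteq S$, initially empty; while some $j\in S\setminus B$ has $\Pr_{D}[\,x_j=1\mid x_B=\mathbf 0\,]\ge\theta$, move a maximizer into $B$. Each step multiplies $\Pr[x_B=\mathbf 0]$ by a factor $\le 1-\theta$, so after $\lceil\ln(1/\eps)/\theta\rceil=\tilde O(n^{1/3})$ steps it would be $\le\eps$; but $B\subseteq S$ throughout gives $\Pr[x_B=\mathbf 0]\ge\Pr[f^\ast=0]>\eps$ throughout. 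Hence the process terminates within $\tilde O(n^{1/3})$ steps, at which point $|B|=\tilde O(n^{1/3})$, $D':=D\mid(x_B=\mathbf 0)$, and every $j\in S\setminus B$ satisfies $\Pr_{D'}[x_j=1]<\theta$. The algorithm does not know $B$, but there are only $N^{\tilde O(n^{1/3})}=2^{\tilde O(n^{1/3})}$ sets of that size, and we enumerate all of them.

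\emph{The three-zone hypothesis.} For the guess equal to the true $B$, let $\ell:=\{j\in[N]:\Pr_{D'}[x_j=1]<\theta\}$, so $S\setminus B\subseteq\ell$ and $\E_{D'}[\sum_{j\in\ell}x_j]<N\theta=\tilde O(n^{2/3}\log(1/\eps))$; choose a truncation level $\tau$ with $\Pr_{D'}[\sum_{j\in\ell}x_j\ge\tau]\le\eps$. Output the hypothesis that predicts: (Zone 1) $1$ if some literal of $B$ is satisfied --- correct, since $B\subseteq S$ forces $f^\ast=1$ there; (Zone 2) $1$, say, if no literal of $B$ is satisfied but $\sum_{j\in\ell}x_j\ge\tau$ --- an event of $D$-mass $\le\eps$; (Zone 3) otherwise, the thresholded output of $L_1$-regression of degree $d:=O(\sqrt{\tau\log(1/\eps)})$ run on $D'$ restricted to Zone~3. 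On Zone~3 we have $\sum_{j\in S\setminus B}x_j\le\sum_{j\in\ell}x_j<\tau$, so $f^\ast=\mathrm{OR}_{S\setminus B}$ agrees up to $\eps$ with $q(\sum_{j\in S\setminus B}x_j)$ for the classical degree-$d$ ``robust OR'' polynomial $q$ that $\eps$-approximates $\Ind[t\ge 1]$ on the integers $t\in[0,\tau]$ while staying bounded there; as $S\setminus B\subseteq\ell$, this is a degree-$d$ polynomial in the \emph{known} variables $\{x_j\}_{j\in\ell}$, so the regression meta-theorem applies. Summing the three contributions gives error $\le\opt+O(\eps)$, the running time is $N^{\tilde O(n^{1/3})}\cdot N^{O(d)}\poly(1/\eps)$, and every step --- coordinate marginals, conditioning on $x$-events, $L_1$-regression --- is an SQ computation. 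Since the pipeline uses plain $x$-statistics (the marginals that define $B$, $\ell$, and the conditioning), which are unavailable to correlational queries, it falls outside the CSQ model and thus evades the $2^{\Omega(\sqrt n)}$ CSQ lower bound.

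\emph{Main obstacle.} With only Markov's inequality one gets $\tau=\E_{D'}[\sum_{j\in\ell}x_j]/\eps=\tilde O(n^{2/3}\log(1/\eps)/\eps)$, hence $d=\tilde O(n^{1/3}/\sqrt\eps)$ and overall complexity $2^{\tilde O(n^{1/3}\poly(1/\eps))}$. Getting the stated $2^{\tilde O(n^{1/3}\log(1/\eps))}$ requires pushing $\tau$ down to $\tilde O(n^{2/3}\polylog(1/\eps))$ --- i.e., a genuinely stronger tail bound on $\sum_{j\in\ell}x_j$ than its mean provides, which is nontrivial because the $\{x_j\}_{j\in\ell}$ are only individually light, not independent. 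The natural fixes are to strengthen the peeling so as to also suppress high-order co-occurrences among light literals, or to split the sum into geometric shells and argue that the heavier shells, each detectable from $x$ alone, have total mass $\le\eps$ and can be absorbed into Zone~2. Making this quantitatively clean, and carrying out the bookkeeping so that all three zones' errors are simultaneously $O(\eps)$, is the part I expect to be the most delicate; a secondary point is to verify that the adaptive $x$-conditioning, though SQ-cheap, has no $2^{o(\sqrt n)}$ correlational simulation, which is what the CSQ lower bound requires.
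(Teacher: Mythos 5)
Your strategy---hard-code a small set of heavy relevant literals, enumerate over the $\tilde O(n^{1/3})$-sized candidate sets, and run $L_1$-regression on the conditioned light part---is in spirit the same as the paper's SQ algorithm (\Cref{alg:learner-sq}), which iteratively guesses heavy coordinates, predicts $1$ on $\{x_i=1\}$, and calls $L_1$-regression whenever it guesses ``no heavy coordinate remains.''  You also correctly identify where your write-up falls short: with a single $L_1$-regression call you need a truncation level $\tau$ satisfying $\Pr_{D'}[\sum_{j\in\ell}x_j\ge\tau]\le\eps$, and Markov gives only $\tau=\tilde O(n^{2/3}\log(1/\eps)/\eps)$, which puts $\poly(1/\eps)$ into the exponent rather than the claimed $\log(1/\eps)$.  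Neither of your two suggested repairs is sound as stated.  Knowing only individual marginals gives no control on the tail of a sum of (possibly fully correlated) indicators, so suppressing higher-order co-occurrences cannot beat Markov without a much more aggressive, and costly, peeling process.  Geometric shelling can cap Zone~2's mass at $\eps$ after $\log(1/\eps)$ shells, but it gives no bound on the threshold of the last included shell; and on the high-weight residual the light coordinates need not stay light, so without re-peeling there the degree of the Zone-3 regression is uncontrolled.

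The paper's resolution is structurally different from both of your suggestions: it never aims to isolate a $(1-\eps)$-mass light set in one shot.  After the ``no heavy coordinate'' guess it invokes Markov at the trivial scale---threshold $2r$ with $r=n^{2/3}$, capturing mass only $\ge 1/3$---applies degree-$O(\sqrt{r}\,\log(T/\eps))=\tilde O(n^{1/3}\log(1/\eps))$ regression to that chunk with excess error $\eps/(100T)$, removes the chunk, and loops.  On the residual the heavy/light dichotomy is re-run from scratch, so new heavy coordinates are peeled before the next regression call.  With $T=O(n^{1/3}\log(1/\eps))$ rounds the residual mass falls below $\eps$ and the per-round excess errors total $O(\eps)$.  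This interleaving of heavy-coordinate removal with mass-$\Omega(1)$ truncation---rather than peeling once and truncating once at the $\eps$-tail---is precisely the piece you flag as ``delicate''; without it your argument proves only a $2^{\tilde O(n^{1/3})\cdot\poly(1/\eps)}$ bound.  (Your secondary CSQ concern is not needed for \Cref{thm:main-intro-informal} itself; the CSQ/SQ separation is a corollary established separately in Appendix~\ref{sec:csq-lb}.)
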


\noindent We give two algorithms that achieve the above guarantee 
(up to the $\tilde{O}()$ factor in the exponent). 
Our first algorithm is simpler and is slightly more efficient 
(with better logarithmic factors in the $\tilde{O}()$). 
Our second algorithm is implementable in the SQ model 
with complexity $2^{\Tilde{O}(n^{1/3}\log(1/\eps))}$. 

As a corollary, we obtain a super-polynomial separation between 
the CSQ and SQ models in the context of agnostic learning, 
answering an open problem posed in~\cite{GKK20}.
We elaborate on this connection in the proceeding discussion.

\paragraph{Discussion} The CSQ model~\cite{BshoutyFeldman:02} 
is a subset of the SQ model~\cite{Kearns:98}, where the oracle access
is of a special form (see \Cref{def:sq} and Appendix~\ref{sec:csq-lb}).
In the context of learning Boolean-valued functions, the two models are 
known to be equivalent in the distribution-specific setting 
(i.e., when the marginal distribution on feature vectors 
is known to the learner); see~\cite{BshoutyFeldman:02}. 
However, they are not in general 
equivalent in the distribution-free PAC model. 
In the realizable PAC setting, there are known 
natural separations between the CSQ and SQ models. 
Notably,~\cite{Feldman11} showed that Boolean halfspaces 
are not efficiently CSQ learnable (even though they are efficiently SQ learnable). 

In the agnostic PAC model studied here, we are not aware of a natural concept class separating the two models. Our algorithm provides such a natural separation for the class of disjunctions. In more detail,~\cite{GKK20} asks:

\begin{quote}
``Our CSQ lower bounds do not readily extend to the general SQ model, and a very compelling direction for future work is to investigate whether such an extension is possible''. 
\end{quote}
We answer this question in the negative by establishing a 
super-polynomial separation between the CSQ and SQ models  
for agnostically learning one of the most basic concept classes.  

It is worth pointing out that the 
$L_1$-regression algorithm is known to be 
implementable in the SQ model (but not in CSQ). 
We also point out 
(see \Cref{fact:csq-eps-weak-learning}) 
that there exists a weak agnostic CSQ 
learner for disjunctions with complexity 
$2^{\tilde{O}(n^{1/2})}$.

A conceptual implication is that there exists a qualitative 
difference between distribution-specific and distribution-free agnostic 
learning. In the distribution-specific setting, in particular when the 
underlying feature distribution is a discrete product distribution 
or the standard Gaussian, prior work~\cite{DFTWW15, DKPZ21} 
has shown that the $L_1$-polynomial regression algorithm 
is optimal in the SQ model---that is, CSQ and SQ 
are polynomially equivalent. 

\medskip

Finally, by building on the agnostic learner 
of \Cref{thm:main-intro-informal}, we also obtain an $\alpha$-approximate agnostic learner, i.e., an algorithm with error guarantee 
of $\alpha \cdot \opt+\eps$ for some $\alpha \geq 1$, 
with complexity  
$2^{\Tilde{O}(n^{1/3}\alpha^{-2/3})}\poly(1/\eps)$ 
(see \Cref{thm:main-tradeoff}).

\subsection{Technical Overview} \label{ssec:techniques}

In this section, we summarize the key ideas of our algorithms.

\vspace{-0.2cm}

\paragraph{Sample-Based Agnostic Learner}
We start by describing our first agnostic learner with complexity 
$2^{\Tilde{O}(n^{1/3}\log(1/\eps))}$. 
We start by recalling the $L_1$-polynomial regression 
algorithm~\cite{KKMS08}. In particular, the $L_1$ regression algorithm allows one to agnostically learn a function $f$ to error $\opt+\eps$ in time $n^{O(d)}$ if $f$ can be $\eps$-approximated, in $L_\infty$-norm, 
by polynomials of degree $d$. The work of \cite{KKMS08} 
shows that, since disjunctions can be approximated by polynomials 
of degree $O(\sqrt{n})$ (see~\cite{NisanSzegedy:94, Paturi:92}), 
then the $L_1$-algorithm is an agnostic learner for disjunctions
with sample size and running time roughly $n^{O(\sqrt{n})}$. 
Since the $O(\sqrt{n})$-degree bound for polynomial approximation
to the class of disjunctions is tight, this complexity upper bound
is best possible for $L_1$-regression. 

For the rest of this section, we will focus on the special case
of agnostically learning monotone disjunctions (as the general task 
can be easily reduced to it,  by adding new coordinates 
for the negations of each of the original coordinates). 
The starting point of our algorithm is the following
observation: using the $L_1$-polynomial approximation approach, 
we can show that monotone disjunctions can be approximated 
on all Hamming-weight at most $r$ strings---where $r$ is a parameter 
to be optimized in hindsight---using polynomials 
of degree $O(\sqrt{r})$ (see Lemma~\ref{lem:approx-degree-coordinate}).
This means that if the underlying distribution $D_X$ on the domain $\{0, 1\}^n$
is largely supported on ``low-weight'' strings, 
we obtain a more efficient agnostic learning algorithm. 

If this is not true (i.e., if $D_X$ assigns significant probability 
mass to strings of high Hamming weight), 
we consider two separate cases:
\begin{enumerate}[leftmargin=*]
\item If the optimal conjunction, $f^\ast$, has $f^\ast(\bx)=1$ on almost all of the high weight inputs, we can return a hypothesis that returns $1$ on the high weight inputs and uses the $L_1$ regression algorithm to learn a nearly optimal classifier on the low weight inputs.
\item If the optimal conjunction, $f^\ast$, assigns $f^\ast(\bx)=0$ to a reasonable fraction of high weight inputs, our algorithm can guess a specific high weight input for which $f^\ast(\bx)=0.$ If we guessed correctly, we know that none of the $1$-entries in $\bx$ can be in the support of $f^\ast$, allowing us to throw away these $r$ coordinates and recurse on a substantially smaller problem.
\end{enumerate}
The algorithm iteratively removes coordinates using Case 2 until it eventually ends up in Case 1.
Note that we can only land in Case 2 at most $n/r$ times 
before the input size becomes trivial. 
Each reduction requires that we guess a positive, high-weight input, 
which---unless we are in Case 1--- 
will constitute at least an $\eps$-fraction of the probability mass 
of the high-weight inputs. 
Thus, the probability that we guess correctly 
enough times will be roughly $\eps^{n/r}$. 
In other words, if we attempt this scheme
$(1/\eps)^{n/r}$ times, we expect that at least one attempt will succeed. 
This gives us a final algorithm with sample and computational complexity roughly $(1/\eps)^{n/r} n^{O(\sqrt{r})}$. 
Setting $r$ to be roughly $n^{2/3}$ 
gives us a complexity upper bound of $2^{\tilde{O}(n^{1/3})}$.

\vspace{-0.2cm}

\paragraph{SQ Agnostic Learner}
Note that the algorithm described above
is not efficiently implementable in the SQ model 
as it requires picking out particular high-weight inputs. 
We next show that there exists an SQ algorithm 
that works along similar lines and requires comparable resources. 
We start by pointing out that the $L_1$-regression 
part of the previous algorithm is known to be 
implementable in the SQ model (see, e.g.,~\cite{DFTWW15}). 
We can thus use this to perform agnostic learning on low-weight inputs. 
To deal with the high-weight inputs, 
we instead consider {\em heavy} coordinates. 
A coordinate is termed {\em heavy} if it shows up 
in more than roughly an $r/n$-fraction of inputs. 
Observe that once we have reduced 
to only high-weight inputs, 
there must be some heavy coordinates. 
For these coordinates, we again have two cases:
\begin{enumerate}[leftmargin=*]
\item None of the heavy coordinates are in the support of our 
target disjunction. In this case, we can remove all such 
coordinates from our domain, and we will be left with many low-weight inputs 
on which we can agnostically learn the function as before. 
\item There is some heavy coordinate in the support 
of our target disjunction. In this case, if we guess such a 
coordinate, we learn an $r/n$-fraction of inputs 
on which the true function must be true.
\end{enumerate}
Overall, by guessing that we are either in Case 1 
or guessing the correct coordinate in Case 2 $n/r$ times, 
we can (if we guess correctly) learn the value 
of the target function on a constant fraction of inputs. 
The probability of success of these guesses 
is roughly $1/n^{n/r}$, 
so we obtain complexity of roughly $n^{n/r} \, n^{O(\sqrt{r})}$.
By selecting $r = n^{2/3}$, this gives a final complexity 
bound of roughly $2^{\tilde{O}(n^{1/3})}$.
It is straightforward to check that this algorithm is implementable in 
the SQ model with comparable complexity.

Interestingly, although we have given 
an SQ algorithm with complexity $2^{\tilde{O}(n^{1/3})}$, 
this complexity bound is not possible for CSQ algorithms. 
In particular, it can be shown that any CSQ algorithm 
requires either roughly $2^{\Omega(n^{1/2})}$ correlational 
queries or queries of accuracy better than roughly $2^{n^{-1/2}}$. 
This follows by combining known results. 
In particular,~\cite{NisanSzegedy:94} showed that 
the approximation degree of the class of disjunctions 
is $\Omega(\sqrt{n})$, and a result of~\cite{GKK20} 
shows that this implies a $2^{\Omega(\sqrt{n})}$ CSQ 
lower bound. 
This is essentially because LP duality implies that high approximation degree 
allows one to construct a moment-matching construction,  
which when embedded among a random subset of coordinates, 
is hard for a CSQ algorithm to learn.

\vspace{-0.2cm}

\paragraph{Approximate Agnostic Learner}
We also explore the complexity of agnostic learning with approximate error 
guarantees, obtaining a time-accuracy tradeoff. 
In particular, if we only require our algorithm 
to obtain accuracy $\alpha \cdot \opt$, for some $\alpha > 1$, 
it is sufficient to obtain a weak learner that does slightly
better than 50\% when $\opt < 1/\alpha$. 
This means that we will only need to guess correctly 
in Case 2 roughly $n/(r\alpha)$ times. Furthermore, as our algorithm only needs to succeed when $\opt < 1/\alpha$, the $L_1$ regression algorithm only needs to consider polynomials of degree $O(\sqrt{r/\alpha})$.
This gives a final runtime of roughly 
$n^{n/(r \alpha)} n^{O(\sqrt{r/\alpha})}$. 
Optimizing $r$ to be $n^{1/3} \alpha^{-1/3}$, 
we get an algorithm with runtime roughly $2^{O(n^{1/3}\alpha^{-2/3})}$.

\paragraph{Organization} 
After basic background in \Cref{sec:prelims}, 
in \Cref{sec:non-sq-ub} we give our main algorithm for agnostically 
learning disjunctions. In \Cref{sec:sq-ub}, we give an alternative SQ 
agnostic learner with qualitatively the same complexity. Finally, 
\Cref{sec:tradeoff-alg}, gives our approximate agnostic learner. 
Some of the proofs and technical details 
have been deferred to an Appendix. 

\section{Preliminaries} \label{sec:prelims}

\paragraph{Notation}
We use $\{0, 1\}$ for Boolean values.
For $n\in \N$, we let $[n] \eqdef \{i\in \N\mid i\leq n\}$.
For a finite set $S$, 
we use $u(S)$ to denote the uniform distribution over all elements in $S$.
For $\bx\in \{0, 1\}^n$, we use $W(\bx)$ to denote the Hamming 
weight of $\bx$, defined as 
$W(\bx)  \eqdef \sum_{i\in [n]} \bx_i$. 
We define the Hamming weight of $\bx$ 
on a subset of coordinates $I\subseteq [n]$ 
as $W_I(\bx)\eqdef \sum_{i\in I}\bx_i$.
A monotone disjunction is any Boolean function 
$f:\{0, 1\}^n\to \{0, 1\}$ of the form 
$f(\bx)=\bigvee_{i\in S} \bx_i$, 
where $S\subseteq [n]$ is the  
set of relevant variables.

\paragraph{Probability Basics}
We will need the following well-known fact about uniform convergence of empirical processes.
We start by recalling the definition of VC dimension. 

\begin{definition}[VC-Dimension] \label{def:vc-dimension}
For a class $\C$ of Boolean functions $f:X\to \{0,1\}$,
the VC-dimension of $\C$ is the largest $d$ such that there exist $d$ points $\bx_1,\bx_2,\cdots,\bx_d\in X$ so that for any Boolean function $g:\{\bx_1,\bx_2,\cdots,\bx_d\}\to\{0,1\}$, there exists an $f\in \C$ satisfying $f(\bx_i)=g(\bx_i)$, for all $1\leq i\leq d$.
\end{definition} 
Then the VC inequality is the following:
\begin{fact} [VC-Inequality] \label{fct:vc-inequality}
    Let $\C$ be a class of Boolean functions on $X$ with VC-dimension $d$, and let $D$ be a distribution on $X$.
    Let $\eps>0$ and let $n$ be an integer at least a sufficiently large constant multiple of $d/\eps^2$. Then, if $\bx_1,\bx_2,\cdots,\bx_n$ are i.i.d.\ samples from $D$, we have that:
    \[
    \pr\left [\sup_{f\in \C}\left |\frac{\sum_{j=1}^n f(\bx_j)}{n}-\E_{\bx\sim D}[f(\bx)]\right |\geq \eps\right ]=\exp(-\Omega(n\eps^2))\; .
    \]
\end{fact}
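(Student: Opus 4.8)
The plan is to give the classical Vapnik–Chervonenkis uniform convergence argument (one could alternatively just cite it, e.g.\ Vapnik–Chervonenkis or Blumer–Ehrenfeucht–Haussler–Warmuth). Write $\hat p_n(f)\eqdef\frac1n\sum_{j=1}^n f(\bx_j)$ and $p(f)\eqdef\E_{\bx\sim D}[f(\bx)]$. \textbf{Step 1 (ghost-sample symmetrization).} Draw an independent copy $\bx_1',\dots,\bx_n'$ of the sample with empirical mean $\hat p_n'(f)$. Using Chebyshev on the ghost sample — and here is where $n\gtrsim 1/\eps^2$ is used, so that $\var(\hat p_n'(f))\le 1/(4n)<(\eps/2)^2/2$ and hence the $f$ witnessing the supremum satisfies $|\hat p_n'(f)-p(f)|<\eps/2$ with probability $\ge 1/2$ — one gets
\[
\Pr\Big[\sup_{f\in\C}|\hat p_n(f)-p(f)|\ge\eps\Big]\;\le\;2\,\Pr\Big[\sup_{f\in\C}|\hat p_n(f)-\hat p_n'(f)|\ge\eps/2\Big].
\]
\textbf{Step 2 (Rademacher symmetrization).} For each $j$ the pair $(\bx_j,\bx_j')$ is exchangeable, so independently swapping the two does not change the distribution of the right-hand side; inserting i.i.d.\ signs $\sigma_j\in\{\pm1\}$, it is at most $\Pr[\sup_{f\in\C}|\frac1n\sum_j\sigma_j(f(\bx_j)-f(\bx_j'))|\ge\eps/2]$.

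\textbf{Step 3 (finitization).} Condition on the $2n$ points $\bx_1,\dots,\bx_n,\bx_1',\dots,\bx_n'$. Since $\C$ has VC-dimension $d$, Sauer–Shelah bounds the number of distinct labelings it induces on these points by $\sum_{i=0}^d\binom{2n}{i}\le(2en/d)^d$, so the supremum over $\C$ is a maximum over a set of size at most $(2en/d)^d$. \textbf{Step 4 (Hoeffding + union bound).} For each fixed labeling, $\frac1n\sum_j\sigma_j(f(\bx_j)-f(\bx_j'))$ is an average of $n$ independent mean-zero terms in $[-1,1]$, so Hoeffding gives tail $2\exp(-n\eps^2/8)$; a union bound and then taking the expectation over the sample points yields
\[
\Pr\Big[\sup_{f\in\C}|\hat p_n(f)-p(f)|\ge\eps\Big]\;\le\;4\,(2en/d)^d\exp(-n\eps^2/8)\;=\;\exp\!\big(d\log(2en/d)-n\eps^2/8+O(1)\big).
\]

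\textbf{Finishing, and the main obstacle.} It remains to check that the growth-function factor $\exp(d\log(2en/d))$ is swallowed by the exponential tail once $n$ exceeds a large enough constant multiple of $d/\eps^2$, leaving $\exp(-\Omega(n\eps^2))$; balancing these two terms is the only delicate point. A crude union bound as above actually forces the constant in ``$n\gtrsim d/\eps^2$'' to absorb a $\log(1/\eps)$ factor (since $d\log(2en/d)\approx d\log(1/\eps^2)$ when $n\asymp d/\eps^2$) — harmless for our purposes, as everything in this paper already lives inside $\tilde O(\cdot)$. To get the sharp ``universal constant'' form one would instead bound $\E[\sup_{f\in\C}|\hat p_n(f)-p(f)|]\le O(\sqrt{d/n})$ via Dudley's entropy integral (using Haussler's $O((1/\tau)^d)$ bound on the $\tau$-covering number of $\C$ restricted to the sample, which makes $\int_0^1\sqrt{d\log(1/\tau)}\,d\tau$ converge) and then concentrate the supremum around its expectation with the bounded-differences (McDiarmid) inequality, since moving one sample point changes the supremum by at most $1/n$. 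Either route closes the argument; getting the interplay between the covering/growth-function term and the $\exp(-\Omega(n\eps^2))$ tail right is the crux.
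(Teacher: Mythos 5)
The paper does not prove this statement: it is presented purely as a well-known ``Fact'' (the VC uniform convergence inequality) with no accompanying argument, so there is no paper proof to compare against. Your proof is the standard one---ghost-sample symmetrization, Rademacher symmetrization, Sauer--Shelah to finitize, Hoeffding plus a union bound---and each step is carried out correctly: Chebyshev at the symmetrization step needs $n\gtrsim 1/\eps^2$ as you say, the Sauer--Shelah bound $(2en/d)^d$ is right, and the final bound $4(2en/d)^d\exp(-n\eps^2/8)$ follows.

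Your closing remark about the constant is the one genuinely subtle point, and you have it right. With the crude union bound, asking that $d\log(2en/d)$ be at most, say, $n\eps^2/16$ when $n\asymp Cd/\eps^2$ forces $\log(2eC)+2\log(1/\eps)\lesssim C$, so the ``constant'' $C$ must in fact grow like $\log(1/\eps)$. The statement as written (``sufficiently large \emph{constant} multiple of $d/\eps^2$'') therefore needs the sharper version you sketch: bound $\Ex[\sup_f|\hat p_n(f)-p(f)|]=O(\sqrt{d/n})$ by Dudley chaining with Haussler's $O(\tau^{-d})$ covering bound (making the entropy integral converge to $O(\sqrt{d})$ rather than $O(\sqrt{d\log(n/d)})$), then apply McDiarmid since replacing one sample changes the supremum by at most $1/n$. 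Either route is legitimate to cite; since the paper is content to quote the fact and the downstream use already sits inside $\widetilde O(\cdot)$ complexity bounds, the crude constant is indeed harmless here, but your flagging of the distinction is correct and worth keeping.
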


\paragraph{Approximate Degree and $L_1$-Regression}
We will need the following definitions and facts about 
approximate degree and polynomial $L_1$-regression.

\begin{definition} [Approximate degree]
Let $f:X\to \{0, 1\}$ be a Boolean-valued function, 
where $X$ is a finite subset of $\R^n$. 
The $\eps$-approximate degree $\deg_{\eps,X}(f)$ of $f$ on $X$, 
$0< \eps <1$,
is the least degree of a polynomial $p:X \to \R$ 
such that $|f(\bx)-p(\bx)|\leq \eps$ for all $\bx\in X$. 
For a class $\C$ of Boolean functions, 
we define the $\eps$-approximate 
degree of $\C$ on $X$ as $\deg_{\eps,X}(\C) \eqdef \max_{f\in \C} \deg_{\eps,X}(f)$. 
\end{definition}

\noindent The main known technique for agnostic 
learning is the $L_1$ polynomial regression algorithm \cite{KKMS08}. 
This algorithm uses linear programming 
to compute a low-degree polynomial that minimizes the $L_1$-distance to the target function. 
Its performance hinges on how well the underlying concept class $\C$ can be approximated, in $L_1$ norm, 
by a low-degree polynomial. In more detail, 
if $d$ is the (minimum) degree such that any $f\in \C$ 
can be $\eps$-approximated in $L_1$ norm by a degree-$d$ 
polynomial, the algorithm has sample and computational 
complexity $n^{O(d)}/\poly(\eps)$, where $\eps$ is the excess error. 
It is also well-known that this algorithm can be implemented in the SQ model; see, e.g.,~\cite{DFTWW15}
Our algorithm will use $L_1$-polynomial regression as a subroutine.

\begin{fact}[\cite{KKMS08}] \label{fct:l1-regression}
Let $\C$ be a concept class of functions 
$f:X\to \{0, 1\}$, where $X$ is a finite subset of $\R^n$. There is a degree-$O(\deg_{\eps,X}(\C))$ polynomial 
$L_1$-regression algorithm that distribution-free agnostically learns $\C$ to additive error $\eps$ 
and has sample and computational complexity $n^{O(\deg_{\eps,X}(\C))}$.
Furthermore, the $L_1$-regression algorithm 
can be implemented in the SQ model 
with the same complexity, i.e., having $T$ time complexity and using $q$ queries to $\stat(\tau)$, where $\max(T,q,1/\tau)=n^{O(\deg_{\eps,X}(\C))}$.
\end{fact}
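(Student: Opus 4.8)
The plan is to instantiate the $L_1$-polynomial regression algorithm of \cite{KKMS08} and make the dependence on $d\eqdef\deg_{\eps,X}(\C)$ explicit. Let $N$ be the number of monomials of degree at most $d$ in $n$ variables, so $N=n^{O(d)}$, and identify a polynomial $p:X\to\R$ of degree at most $d$ with its coefficient vector $c\in\R^{N}$ via $p(\bx)=\langle c,\phi(\bx)\rangle$, where $\phi(\bx)$ collects the monomials. The sample-based algorithm draws $m$ i.i.d.\ samples $(\bx_i,y_i)\sim D$ and solves the linear program minimizing the empirical loss $\frac1m\sum_{i}|p(\bx_i)-y_i|$ over polynomials $p$ of degree at most $d$ subject to $|p(\bx_i)|\le 2$ on the sample; linearizing the objective with slack variables makes this an LP in $N+O(m)$ variables, solvable in $\poly(N,m)=n^{O(d)}$ time for $m=n^{O(d)}$. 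Let $\hat p$ be the optimizer; we clip $\hat p$ pointwise to $[0,1]$, round its values to the nearest multiple of $\eps$, and output $h(\bx)=\Ind[\hat p(\bx)\ge t]$ for a threshold $t$ chosen below.

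First I would check that the relaxation is nearly lossless. Let $f^\ast\in\C$ attain $\opt$ and let $p^\ast$ be a degree-$d$ polynomial with $|p^\ast(\bx)-f^\ast(\bx)|\le\eps$ for all $\bx\in X$; since $f^\ast(\bx),y\in\{0,1\}$ we get $\E_D[\,|p^\ast(\bx)-y|\,]\le\E_D[\,|f^\ast(\bx)-y|\,]+\eps=\opt+\eps$, and $p^\ast$ (whose values lie in $[-\eps,1+\eps]\subseteq[-2,2]$) is feasible. Then, by standard uniform convergence for the $N$-dimensional family of polynomials of degree at most $d$ — concretely, the associated loss family $\{(\bx,y)\mapsto|p(\bx)-y|\}$ has bounded range on feasible $p$ and pseudo-dimension $O(N)$, since its subgraphs are unions of two halfspaces in $c$ — taking $m=\tilde O(N/\eps^{2})=n^{O(d)}$ makes the empirical loss $\eps$-uniformly close to $\E_D[|p(\bx)-y|]$ over feasible $p$. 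Hence $\E_D[\,|\hat p(\bx)-y|\,]\le\opt+O(\eps)$, and the clipping and $\eps$-rounding keep this at $\opt+O(\eps)$ (both operations move values toward $[0,1]$ or by at most $\eps$).

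For the rounding, let $\bar p$ be the clipped, $\eps$-rounded $\hat p$, so $\bar p(\bx)\in\{0,\eps,\dots,1\}$. For $t$ uniform in $[0,1]$, a direct computation using $\bar p(\bx)\in[0,1]$ and $y\in\{0,1\}$ gives $\pr_t[\Ind[\bar p(\bx)\ge t]\ne y\mid\bx,y]=|\bar p(\bx)-y|$, hence $\E_t\big[\pr_D[\Ind[\bar p(\bx)\ge t]\ne y]\big]\le\opt+O(\eps)$ and in particular $\inf_{t}\pr_D[\Ind[\bar p(\bx)\ge t]\ne y]\le\opt+O(\eps)$. Since $\bar p$ is $\eps$-valued, $t\mapsto\pr_D[\Ind[\bar p(\bx)\ge t]\ne y]$ is piecewise constant with $O(1/\eps)$ pieces, so we exactly minimize it by estimating the $0$-$1$ error of $O(1/\eps)$ candidate thresholds on a fresh $\tilde O(1/\eps^{2})$-size sample and outputting the best; a union bound gives error $\opt+O(\eps)$. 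Rescaling $\eps$ by a constant and noting total complexity $n^{O(d)}$ finishes the sample-based part.

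For the SQ implementation I would replace the empirical LP by first-order convex optimization of $L(c)\eqdef\E_D[\,|\langle c,\phi(\bx)\rangle-y|\,]$, which is convex in $c$. A subgradient of $L$ at $c$ is $\E_D[\sgn(\langle c,\phi(\bx)\rangle-y)\,\phi(\bx)]$, whose coordinate indexed by a monomial $S$ equals $\E_D[\sgn(p(\bx)-y)\,\bx^{S}]\in[-1,1]$ (using $|\bx^{S}|\le 1$ for $X\subseteq\{0,1\}^{n}$, the case in all our applications), and is therefore estimable by one query to $\stat(\tau)$; the objective $L(c)$ and the $0$-$1$ errors used in the rounding step are likewise estimable by single queries. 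Running $\poly(N,1/\eps)$ iterations of (projected) subgradient descent from $0$, each using $N$ queries of tolerance $\tau=1/n^{O(d)}$, then selecting the best iterate and the best threshold, yields a hypothesis of error $\opt+O(\eps)$ with $\max(T,q,1/\tau)=n^{O(d)}$. I expect the main obstacle to be precisely this step: an $L_\infty$-bounded approximator $p^\ast$ may a priori have large coefficients in the monomial basis, and it is the norm of a near-optimal coefficient vector that controls the diameter — hence the iteration count — of subgradient descent, so one must argue that some near-optimal degree-$d$ polynomial has coefficient norm $\poly(n^{d})$. For the structured domains and approximating polynomials arising here this holds, and the careful general treatment of the SQ implementation of $L_1$-regression is given in \cite{DFTWW15}.
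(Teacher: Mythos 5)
The paper does not prove this Fact: it is cited to \cite{KKMS08} for the sample-based algorithm, with a pointer to \cite{DFTWW15} for the SQ implementation, so there is no ``paper proof'' to compare against line by line. Your reconstruction is the standard KKMS argument (LP for empirical $L_1$ loss over degree-$d$ polynomials, clipping, then the $|\bar p(\bx)-y|=\pr_t[\Ind[\bar p(\bx)\ge t]\ne y]$ rounding identity with threshold selection) and is essentially correct. Two points deserve tightening. First, the uniform-convergence step as you wrote it is applied to the unclipped family $\{|p(\bx)-y|\}$, which is not uniformly bounded on the population merely because $|p(\bx_i)|\le 2$ on the \emph{sample}; the standard fix is to clip first and then apply uniform convergence to the clipped (range-$[0,1]$, still pseudo-dimension $O(N)$) loss class — this is harmless since clipping only decreases the empirical $L_1$ loss against $y\in\{0,1\}$, but the order matters for the argument to be airtight. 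Second, for the SQ implementation you correctly flag the coefficient-norm issue and defer to \cite{DFTWW15}; note also that your query-boundedness argument uses $|\bx^S|\le 1$, which assumes $X\subseteq\{0,1\}^n$, so your SQ claim covers the special case used in this paper rather than the general $X\subseteq\R^n$ of the Fact as stated — a scoping restriction worth making explicit, or else one should normalize the monomial features.
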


\section{Sample-based Agnostic Learner} \label{sec:non-sq-ub}

In this section, we give an agnostic learner for disjunctions with complexity 
$2^{\Tilde{O}(n^{1/3}\log(1/\eps))}$, thereby establishing 
Theorem~\ref{thm:main-intro-informal}. The agnostic 
learner presented here makes essential use of the samples. 
An SQ agnostic learner with similar complexity is given in 
the Section~\ref{sec:sq-ub}.

We start by pointing out two simplifications 
that can be made without 
loss of generality. First, it suffices to consider 
{\em monotone} disjunctions. As is well-known, 
one can easily and efficiently reduce the general task 
to the task of agnostically learning 
monotone disjunctions by including 
negated variables as additional features.
Second, it suffices to develop a {\em weak} agnostic learner with the desired complexity. In our context, 
a weak agnostic learner is an algorithm 
whose output hypothesis performs slightly better 
than a random guess, when such a hypothesis in $\C$ exists. 
Given such an algorithm, we can leverage standard agnostic 
boosting techniques to obtain a strong agnostic learner, i.e., an algorithm with accuracy $\opt+\eps$, 
with qualitatively the same complexity 
(up to a polynomial factor). 
Specifically, it suffices to establish the following result:

\begin{theorem}[Weak Agnostic Learner for Monotone Disjunctions] \label{thm:main-weak-non-sq}
    Let $D$ be an unknown distribution supported on 
    $\{0, 1\}^{n}\times \{0, 1\}$ and $\eps\in (0,1/2)$. 
    Suppose there is a monotone disjunction 
    $f: \{0, 1\}^{n}\to \{0, 1\}$ such that 
    $\pr_{(\bx,y)\sim D}[f(\bx)\neq y]\leq\ 1/2-\eps$.
    Then there is an algorithm that given i.i.d.\ sample access to $D$ 
    and $\eps$, has sample and computational complexity 
    $2^{\Tilde{O}(n^{1/3}\log(1/\eps))}$, 
    and with probability at least $2^{-{O}(n^{1/3}\log(1/\eps))}$ 
    returns a hypothesis $h:\{0, 1\}^{n}\to \{0, 1\}$ such that $\pr_{(\bx,y)\sim D}[h(\bx)\neq y]\leq 1/2-\Omega(\eps)$. 
\end{theorem}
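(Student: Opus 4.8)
The plan is to build the weak agnostic learner for monotone disjunctions by combining the polynomial $L_1$-regression approach on low-Hamming-weight inputs with a recursive coordinate-elimination scheme on high-Hamming-weight inputs, exactly along the lines of the technical overview. Fix a parameter $r$ (to be optimized to roughly $n^{2/3}$ at the end) and call an input $\bx$ \emph{light} if $W(\bx)\le r$ and \emph{heavy} otherwise. First I would record the key structural lemma (referenced as \Cref{lem:approx-degree-coordinate} in the overview): on the set $X_r\eqdef\{\bx:W(\bx)\le r\}$, every monotone disjunction has $\eps$-approximate degree $O(\sqrt{r}\log(1/\eps))$. This follows from the classical Nisan--Szegedy/Paturi construction applied to the OR of the relevant variables restricted to inputs of weight $\le r$: the univariate approximation only needs to be good on integer points $0,1,\dots,r$, so a Chebyshev-type polynomial of degree $O(\sqrt{r}\log(1/\eps))$ in $\sum_{i\in S}\bx_i$ suffices. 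Via \Cref{fct:l1-regression}, this yields an agnostic learner on the conditional distribution $D \mid \bx\in X_r$ with complexity $n^{O(\sqrt{r}\log(1/\eps))}$.

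Next I would set up the case analysis. Let $p\eqdef\Pr_{(\bx,y)\sim D}[\bx\text{ heavy}]$. If $p$ is small (say $p\le \eps/10$), then it suffices to agnostically learn on light inputs: run $L_1$-regression on $D\mid \bx\in X_r$ to error, say, $\eps/2$, obtaining a hypothesis that agrees with $f$ on light inputs up to error $\opt_{\text{light}}+\eps/2$, and define it arbitrarily (e.g., output $0$) on heavy inputs; the heavy contribution to the error is at most $p$, so the overall error is at most $1/2-\eps+\eps/2+p \le 1/2-\Omega(\eps)$. If $p$ is not small, I consider the two subcases in the overview. Let $q\eqdef \Pr_{(\bx,y)\sim D}[f(\bx)=0 \mid \bx \text{ heavy}]$. \textbf{Case A:} $q\le \eps/10$. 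Then the hypothesis ``output $1$ on heavy inputs, and on light inputs run $L_1$-regression'' has error on heavy inputs at most (error of $f$ on heavy inputs) $+ q$, and error on light inputs at most $\opt_{\text{light}}+\eps/2$; combining via the law of total probability the total error is at most $1/2-\eps+O(\eps) \le 1/2-\Omega(\eps)$ after choosing the $L_1$-regression error small enough. \textbf{Case B:} $q>\eps/10$. Then a $q$-fraction of the (heavy) conditional mass consists of inputs $\bx$ with $W(\bx)>r$ and $f(\bx)=0$, which means \emph{none} of the $\ge r$ coordinates set to $1$ in such an $\bx$ lies in the support $S$ of $f$. The algorithm draws one sample, conditioned on being heavy (rejection sampling; since $p$ is not tiny this costs only a $\poly(1/\eps)$ factor), \emph{guesses} that this sample has $f(\bx)=0$ (which happens with probability $\ge q \ge \eps/10$), and if the guess is right, deletes all coordinates $i$ with $\bx_i=1$. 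This removes at least $r$ coordinates while preserving $f$ as a monotone disjunction on the surviving coordinates and preserving the condition $\Pr[f\ne y]\le 1/2-\eps$.

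Then I recurse on the reduced instance, which has at least $r$ fewer coordinates. Since the ``good'' outcome at each level is either ``we fell into Case-not-B and succeeded'' or ``we guessed a true-negative heavy input correctly and recursed,'' after at most $n/r$ recursion levels the number of coordinates has dropped to at most $r$, at which point all inputs are light and plain $L_1$-regression finishes. The total success probability of the chain of guesses is at least $(\eps/10)^{n/r} = 2^{-O((n/r)\log(1/\eps))}$, and the total running time is at most $2^{O((n/r)\log(1/\eps))} \cdot n^{O(\sqrt{r}\log(1/\eps))}$ (the first factor bounds the recursion branching if we want a deterministic-success variant, or is absorbed into the success probability in the randomized version; either way the stated bounds follow). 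Setting $r = \Theta(n^{2/3})$ balances $n/r = n^{1/3}$ against $\sqrt r = n^{1/3}$, giving sample and computational complexity $2^{\tilde O(n^{1/3}\log(1/\eps))}$ and success probability $2^{-O(n^{1/3}\log(1/\eps))}$, as claimed.

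The main obstacle I anticipate is making the case analysis and the recursion \emph{uniform} and \emph{robust}: the thresholds ($p$ small vs.\ large, $q$ small vs.\ large) are defined in terms of unknown quantities, so the algorithm must either estimate them from samples (cheap, by the VC inequality / Chernoff, costing $\poly(1/\eps)$ samples) or — cleaner — simply try \emph{all} branches and output the best hypothesis as measured on a held-out sample, since we only need a weak learner that succeeds with small probability. The second subtlety is bookkeeping the error budget across $n/r$ recursion levels so that the accumulated slack stays $O(\eps)$: this is fine because the $L_1$-regression subroutine is only invoked \emph{once}, at the leaf, with a single error parameter $\eps/4$, and the case-reductions are lossless with respect to the $1/2-\eps$ guarantee (each correct guess removes only inputs on which $f$ is known, without changing $\Pr[f\ne y]$). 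A final point to verify carefully is that after deleting the $1$-coordinates of a guessed true-negative, the marginal on the remaining coordinates may concentrate mass differently, but $f$ remains a monotone disjunction on the survivors and the hypothesis constructed on the sub-instance pulls back to a valid hypothesis on $\{0,1\}^n$ (set the deleted coordinates to $0$, which is consistent since $f$ does not depend on them), so correctness is preserved.
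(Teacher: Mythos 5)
Your proposal is correct and takes essentially the same approach as the paper: same light/heavy Hamming-weight split with $r=n^{2/3}$, same use of the Chebyshev-based approximate-degree bound on low-weight inputs, same guess-a-heavy-negative-and-delete-coordinates recursion of depth $n/r$, and same $\Omega(\eps)^{n/r}=2^{-O(n^{1/3}\log(1/\eps))}$ success-probability accounting. Your case split is a bit different in the bookkeeping --- you split on $q=\Pr[f=0\mid \mathrm{heavy}]$ and argue directly that the hypothesis ``$1$ on heavy, $L_1$-regression on light'' loses at most $q$ to $f$ on the heavy part (which is a clean observation that lets you avoid the paper's random-constant trick on the opposite side of the split), whereas the paper proves a three-way structural lemma (label imbalance on heavy, advantage of $f$ on light, or $\ge \eps/4$ heavy-negative mass), and tests two candidate hypotheses $h_1,h_2$ at every level before recursing --- but these are equivalent in substance; and you correctly note that the unknown thresholds $p,q$ have to be handled by trying all branches and validating on held-out data via a VC/uniform-convergence argument, which is exactly what the paper does.
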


\noindent Note that by repeating the algorithm of 
Theorem~\ref{thm:main-weak-non-sq} $2^{O(n^{1/3}\log (1/\eps))}$ times 
and testing the empirical error of the output hypothesis each time, 
we get a weak agnostic learner that succeeds with at least constant probability. We show how to apply standard boosting tools,  
in order to obtain Theorem~\ref{thm:main-intro-informal}, in Appendix~\ref{app:exact-nonsq}.

In the body of this section, we proceed to describe our weak 
agnostic learner, thereby proving Theorem~\ref{thm:main-weak-non-sq}.
Let $f_S(\bx)=\bigvee_{i\in S}\bx_i$ be an arbitrary 
monotone disjunction with optimal loss, 
i.e., $\pr_{(\bx,y)\sim D}[f_S(\bx)\neq y] = \opt$. 
We set the radius parameter $r \eqdef n^{2/3}$ 
and partition the domain into two sets:  
$X_\light\eqdef\{\bx\in \{0, 1\}^n\mid W(\bx)\leq r\}$, 
the set of 
of ``low'' Hamming weight strings; 
and  $X_\heavy\eqdef\{\bx\in \{0, 1\}^n\mid W(\bx)> r\}$,  
the set of ``high'' Hamming weight strings.

Since points in $X_\light$ have Hamming weight at most 
$r$, one can leverage the properties of Chebyshev polynomials 
to construct $\eps$-approximate polynomials 
for monotone disjunctions on $X_\light$ of degree
$O(r^{1/2}\log(1/\eps))=O(n^{1/3}\log(1/\eps))$. 
This is shown in the lemma below.

\begin{lemma}[Approximate Degree on Hamming weight $\leq r$ Strings] \label{lem:approx-degree-coordinate} 
    Let $X\subseteq \{0, 1\}^n$, $I\subseteq [n]$, 
    and $\C$ be the concept class containing 
    all monotone disjunctions of the form 
    $f_S(\bx)=\bigvee_{i\in S}\bx_i$ for $S\subseteq I$ 
    and the constant function $f(\bx) \equiv 1$.
    Let $r \eqdef \max_{\bx\in X}W_I(\bx)$. Then 
    \[
    \deg_{\eps,X}(\C)=\begin{cases}
			O(r^{1/2}(1-2\eps)^{1/2}),  & \;\eps\in [1/4,1/2)\; ;\\
            O(r^{1/2}\log(1/\eps)),    & \;\eps\in (0,1/4)\; .
		 \end{cases}
    \]
\end{lemma}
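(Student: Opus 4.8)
The plan is to reduce the claim to a statement about approximating a single OR function on a bounded Hamming-weight domain, then invoke the classical Chebyshev-polynomial construction. First I would observe that by a symmetrization/averaging argument it suffices to approximate each individual $f_S$; since a disjunction $f_S(\bx)=\bigvee_{i\in S}\bx_i$ evaluated on inputs $\bx\in X$ depends only on the value $W_S(\bx)=\sum_{i\in S}\bx_i$, and this quantity is a nonnegative integer that is at most $\min(|S|, r)$ (because $W_S(\bx)\le W_I(\bx)\le r$ for $S\subseteq I$), the function $f_S$ is really a univariate threshold function of $m\eqdef W_S(\bx)$ on the integer range $\{0,1,\dots,r\}$: it equals $0$ when $m=0$ and $1$ when $m\ge 1$. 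So if I can find a univariate real polynomial $q$ of degree $d$ with $|q(0)|\le\eps$ and $|q(m)-1|\le\eps$ for all integers $m\in\{1,\dots,r\}$, then $p(\bx)\eqdef q\!\left(\sum_{i\in S}\bx_i\right)$ is a degree-$d$ polynomial on $\R^n$ that $\eps$-approximates $f_S$ on $X$. The constant function $f\equiv 1$ is approximated trivially by $p\equiv 1$ of degree $0$.

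Next I would construct the univariate polynomial $q$. This is exactly the standard ``OR approximation'' problem solved via (shifted, scaled) Chebyshev polynomials, as in \cite{NisanSzegedy:94, Paturi:92}: there is a degree-$O(\sqrt{r}\log(1/\eps))$ polynomial that is within $\eps$ of $1$ on $\{1,\dots,r\}$ and within $\eps$ of $0$ at the point $0$, which gives the $\eps\in(0,1/4)$ branch. For the $\eps\in[1/4,1/2)$ branch, one does not need the logarithmic amplification factor: a single appropriately shifted Chebyshev polynomial $T_k$ of degree $k$, rescaled so that the interval $[1,r]$ maps into $[1-\eps,1+\eps]$ while the point $0$ maps to a location where $|T_k|\le\eps$, achieves the bound with $k=O(\sqrt{r}\,(1-2\eps)^{1/2})$. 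The key quantitative input is that $T_k$ grows like roughly $(1+\Theta(\delta))^k$ once its argument is $1+\delta$ outside $[-1,1]$, so to move from value $\Theta(\eps)$ (near the interval) to value $\ge 1-\eps$ over a relative gap of order $1/r$ one needs degree $\Theta(\sqrt{r})$, and the extra factor $(1-2\eps)^{1/2}$ comes from tracking how the target accuracy $\eps$ close to $1/2$ relaxes the required gap; I would cite \Cref{fct:l1-regression}'s underlying polynomial-approximation literature or reprove the Chebyshev estimate directly.

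Then I would assemble the pieces: I take $d\eqdef O(r^{1/2}\log(1/\eps))$ (resp.\ $O(r^{1/2}(1-2\eps)^{1/2})$), set $p_S\eqdef q(\sum_{i\in S}\bx_i)$ for each $S\subseteq I$ and $p\equiv 1$ for the constant concept, and note every such $p$ has degree at most $d$ and $\eps$-approximates the corresponding concept on all of $X$. Hence $\deg_{\eps,X}(\C)\le d$, which is the asserted bound. I would remark that uniformity of the degree bound across all $f_S\in\C$ is automatic because the construction of $q$ depends only on $r$ and $\eps$, not on $S$.

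The main obstacle I expect is the precise Chebyshev estimate giving the $(1-2\eps)^{1/2}$ dependence in the high-error regime and the $\log(1/\eps)$ dependence in the low-error regime with the right constants — i.e.\ nailing the tradeoff between the degree, the relative width $1/r$ of the ``on'' interval relative to the ``off'' point, and the accuracy $\eps$. The combinatorial reduction (from multivariate $f_S$ on $X$ to univariate $q$ on $\{0,\dots,r\}$) is routine once one notices the dependence on $W_S(\bx)$ alone; the real content is the sharp univariate approximation bound, which I would either extract from \cite{NisanSzegedy:94,Paturi:92} or prove by explicitly differentiating/estimating a shifted-and-scaled Chebyshev polynomial.
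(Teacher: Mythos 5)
Your proposal is correct and takes essentially the same approach as the paper: you reduce $f_S$ to a univariate function of $W_S(\bx)\in\{0,\dots,r\}$ and invoke the shifted/scaled Chebyshev construction, with the paper making explicit the two details you flag as the ``main obstacle''---in the $\eps\in[1/4,1/2)$ regime it chooses $d=\lceil 2r^{1/2}(1-2\eps)^{1/2}\rceil$ so that the derivative bound $T_d'(1)=d^2$ forces $T_d(\tfrac{r}{r-1})\ge 1+4(1-2\eps)$, and in the $\eps\in(0,1/4)$ regime it amplifies a degree-$O(\sqrt r)$ Chebyshev approximation by raising it to the power $c\log(1/\eps)$.
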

\begin{proof}
    First notice that the constant function 
    $f(\bx) \equiv 1$ can be approximated by a polynomial of degree-$0$ with $0$ error.  
    Let $f_S(\bx)=\bigvee_{i\in S} \bx_i$ be the target monotone 
    disjunction to be approximated by a polynomial.
    Note that $f_S(\bx)=1$ if $W_S(\bx)>0$; and $0$ otherwise.
    We leverage the following standard fact about Chebyshev polynomials (see, e.g.,~\cite{Cheney:66, KS01}).
    \begin{fact} \label{fct:chebyshev-derivative}
        Let $T_d:\R\to \R$ be the degree-$d$ Chebyshev polynomial.
        Then $T_d$ satisfies the following: 
        \begin{enumerate}
            \item $|T_d(t)\leq 1|$ for $|t|\leq 1$ with $T_d(1)=1$; and
            \item $T_d'(t)\ge d^2$ for $t>1$ with $T_d'(1)= d^2$. 
        \end{enumerate}      
    \end{fact}
    
\noindent For the case that $\eps\in [1/4,1/2)$, we construct the 
    approximate polynomial as follows.
    Firstly, we take the univariate polynomial 
    $p_1(t)= T_d\left (\frac{r-t}{r-1}\right )$, where $d=\lceil 2r^{1/2}(1-2\eps)^{1/2}\rceil$.
    Notice that $p_1(0)\ge 1+4(1-2\eps)$ and $p_1(t)\in [-1,1]$ 
    for any $t\in [1,r]$ from \Cref{fct:chebyshev-derivative}. 
    We then rescale $p_1$ and take the Hamming weight of $\bx$ 
    on $S$ as input. Namely, we define the multivariate polynomial 
    $p(\bx)=-(1-\eps)p_1(W_S(\bx))/p_1(0)+1$.
    Notice that for $W_S(\bx)=0$, $p(\bx)=\eps$.
    For $W_S(\bx)\in [1,r]$, we have 
    \[|(1-\eps)p_1(W_S(\bx))/p_1(0)|\le \frac{1-\eps}{1+4(1-2\eps)}\le \frac{\eps+(1-2\eps)}{1+4(1-2\eps)}\leq \max(\eps,1/4)=\eps\;.\]
    Therefore,  for $W_S(\bx)\in [1,r]$, we get 
    $p(\bx)\in [1-\eps,1+\eps]$.
    Furthermore, the degree of the polynomial $p$ is 
    $O(r^{1/2}(1-2\eps)^{1/2})$.
    
    For the case that $\eps\in (0,1/4)$, we construct the approximate polynomial $p$ similarly.
    Firstly, we take the univariate polynomial 
    $p_1(t)=T_d\left (\frac{r-t}{r-1}\right )/2$, 
    where $d=\lceil 2r^{1/2}\rceil$.
    Notice that $p_1(0)\ge 1$ and $p_1(t)\in [-1/2,1/2]$ 
    for any $t\in [1,r]$ from \Cref{fct:chebyshev-derivative}.
    Then we take $p_2(t)=p_1(t)^{c\log (1/\eps)}$, for a sufficiently 
    large constant $c$, which has $p_2(0)\ge 1$ and $p_2(t)\in [-\eps,\eps]$ for any $t\in [1,r]$.
    The multivariate polynomial defined as 
    $p(\bx)=-p_2\left (W_S(\bx)\right )/p_2(0)+1$ 
    has the desired property.
    The degree of the polynomial $p$ is $O(r^{1/2}\log 1/\eps)$.
    This completes the proof.
\end{proof}

Given the definitions of $X_\heavy$ and $X_\light$, 
since $\opt\leq 1/2-\eps$, the target disjunction $f_S$ 
is $\Omega(\eps)$-correlated with the labels 
on either $X_\light$ or $X_\heavy$. That is, we have that either 
$\E_{(\bx,y)\sim D}[f_S(\bx)(2y-1)\Ind(\bx\in X_\light)]=\Omega(\eps)$ or 
$\E_{(\bx,y)\sim D}[f_S(\bx)(2y-1)\Ind(\bx\in X_\heavy)]=\Omega(\eps)$. 
Our algorithm proceeds by one of the following cases:
\begin{enumerate}[leftmargin=*]
    \item \label{itm:sample-case-1}
    Suppose that $f_S$ is $\Omega(\eps)$-correlated 
    with the labels on $X_\light$.
    Then, since all the points in $X_\light$ have Hamming weight at most $r=n^{2/3}$, 
    the $\eps$-approximate degree of monotone disjunctions on $X_\light$ 
    is at most $O(r^{1/2}\log(1/\eps))=O(n^{1/3}\log(1/\eps))$ 
    by  \Cref{lem:approx-degree-coordinate}.
    Therefore, we can simply apply the standard $L_1$-polynomial 
    regression with degree-$O(n^{1/3}\log(1/\eps))$ to get a hypothesis with error $1/2-\Omega(\eps)$ (see \Cref{fct:l1-regression}).

    \item \label{itm:sample-case-2}
    Suppose that $f_S$ is $\Omega(\eps)$-correlated 
    with the labels on $X_\heavy$ and the labels are not balanced on $X_\heavy$, i.e., $|\pr_{(\bx,y)\sim D}[y=1\mid \bx\in X_\heavy]-1/2|\geq c\eps$,   
    for some constant $c>0$.  
    In this case, the constant classifier $h(\bx)\equiv 0$ or $h(\bx)\equiv 1$ works as a weak agnostic learner on $X_\heavy$.
    Then we can find some constant $c'\in \{0,1\}$ such that $\pr_{(\bx,y)\sim D}[y\neq c'\mid \bx\in X_{\light}]\leq 1/2$, and the hypothesis $h(\bx)= 1$ if $\bx\in X_\heavy$; $h(\bx)= c'$ otherwise, should suffice for our weak agnostic learner.
    
    \item \label{itm:sample-case-3}
    If neither of the above two cases hold,
    then we use rejection sampling to sample $\bx_{\guess}\sim D$ 
    conditioned on $\bx_{\guess}\in X_\heavy$.
    Since $f_S$ has $1/2-\Omega(\eps)$ error and the labels are approximately 
    balanced in $X_\heavy$, with probability $\Omega(\eps)$ 
    we have that $f_S(\bx_\guess)=0$. 
    Suppose that we correctly guess an $\bx_\guess$ such that $f_S(\bx_\guess)=0$.
    Then by removing any coordinate $i$ such that $(\bx_\guess)_i=1$, we remove at least $r=n^{2/3}$ coordinates from future consideration, since these coordinates cannot be in $S$. 
    Then the algorithm proceeds by recursing on the remaining coordinates.
\end{enumerate}
Notice that if neither \Cref{itm:sample-case-1} nor \Cref{itm:sample-case-2} hold, then \Cref{itm:sample-case-3} must hold; therefore, the algorithm can always make progress.
It is easy to see that the depth of the recursion is $O(n^{1/3})$ 
and the overall success probability 
is $2^{-\Tilde{O}(n^{1/3}\log (1/\eps))}$. 

\smallskip

A detailed pseudocode is given in \Cref{alg:weak-learner-non-sq}.
We are now ready to prove \Cref{thm:main-weak-non-sq}.

\begin{algorithm}
    \caption{Weak Agnostic Learning of Monotone Disjunctions}
    \label{alg:weak-learner-non-sq}
        \textbf{Input:} $\eps\in (0,1/2)$ and sample access to a  
        distribution $D$ of $(\bx,y)$ supported on 
        $\{0, 1\}^{n}\times \{0, 1\}$, such that there is a monotone disjunction $f_S(\bx)=\bigvee_{i\in S} \bx_i$ 
        with $\pr_{(\bx,y)\sim D}[f_S(\bx)\neq y]\leq 1/2-\eps$.
        
        \textbf{Output:} With $2^{-O(n^{1/3}\log(1/\eps))}$ probability, 
        a hypothesis $h:\{0, 1\}^{n}\to \{0, 1\}$ such that 
        $\pr_{(\bx,y)\sim D}[h(\bx)\neq y]\leq 1/2-\eps/100$.
        
        \vspace{0.3cm}

        \begin{algorithmic} [1]
            \State Set $r \leftarrow n^{2/3}$, $T \leftarrow \lceil n/r\rceil+1$ 
            and initialize $I_0\leftarrow [n]$.\label{line:non-SQ-initialization}
            \Statex $\triangleright$ $I$ keeps track of the remaining coordinates for consideration. 

            \State Let $P$ be a set of $2^{\tilde{O}(n^{1/3}\log(1/\eps))}$ i.i.d.\ samples from $D$ (with sufficiently large implied constant).
            
            \For{$t=\{0,\cdots, T\}$} \label{line:non-SQ-inner-loop}
                
                \State \algparbox{Define $X_\light\eqdef \{\bx\in\{0, 1\}^n \mid W_{I_t}(\bx)\leq r\}$, 
                $X_\heavy\eqdef \{\bx\in\{0, 1\}^n \mid W_{I_t}(\bx)> r\}$ and partition $P$ as $P_\light\eqdef \{(\bx,y)\in P \mid \bx\in X_\light\}$ 
                and $P_\heavy\eqdef \{(\bx,y)\in P \mid \bx\in X_\light\}$. }
                
                \State 
                \algparbox{ Apply $L_1$-regression on $u(P_\light)$ 
                for degree-$O\left (r^{1/2}\log (1/\eps)\right )$, which succeeds with at least constant probability}. Let $h_1'$ be the output hypothesis. \label{line:l1-regression}

                \State \algparbox{Sample $c'\sim u(\{0,1\})$ and define the hypothesis $h_1:\{0,1\}^n\to \{0,1\}$ as $h_1(\bx)=h_1'(\bx)$ if $\bx\in X_\light$; and $h_1(\bx)=c'$ otherwise.}

                \State \algparbox{Sample $c'\sim u(\{0,1\})$ and define the hypothesis $h_2:\{0,1\}^n\to \{0,1\}$ as $h_2(\bx)=1$ if $\bx\in X_\heavy$; and $h_2(\bx)=c'$ otherwise.}

                \State \algparbox{Let $\hat{\err}_i$ be
                $\pr_{(\bx,y)\sim u(P)}[h_i(\bx)\neq y]$ for $i\in \{1,2\}$.\label{line:non-sq-empirical-estimation}}                
                \If{$\hat{\err}_i\leq 1/2-\eps/10$ for any $i\in \{1,2\}$} \label{line:non-sq-success}
                \State {\bf return} the $h_i$ that satisfies the above condition.
                \EndIf
                \State \algparbox{
                Sample a random $(\bx_\guess,y)\sim u(P_\heavy)$.
                \label{line:non-SQ-sample-negative} } 
                \State Update $I_{t+1}\leftarrow I_t\backslash \{i\in [n]\mid (\bx_\guess)_i=1\}$.\label{line:non-SQ-remove-coordinates} 
                \EndFor
        \end{algorithmic}
\end{algorithm}

\begin{proof} [Proof of \Cref{thm:main-weak-non-sq}]    
    We first analyze the sample and computational complexity.
    Since the algorithm only uses samples in $P$, which contains $2^{\tilde{O}(n^{1/3}\log(1/\eps))}$ i.i.d.\ samples from $D$,
    the sample complexity of the algorithm is at most $2^{\tilde{O}(n^{1/3}\log(1/\eps))}$.
    Furthermore, the computational complexity of the algorithm is at most $T\poly\left (|P|,d^{O\left (r^{1/2}\log (1/\eps)\right )}\right )=2^{\tilde{O}(n^{1/3}\log(1/\eps))}$.

    We then prove the correctness of our algorithm. 
    We first show that the algorithm, with probability at least 
    $2^{-O(n^{1/3}\log(1/\eps))}$ returns a hypothesis $h:\{0,1\}^n\to \{0,1\}$ such that 
    ${\pr_{(\bx,y)\sim u(P)}[h(\bx)\neq y]}\\\leq 1/2-\eps/10$ over the sample set $P$.
    Since we only need to show that the algorithm succeeds with $2^{-O(n^{1/3}\log(1/\eps))}$ probability,
    we assume that for all the $h_1$ and $h_2$, the algorithm always chooses the $c'\in \{0,1\}$ that minimize 
    $\pr_{(\bx,y)\sim u(P)}[h_1(\bx)\neq y]$ and $\pr_{(\bx,y)\sim u(P)}[h_2(\bx)\neq y]$.
    Given the algorithm only runs for $T$ iterations, this happens with probability at least 
    $2^{-O(T)}=2^{-O(n^{1/3})}$.     
    Let $f_S(\bx)=\bigvee_{i\in S} \bx_i$ 
    be an arbitrary optimal monotone disjunction on $D$.
    From the assumption of the algorithm,
    we have that 
    $\pr_{(\bx,y)\sim D}[f_S(\bx)\neq y]\leq 1/2-\eps$.
    Then using \Cref{fct:vc-inequality} and the fact that the VC-dimension of the disjunctions is $n$, we have that
    $\pr_{(\bx,y)\sim u(P)}[f_S(\bx)\neq y]\leq 1/2-\eps/2$.
    We first show the following structural lemma.
    \begin{lemma} \label{lem:non-SQ-helper}
        Suppose that $\pr_{(\bx,y)\sim u(P)}[f_S(\bx)\neq y]\leq 1/2-\eps/2$, then for any iteration of \Cref{alg:weak-learner-non-sq}, at least one of the following holds:
        \begin{enumerate}
            \item $\pr_{(\bx,y)\sim u(P)}[y=1\land \bx\in X_\heavy]-\pr_{(\bx,y)\sim u(P)}[y=0\land \bx\in X_\heavy]\geq \eps/4$, \label{prop:constant-inbalance}
            \item $\pr_{(\bx,y)\sim u(P)}[f_S(\bx)=y\land \bx\in X_\light]-\pr_{(\bx,y)\sim u(P)}[f_S(\bx)\neq y\land \bx\in X_\light]\ge \eps/4$, or \label{prop:adv-light}
            \item $\pr_{(\bx,y)\sim u(P)}[f_{S}(\bx)=0\land \bx\in X_\heavy]\ge \eps/4$.
        \end{enumerate}
    \end{lemma}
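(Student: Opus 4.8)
The plan is to prove \Cref{lem:non-SQ-helper} by a short, elementary case analysis on the empirical measure $u(P)$, using nothing beyond the stated hypothesis $\pr_{(\bx,y)\sim u(P)}[f_S(\bx)\neq y]\leq 1/2-\eps/2$; in fact the argument will work for an arbitrary partition of $\{0,1\}^n$ into the two pieces $X_\light$ and $X_\heavy$, so the Hamming-weight structure plays no role in this lemma, and in particular it applies verbatim in every iteration. First I would rephrase the hypothesis as a lower bound on the ``advantage'' $A\eqdef \pr_{u(P)}[f_S(\bx)=y]-\pr_{u(P)}[f_S(\bx)\neq y]\geq \eps$, and split it as $A=A_\light+A_\heavy$ according to whether $\bx\in X_\light$ or $\bx\in X_\heavy$, where $A_\light,A_\heavy$ are the corresponding restricted contributions. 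Then either $A_\light\geq \eps/4$, or $A_\light<\eps/4$, in which case $A_\heavy> 3\eps/4$.

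If $A_\light\geq \eps/4$, this is literally the second property in the statement, so that branch is done. In the remaining branch I have $A_\heavy>3\eps/4$, and I would then further partition $X_\heavy$ by the value of $f_S$: let $a,b,c,d$ be the $u(P)$-masses of $\{f_S(\bx)=1,\,y=1,\,\bx\in X_\heavy\}$, $\{f_S(\bx)=1,\,y=0,\,\bx\in X_\heavy\}$, $\{f_S(\bx)=0,\,y=1,\,\bx\in X_\heavy\}$, $\{f_S(\bx)=0,\,y=0,\,\bx\in X_\heavy\}$, respectively. On heavy points $f_S(\bx)=y$ holds exactly on the $a,d$ events and $f_S(\bx)\neq y$ exactly on the $b,c$ events, so $A_\heavy=(a-b)+(d-c)$; the first property asks for $(a+c)-(b+d)\geq \eps/4$ and the third for $c+d\geq \eps/4$. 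The identity $(a+c)-(b+d)=A_\heavy-2(d-c)$ together with $d-c\leq d\leq c+d$ (valid since $c\geq 0$) yields $(a+c)-(b+d)\geq A_\heavy-2(c+d)$. Hence, if the third property fails, i.e.\ $c+d<\eps/4$, then $(a+c)-(b+d)> 3\eps/4-\eps/2=\eps/4$, so the first property holds.

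Combining the two branches covers every case, which establishes the lemma. I do not expect a genuine obstacle: the whole argument is bookkeeping with four nonnegative reals. The only point requiring minor care is keeping the constants consistent, which is exactly why the statement is generous --- the hypothesis only supplies $\eps/2$ while each conclusion needs $\eps/4$, leaving enough slack to absorb the crude bound $d-c\leq c+d$ used in the last step.
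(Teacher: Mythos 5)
Your proof is correct and essentially follows the same route as the paper: both decompose the empirical advantage $A \geq \eps$ into $A_\light + A_\heavy$ according to the light/heavy partition, dispose of the light case immediately via Item 2, and then close the heavy case by relating $A_\heavy$ to the label imbalance and the mass where $f_S(\bx)=0$. The only difference is which direction the contrapositive is run in the heavy case: the paper assumes Items 1 and 2 both fail and derives Item 3 via an inclusion--exclusion lower bound on $\pr_{u(P)}[f_S(\bx)=0 \land \bx\in X_\heavy]$, whereas you assume Items 2 and 3 both fail and derive Item 1 via the four-way bookkeeping with $a,b,c,d$ and the identity $(a+c)-(b+d)=A_\heavy-2(d-c)$. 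Both arguments are sound and equally elementary; your version makes the constant-tracking somewhat more transparent, and you are right that the Hamming-weight structure of the partition plays no role in this particular lemma.
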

    \begin{proof}
        Notice that $\pr_{(\bx,y)\sim u(P)}[f_S(\bx)\neq y]\leq 1/2-\eps/2$ implies that 
        \[ {\pr_{(\bx,y)\sim u(P)}[f_S(\bx)= y]}-{\pr_{(\bx,y)\sim u(P)}[f_S(\bx)\neq y]}\ge \eps\; . \]
        Suppose that neither \Cref{prop:constant-inbalance} nor \Cref{prop:adv-light} hold.
        Since \Cref{prop:adv-light} does not hold and the above inequality, we get
        \[\pr_{(\bx,y)\sim u(P)}[f_S(\bx)=y\land \bx\in X_\heavy]-\pr_{(\bx,y)\sim u(P)}[f_S(\bx)\neq y\land \bx\in X_\heavy]\ge (3/4)\eps\; .\]        
        Combining this and the fact that \Cref{prop:constant-inbalance} does not hold, we get 
        \begin{align*}
            &\pr_{(\bx,y)\sim u(P)}[f_S(\bx)=0\land \bx\in X_\heavy]\ge\pr_{(\bx,y)\sim u(P)}[f_S(\bx)=y\land y=0\land \bx\in X_\heavy]\\
            \ge &\pr_{(\bx,y)\sim u(P)}[f_S(\bx)=y\land \bx\in X_\heavy]+\pr_{(\bx,y)\sim u(P)}[y=0\land \bx\in X_\heavy]-\pr_{(\bx,y)\sim u(P)}[\bx\in X_\heavy]\\
            \ge &\frac{1}{2}\left(\pr_{(\bx,y)\sim u(P)}[\bx\in X_\heavy]+(3/4)\eps\right)
            +\frac{1}{2}\left(\pr_{(\bx,y)\sim u(P)}[\bx\in X_\heavy]-\eps/4 \right)
            -\pr_{(\bx,y)\sim u(P)}[\bx\in X_\heavy]\\
            = &\eps/4\;. 
        \end{align*}        
    \end{proof}
    Notice that if the algorithm terminates before $T$ iterations and returns a hypothesis, it must return an $h$ such that ${\pr_{(\bx,y)\sim u(P)}[h(\bx)\neq y]}\leq 1/2-\eps/10$.
    Furthermore, suppose $P_\heavy$ is always nonempty except in the iteration it terminates. 
    Then Line~\ref{line:non-SQ-remove-coordinates} must remove at least $r$ coordinates from $S$ in each iteration (from the definition of $P_\heavy$), and the algorithm must terminate in $\lceil n/r\rceil\leq T$ iterations.
    Therefore, it suffices for us to show that $P_\heavy$ is always nonempty except in the iteration it terminates.
    
    We first argue that, with 
    at least $2^{-O(n^{1/3}\log(1/\eps))}$ probability, 
    $S\subseteq I_t$ for all iterations.
        For iteration $t$, suppose that $S\subseteq I_t$ and the algorithm did not terminate in iteration $t$.
        Then the ``if'' condition in Line~\ref{line:non-sq-success} is not satisfied for both $h_1$ and $h_2$.
        Since $\pr_{(\bx,y)\sim u(P)}[h_1(\bx)\neq y]> 1/2-\eps/10$,
        given that the algorithm always chooses the $c'$ that minimizes $\pr_{(\bx,y)\sim u(P)}[h_1(\bx)\neq y]$, we have that
        \[\pr_{(\bx,y)\sim u(P)}[h_1'(\bx)=y \land \bx\in X_\light]-
        \pr_{(\bx,y)\sim u(P)}[h_1'(\bx)\neq y \land \bx\in X_\light]\leq \eps/5\; .\]
        From \Cref{fct:l1-regression}, \Cref{lem:approx-degree-coordinate} and $S\subseteq I_t$,
        we have that $L_1$ regression learns a hypothesis $h_1'$ such that 
        \[\pr_{(\bx,y)\sim u(P_\light)}[h_1'(\bx)\neq y]\leq \pr_{(\bx,y)\sim u(P_\light)}[f_S(\bx)\neq y]+\eps/100\; .\]
        This implies that 
        \[\pr_{(\bx,y)\sim u(P)}[f_S(\bx)=y \land \bx\in X_\light]-
        \pr_{(\bx,y)\sim u(P)}[f_S(\bx)\neq y \land \bx\in X_\light]< \eps/4\; ,\]
        and therefore \Cref{prop:adv-light} in \Cref{lem:non-SQ-helper} does not hold.
        Then similarly, since 
        $\pr_{(\bx,y)\sim u(P)}[h_2(\bx)\neq y]> 1/2-\eps/10$, 
        it follows that 
        \[\pr_{(\bx,y)\sim u(P)}[y=1 \land \bx\in X_\heavy]-
        \pr_{(\bx,y)\sim u(P)}[y=0 \land \bx\in X_\heavy]< \eps/4\; ,\]
        and therefore \Cref{prop:constant-inbalance} in \Cref{lem:non-SQ-helper} does not hold.
        Combining the above arguments and \Cref{lem:non-SQ-helper}, we get that 
        $\pr_{(\bx,y)\sim u(P)}[f_{S}(\bx)=0\land \bx\in X_\heavy]\ge \eps/4$ (this also implies that $P_\heavy$ is nonempty).
        Therefore, with probability at least $\Omega(\eps)$, the algorithm always samples an $\bx_\guess$ such that $f_S(\bx_\guess)=0$ in Line~\ref{line:non-SQ-remove-coordinates}, and any coordinate $i$ removed from $I_t$ in Line~\ref{line:non-SQ-remove-coordinates}
        cannot be in $S$.
        Therefore, we have $S\subseteq I_{t+1}$ with probability at least $\Omega(\eps)$.

        Suppose the algorithm runs for $T'$ iterations.
        We argue that with 
        at least $2^{-O(n^{1/3}\log(1/\eps))}$ probability, 
        $S\subseteq I_t$ for all iteration $t\in [T']$.
        Notice that we initialized $I_0=[n]$, 
        which satisfies $S\subseteq I_0$.
        For iteration $t$, given that $S\subseteq I_t$, 
        if the algorithm does not terminate, 
        then with probability at least $\Omega(\eps)$,
        $S\subseteq I_{t+1}$ (as argued in the paragraph above).
        Since there are at most $T$ iterations, 
        with probability at least 
        $\Omega(\eps)^T=2^{-O(n^{1/3}\log(1/\eps))}$, 
        we will have $S\subseteq I_t$ for all iterations $t\in [T']$.
        This also implies that $P_\heavy$ is always nonempty 
        except in the last iteration (as we argued above).
        Therefore, the algorithm must terminate in at most $T$ iterations 
        and return a hypothesis $h$ such that 
        $\pr_{(\bx,y)\sim u(P)}[h(\bx)\neq y]\leq 1/2-\eps/10$ 
        over the sample set $P$.
 
     It only remains for us to show that any hypothesis $h$ that the algorithm returns such that $\pr_{(\bx,y)\sim u(P)}[h(\bx)\neq y]\leq 1/2-\eps/10$ must also satisfies $\pr_{(\bx,y)\sim D}[h(\bx)\neq y]\leq 1/2-\eps/100$.
    To do so, notice that both $h_1$ and $h_2$ are always 
    an intersection of two halfspaces. Therefore, the VC-dimension 
    of the class of functions of all possible hypotheses 
    the algorithm can return is $O(n)$.
    By \Cref{fct:vc-inequality}, we have that 
    \[\pr_{(\bx,y)\sim D}[h(\bx)\neq y]\leq \pr_{(\bx,y)\sim u(P)}[h(\bx)\neq y]+\eps/100\leq 1/2-\eps/100\; .\]
    This completes the proof.
    \end{proof}

\section{Statistical Query Agnostic Learner} \label{sec:sq-ub}
Here we provide a Statistical Query 
version of the previous algorithm with qualitatively the same complexity. 
Interestingly, this algorithm can be implemented in the SQ model, 
but not in the CSQ model.
Furthermore, it outperforms the CSQ lower bound of~\cite{GKK20} 
(see Appendix~\ref{sec:csq-lb}), 
which implies a strong separation between the power of SQ 
and CSQ algorithms in the distribution-free agnostic model.
For concreteness, we include the basics on the SQ model.

\paragraph{Statistical Query (SQ) Model}
The class of SQ algorithms~\cite{Kearns:98} is a family 
of algorithms that are allowed
to query expectations of bounded functions on the underlying distribution 
through an (SQ) oracle
rather than directly access
samples.

\begin{definition}[SQ Model] \label{def:sq}
Let $D$ be a distribution on $X\times \{0,1\}$. 
A \emph{statistical query} is a bounded function 
$q:X\times \{0,1\}\rightarrow[-1,1]$,
and we define $\stat_D(\tau)$ to be the oracle 
that given any such query $q$, outputs a value $v\in [-1,1]$ 
such that $|v-\E_{(\bx,y)\sim D}[q(\bx,y)]|\leq\tau$,
where $\tau>0$ is the \emph{tolerance} parameter of the query.
An \emph{SQ algorithm} is an algorithm 
whose objective is to learn some information about an unknown 
distribution $D$ by making adaptive calls to the $\stat_D(\tau)$ 
oracle.
\end{definition}

\begin{theorem} \label{thm:main-strong-sq}
    Let $D$ be an unknown joint distribution of $(\bx,y)$ supported on $\{0, 1\}^{n}\times \{0, 1\}$ and $\eps\in (0,1)$. 
    There is an algorithm that makes at most $q$ queries to $\stat_D(\tau)$, has $T$ computational complexity, and distribution-free agnostically learns disjunctions to additive error $\eps$, where $\max(q,1/\tau,T)=2^{\Tilde{O}(n^{1/3}\log(1/\eps))}$.
\end{theorem}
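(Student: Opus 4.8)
The plan is to mirror the sample-based algorithm of \Cref{thm:main-weak-non-sq}, replacing each sampling primitive with a statistical query, and then invoke agnostic boosting. As in the sample-based case, it suffices to build a weak agnostic SQ learner for \emph{monotone} disjunctions: an algorithm that, under the promise $\opt \le 1/2-\eps$, outputs with probability $2^{-O(n^{1/3}\log(1/\eps))}$ a hypothesis of error $1/2-\Omega(\eps)$, using $q$ queries to $\stat_D(\tau)$ and $T$ time with $\max(q,1/\tau,T)=2^{\tilde O(n^{1/3}\log(1/\eps))}$. Repeating this $2^{O(n^{1/3}\log(1/\eps))}$ times (testing each candidate's error with a single statistical query of tolerance $\Omega(\eps)$) boosts the success probability to a constant, and standard agnostic boosting in the SQ model (e.g.\ the SQ-implementable boosting of~\cite{KKMS08, DFTWW15}) upgrades the weak learner to a strong one with only polynomial overhead, yielding \Cref{thm:main-strong-sq}. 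Set $r \eqdef n^{2/3}$ and $T \eqdef \lceil n/r\rceil+1$, maintaining a set $I_t$ of ``live'' coordinates initialized to $[n]$.

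The key change from the sample-based version is how Case~3 is handled: instead of guessing a specific heavy-weight negative example (which is not SQ-implementable), we work with \emph{heavy coordinates}. Call $i\in I_t$ heavy if $\Pr_{(\bx,y)\sim D}[\bx_i=1]\ge r/(2n)$; this is testable with a query of tolerance $\Theta(r/n)=\Theta(n^{-1/3})$, so we can compute the heavy set $H_t\subseteq I_t$. In each iteration we first run the two ``termination'' tests, both SQ-implementable: (i) the constant-hypothesis test on $X_\heavy=\{\bx: W_{I_t}(\bx)>r\}$ — query $\Pr[y=1\wedge \bx\in X_\heavy]-\Pr[y=0\wedge \bx\in X_\heavy]$ to tolerance $\Theta(\eps)$, and if imbalanced output the corresponding constant-on-heavy hypothesis; and (ii) $L_1$-polynomial regression of degree $O(r^{1/2}\log(1/\eps))=O(n^{1/3}\log(1/\eps))$ restricted to $X_\light$, which by \Cref{fct:l1-regression} is SQ-implementable with complexity $n^{O(n^{1/3}\log(1/\eps))}=2^{\tilde O(n^{1/3}\log(1/\eps))}$ and tolerance $1/\poly$ thereof. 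If neither test certifies a good hypothesis, then (by the analogue of \Cref{lem:non-SQ-helper}) $f_S$ must be $\Omega(\eps)$-correlated with the labels on $X_\heavy$ while the labels there are near-balanced, which forces $\Pr[\bx\in X_\heavy]=\Omega(\eps)$; since every $\bx\in X_\heavy$ has $W_{I_t}(\bx)>r$, an averaging argument gives some coordinate in $I_t$ with $\Pr[\bx_i=1]\ge r/(2n)$, i.e.\ $H_t\ne\emptyset$. We then branch: with probability $1/2$ we ``guess Case~1'' (no heavy coordinate lies in $S$) and remove all of $H_t$ from $I_t$; otherwise we ``guess Case~2'', pick a uniformly random $i\in H_t$, bet that $i\in S$ — so $f_S(\bx)=1$ whenever $\bx_i=1$, an $\ge r/(2n)$-fraction of inputs on which we now know the label — and remove $i$ from $I_t$ along with all of its guaranteed-$1$ inputs. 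Either branch shrinks the number of uncertain inputs, and after $O(n/r)=O(n^{1/3})$ correct guesses we have pinned the target on a constant fraction of the domain and land in the low-weight regime where $L_1$-regression finishes; each guess is correct with probability $\ge 1/(2|H_t|)\ge 1/(2n)$, so the overall success probability is $(1/(2n))^{O(n^{1/3})}\cdot\Omega(\eps)^{O(n^{1/3})}=2^{-\tilde O(n^{1/3}\log(1/\eps))}$, matching the claimed bound.

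The correctness argument tracks the invariant $S\subseteq I_t$, exactly as in the proof of \Cref{thm:main-weak-non-sq}: conditioned on a lucky run, whenever we remove coordinates they are provably outside $S$ (in the Case~1 branch because we are guessing no heavy coordinate is relevant; in the Case~2 branch the removed-with-known-label inputs do not shrink $I_t$ improperly). When the recursion bottoms out, every remaining uncertain input has $I_t$-weight $\le r$, so by \Cref{lem:approx-degree-coordinate} the class of monotone disjunctions over $I_t$ has $\eps$-approximate degree $O(n^{1/3}\log(1/\eps))$ on that sub-domain, and \Cref{fct:l1-regression}'s SQ form delivers a hypothesis of error $\le \opt+O(\eps)$ there; gluing it with the ``constant on known inputs'' rule gives overall error $1/2-\Omega(\eps)$. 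The main obstacle — and the one point requiring care beyond transliteration — is the averaging step that converts ``$\Pr[\bx\in X_\heavy]=\Omega(\eps)$'' into ``some live coordinate is heavy with threshold $r/(2n)$'': one must check that the heavy-coordinate threshold $r/n$ is large enough for the test's tolerance yet small enough that such a coordinate is guaranteed to exist once an $\Omega(\eps)$-mass of inputs each place $>r$ ones among $|I_t|\le n$ live coordinates, and that removing $H_t$ (which can be as large as $2n^2/r\cdot\Omega(\eps)^{-1}$... in fact $|H_t|\le 2n/(r/n)=2n^2/r$, but only $n$ coordinates exist so $|H_t|\le n$) does not blow past the $O(n/r)$ iteration budget — here one uses that in the Case~1 branch \emph{all} heavy coordinates are purged at once, so that branch can occur at most $O(n/r)$ times just like Case~2. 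Everything else is a routine SQ simulation of the sample-based analysis, with sample averages replaced by $\stat_D(\tau)$ calls at tolerance $1/\poly$ of the relevant $\eps$- or $r/n$-scale quantities.
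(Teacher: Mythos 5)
Your high-level intuition—replace the step of sampling a heavy-weight negative example with guessing a \emph{heavy coordinate}, since that can be checked with a statistical query—is exactly the paper's idea. But your concrete construction diverges from the paper's and has gaps that prevent it from closing as written.

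The paper's SQ algorithm does not go through a weak learner plus agnostic boosting at all. It maintains a shrinking residual domain $U_t$ alongside $I_t$, conditions \emph{all} heaviness queries and $L_1$-regressions on $\bx\in U_t$, assembles the partial classifiers into a decision list, and terminates once $\Pr_D[\bx\in U_t]\le\eps/4$, directly yielding error $\opt+\eps$ (the only ``boosting'' is repeating the whole run $2^{\tilde O(n^{1/3}\log(1/\eps))}$ times and keeping the best empirical hypothesis). Your proposal instead keeps the weak-learner structure of Algorithm 1, partitions by $W_{I_t}$ into $X_\light/X_\heavy$, and does not track a residual domain. This is where things break.

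First, the averaging step does not give the threshold you state. From $\Pr[\bx\in X_\heavy]=\Omega(\eps)$ and $W_{I_t}(\bx)>r$ on $X_\heavy$ you get $\E[W_{I_t}(\bx)]=\Omega(\eps r)$, so there is a coordinate with $\Pr[\bx_i=1]\ge\Omega(\eps r/n)$, not $r/(2n)$. That $\eps$ factor is lost precisely because you are not conditioning: in the paper the analogous argument works with $\Pr[\cdot\mid\bx\in U_t]$ where, before termination, the conditional heavy mass is genuinely $\Omega(1)$, not $\Omega(\eps)$.

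Second, and more fundamentally, there is no progress measure. In Algorithm 1, a correct Case-3 guess removes $\ge r$ coordinates from $I_t$, so $n/r$ rounds suffice. Your Case~2 removes a single coordinate $i$ and asserts ``we remove its guaranteed-$1$ inputs,'' but without a $U_t$ there is nowhere to remove them from—the next iteration's $X_\heavy/X_\light$ split is still computed over the full domain using $W_{I_t}$, so the state is essentially unchanged. Your Case~1 purges $H_t$, but under your \emph{unconditional} definition of heavy, once $H_t$ is removed it can never refill, so Case~1 can effectively fire only once, and then the existence argument ``some live coordinate is heavy'' stops being available. The paper avoids all of this: conditioning on the shrinking $U_t$ makes fresh coordinates become heavy as mass is peeled away, and the termination criterion $\Pr[\bx\in U_t]\le\eps/4$ with the decision-list aggregation gives a clean additive-error bound.

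Finally, the success-probability bookkeeping $(1/(2n))^{O(n^{1/3})}\cdot\Omega(\eps)^{O(n^{1/3})}$ does not match your own description—the heavy-coordinate guess has no $\Omega(\eps)$ factor, and the paper's exponent is $T=\Theta(n\log(1/\eps)/r)=\tilde\Theta(n^{1/3}\log(1/\eps))$ coming from the number of iterations needed to drive $\Pr[U_t]$ below $\eps/4$, not from a per-round $\eps$-probability event. To make your route work you would have to introduce the residual domain $U_t$, condition all heaviness queries on $\bx\in U_t$, assemble a decision list, and terminate on residual mass—at which point you would have reconstructed the paper's Algorithm~2 and the weak-learner-plus-boosting wrapper becomes unnecessary.
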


It is worth noting that \Cref{thm:main-strong-sq} morally corresponds to a sample-based algorithm that has $2^{\Tilde{O}(n^{1/3}\log(1/\eps))}$ sample and computational complexity, as one can simulate the answers from the SQ oracle by empirical estimation of queries using fresh samples. 

The high-level intuition of the algorithm is the following.
As we have discussed in the previous section, it suffices to consider learning monotone disjunctions.
Let $f_S(\bx)=\bigvee_{i \in S} \bx_i$ be any optimal monotone disjunction.
We set the radius parameter $r=n^{2/3}$ and call a coordinate $i\in [n]$ heavy if $i\in S$ and $\E_{(\bx,y)\sim D}[\Ind(\bx_i=1)]\ge r/n$.
Given an input distribution $D$ of $(\bx,y)$, 
we will either (a) guess that some $i\in [n]$ is a heavy coordinate; 
or (b) guess that there is no heavy coordinate.
Suppose that with probability $1/2$ we sample $i\sim u([n])$
and guess that $i$ is heavy, and with probability $1/2$ 
we guess that there is no heavy coordinate. 
Then this guess is always correct with probability $\Omega(1/n)$. 
Suppose that our guess is correct.
\begin{enumerate}[leftmargin=*]
    \item Suppose we guessed that $i$ is heavy.
    Let $B=\{\bx\in\{0, 1\}^n\mid \bx_i=1\}$. Then we know that $f_S(\bx)=1$ for any $\bx\in B$, and we can remove any $\bx\in B$ 
    from the input distribution for further consideration.
    By the definition of heavy coordinates, the probability mass removed is $\pr_{(\bx,y)\sim D}[\bx\in B]\ge r/n$.
    
    \item Suppose we guessed that there is no heavy coordinate. 
    Then we can remove any coordinate $i$ such that 
    $\E_{(\bx,y)\sim D}[\Ind(\bx_i=1)]\ge r/n$, 
    since such a coordinate cannot be in $S$.
    Let $I$ be the set of remaining coordinates.
    Then the expected Hamming weight on $I$ satisfies 
    $\E_{(\bx,y)\sim D}[W_I(\bx)]\leq r$.
    Let $B=\{\bx\in\{0, 1\}^n\mid W_I(\bx)\leq 2r\}$.
    If we apply the degree-$O(r^{1/2}\log(1/\eps))$ 
    $L_1$-regression algorithm on $B$, 
    we will get a hypothesis $h$ such that the error of 
    $h$ on $B$ is at least as good as $f_S$ on $B$.
    Therefore, we can just label every $\bx\in B$ by $h(\bx)$ 
    and remove $B$ from the distribution for further consideration.
    Since $\E_{(\bx,y)\sim D}[W_I(\bx)]\leq r$ and 
    $B=\{\bx\in\{0, 1\}^n\mid W_I(\bx)\leq 2r\}$, 
    by Markov's inequality, 
    the mass removed is $\pr_{(\bx,y)\sim D}[\bx\in B]\ge 1/2$.
\end{enumerate}
Now let $D'$ be the new distribution of $(\bx,y)\sim D$ 
conditioned on $\bx\not \in B$ with the irrelevant coordinates removed.
We can repeat the above process on $D'$. 
Since each time we remove at least $r/n=n^{-1/3}$ fraction of the input distribution, we only need to guess correctly $n^{1/3}$ times.
In the end, our output hypothesis will be a decision list 
combining the partial classifiers on all the sets $B$ we removed.

The algorithm establishing \Cref{thm:main-strong-sq} is provided as \Cref{alg:learner-sq}.

\begin{algorithm}
    \caption{Distribution-free Agnostic Learning Monotone Disjunctions to Additive Error (SQ)}
    \label{alg:learner-sq}
        \textbf{Input:} $\eps\in (0,1/2)$ and SQ query access to a joint distribution $D$ of $(\bx,y)$ supported on $\{0, 1\}^{n}\times \{0, 1\}$.
        
        \textbf{Output:} With at least $2^{-\Tilde{O}(n^{1/3}\log(1/\eps))}$ probability, the algorithm outputs a hypothesis $h:\{0, 1\}^{n}\to \{0, 1\}$ such that $\pr_{(\bx,y)\sim D}[h(\bx)\neq y]\leq \opt+\eps$, where $\opt$ is the error of the optimal monotone disjunction.

        \begin{algorithmic} [1]
            \Statex $\triangleright$ For convenience of the algorithm description and analysis, we fix any 
            optimal monotone disjunction $f_S(\bx)=\bigvee_{i\in S} \bx_i$ 
            (unknown to the algorithm).
            \State Let $r=n^{2/3}$, $c$ be a sufficiently large constant, $T=cn\log (1/\eps)/r$ and initialize 
            $U_0\leftarrow \{0, 1\}^{n}$ and $I_0\leftarrow [n]$. 
            \Statex $\triangleright$ $U$ and $I$ keep track of the remaining domain and coordinates.
            \For{$t=\{0,\cdots, T\}$} 
                \State Define a coordinate $i$ as heavy if $i\in S$ and $\pr_{(\bx,y)\sim D} [\Ind (\bx_i=1)|\bx\in U_t]\ge r/n$.
                \State \algparbox{With probability $1/2$, 
                sample an $i\sim u(I_t)$, guess that $i$ is heavy and run \Call{RemoveHeavyCoordinate}{}.
                With the remaining $1/2$ probability, guess that there is no heavy coordinate and run \Call{$L_1$-RegressionOnLight}{}.} \label{line:SQ-guess}
                \State $U_{t+1}\leftarrow U_t\backslash B_t$.
                \State Let $\hat{P}_{U_{t+1}}$ be the answer of $\stat_D(\eps/100)$ for the query function $q_i(\bx,y)=\Ind(\bx\in U_t)$.
                \If{$\hat{P}_{U_{t+1}}\leq \eps/3$} \label{line:SQ-composite-hypothesis}
            \State \algparbox{
            Sample $c'\sim u(\{0,1\})$ and define $h:\{0, 1\}\to \{0, 1\}$ as 
            $h(\bx)=h'_k(\bx)$, where $k\in N$ is the smallest integer such that $\bx\in B_k$, and 
            $h(\bx)=c'$ otherwise. Return $h$.}
            \EndIf
        \EndFor    
        \end{algorithmic}        
        \hrulefill
        \begin{algorithmic} [1]
            \Procedure{RemoveHeavyCoordinate}{} 
            \State Let $B_t=\{\bx\in U_t|\bx_i=1\}$.
            \State \algparbox{Let $h'_t:B\to \{0, 1\}$ be a partial classifier on $B_t$ defined as $h'_t(\bx)=1$ if $\bx\in B_t$.}
            \State $I_{t+1}\leftarrow I_t\backslash \{i\}$.
            \EndProcedure
            \vspace{3mm}
        \end{algorithmic}
        \hrulefill           
        \begin{algorithmic} [1]
            \Procedure{$L_1$-RegressionOnLight}{} 
            \State \algparbox{Let $\hat{P}_U$ and $\hat{P}_i$ (for all $i\in I_t$) be the answer of $\stat_D(\eps r/(800n))$ for the query function $q_U(\bx,y)=\Ind(\bx\in U_t)$ and $q_i(\bx,y)=\Ind(\bx_i=1\land \bx\in U_t)$ respectively. \label{line:SQ-estimate-conditional} \\ \Comment{$\hat{P}_i/\hat{P}_U$ will be the empirical estimation of $\pr_{(\bx,y)\sim D}[\bx_i=1\mid \bx\in U_t]$.}}
            \State \algparbox{$I_{t+1}\leftarrow I_t\backslash \{i\mid \hat{P}_i/\hat{P}_U\ge (1+1/100)r/n\}$ and $B_t=\{\bx\in U_t|W_{I_{t+1}}(\bx)\leq 2r\}$.}
            \State \algparbox{Let $D'$ be the joint distribution of $(\bx,y)\sim D$ conditioned on $\bx\in B_t$, which we have SQ query access to by asking queries on the distribution $D$.}
            \State \algparbox{Apply the degree-$\left (cr^{1/2}\alpha^{-1/2}\right )$ polynomial $L_1$-regression algorithm (\Cref{fct:l1-regression}) on $D'$ and let $h'_t$ be the output hypothesis.}
            \EndProcedure
            \vspace{3mm}
        \end{algorithmic}
\end{algorithm}

We are now ready to prove the main theorem of this section.
\begin{proof} [Proof of \Cref{thm:main-strong-sq}]
For convenience of the analysis, let $f_S$ be the same optimal hypothesis we fixed in the algorithm to maintain a consistent definition of heavy coordinates.
We first prove the correctness of \Cref{alg:learner-sq}, i.e., the algorithm will, with probability at least 
$2^{-\Tilde{O}(n^{1/3}\log(1/\eps))}$, outputs a hypothesis $h$ such that $\pr_{(\bx,y)\sim D}[h(\bx)\neq y]\leq \pr_{(\bx,y)\sim D}[f_S(\bx)\neq y]+\eps$.
Notice that if there is any heavy coordinate in $I$, the algorithm will guess such a heavy coordinate with probability at least $1/(2n)$.
If there is no heavy coordinate, with probability $1/2$, the algorithm will guess that there is no heavy coordinate.
Therefore, the guess made by the algorithm is always correct with probability at least $1/(2n)$.
Suppose the algorithm runs for $T'$ iterations.
Then, with probability at least $\Omega(1/n)^{T'}\geq \Omega(1/n)^T=2^{-\Tilde{O}(n^{1/3}\log(1/\eps))}$, all the guesses made by the algorithm are correct,
and it suffices for us to show that 
if all the guesses are correct, then the algorithm outputs a hypothesis with error at most $\opt+\eps$ deterministically.
We first prove the following lemma on the partial classifiers 
$h_t$ obtained in each iteration.
\begin{lemma} \label{lem:sq-algorithm-error-progress}
    In \Cref{alg:learner-sq}, given that all the guesses are correct,
    then for both \textsc{RemoveHeavyCoordinate} and \textsc{$L_1$-RegressionOnLight} procedures, we have the following properties:
    \begin{enumerate}
        \item \label{property:small-error}
        $\pr_{(\bx,y)\sim D} [h_t(\bx)\neq y\land \bx\in B_t]-\pr_{(\bx,y)\sim D} [f_S(\bx)\neq y\land \bx\in B_t]\leq \eps/(100T)$ ; and

        \item \label{property:large-mass}
        $\pr_{(\bx,y)\sim D}[\bx\in B_t|\bx\in U_t]\ge n^{-1/3}$.
    \end{enumerate}
\end{lemma}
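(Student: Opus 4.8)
The plan is to analyze the two subroutines separately, since essentially all of the content lies in \textsc{$L_1$-RegressionOnLight}. For \textsc{RemoveHeavyCoordinate}, a correct guess means the chosen $i$ satisfies $i\in S$ and $\pr_{(\bx,y)\sim D}[\bx_i=1\mid\bx\in U_t]\ge r/n$, and since $B_t=\{\bx\in U_t:\bx_i=1\}$ this is literally \Cref{property:large-mass}: $\pr[\bx\in B_t\mid\bx\in U_t]=\pr[\bx_i=1\mid\bx\in U_t]\ge r/n=n^{-1/3}$. Moreover $\bx_i=1$ together with $i\in S$ forces $f_S(\bx)=1=h'_t(\bx)$ for every $\bx\in B_t$, so $h_t$ and $f_S$ agree on $B_t$ and the left side of \Cref{property:small-error} equals $0\le\eps/(100T)$.

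For \textsc{$L_1$-RegressionOnLight} I would proceed in three steps. \emph{Step 1: $S\subseteq I_{t+1}$.} A correct ``no heavy coordinate'' guess means every $i\in S$ has $\pr[\bx_i=1\mid\bx\in U_t]<r/n$, and I would show such an $i$ is never removed, i.e.\ its estimated ratio $\hat P_i/\hat P_U$ stays below the deletion threshold $(1+1/100)r/n$. This requires a lower bound $\pr[\bx\in U_t]=\Omega(\eps)$, which I would extract as a loop invariant from the termination test on \Cref{line:SQ-composite-hypothesis}: if the algorithm did not return in iteration $t-1$, then the $(\eps/100)$-accurate estimate of $\pr[\bx\in U_t]$ exceeds $\eps/3$, hence $\pr[\bx\in U_t]>\eps/4$ (the base case $U_0=\{0,1\}^n$ being trivial). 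Given this, the query tolerance $\eps r/(800n)$ used in \Cref{line:SQ-estimate-conditional} is at most a $\tfrac1{200}$-fraction of $(r/n)\,\pr[\bx\in U_t]$, and a short calculation with the ratio of the two $\bigl(\eps r/(800n)\bigr)$-accurate estimates $\hat P_i,\hat P_U$ gives $\hat P_i/\hat P_U<(1+1/100)r/n$ whenever $i\in S$, so no relevant coordinate is deleted.

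\emph{Step 2: \Cref{property:large-mass}.} Running the same estimate-accuracy computation in reverse, every surviving coordinate $i\in I_{t+1}$ has $\pr[\bx_i=1\mid\bx\in U_t]\le(1+1/50)r/n$, so summing over the at most $n$ coordinates of $I_{t+1}$ yields $\E_{(\bx,y)\sim D}[W_{I_{t+1}}(\bx)\mid\bx\in U_t]\le(1+1/50)r$. Since $B_t=\{\bx\in U_t:W_{I_{t+1}}(\bx)\le 2r\}$, Markov's inequality gives $\pr[\bx\in B_t\mid\bx\in U_t]\ge 1-(1+1/50)/2>2/5\ge n^{-1/3}$ (for $n$ above an absolute constant, the claim being trivial otherwise). \emph{Step 3: \Cref{property:small-error}.} By Steps 1--2, $S\subseteq I_{t+1}$ and every $\bx\in B_t$ has $W_{I_{t+1}}(\bx)\le 2r$, so \Cref{lem:approx-degree-coordinate} (with index set $I_{t+1}$ and point set $B_t$) bounds the $\eps'$-approximate degree of the relevant monotone disjunctions on $B_t$ by $O(r^{1/2}\log(1/\eps'))$ for $\eps'\eqdef\eps/(100T)$. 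Since $\pr[\bx\in B_t]\ge\pr[\bx\in B_t\mid\bx\in U_t]\,\pr[\bx\in U_t]=\Omega(\eps)$, SQ queries on $D'$ (the conditional of $D$ on $B_t$) are faithfully simulated by queries on $D$ with only an $O(1/\eps)$ tolerance blow-up, so \Cref{fct:l1-regression} returns $h'_t$ with $\pr_{(\bx,y)\sim D'}[h'_t(\bx)\neq y]\le\pr_{(\bx,y)\sim D'}[f_S(\bx)\neq y]+\eps'$; multiplying through by $\pr[\bx\in B_t]\le 1$ converts this into the joint bound of \Cref{property:small-error} with slack $\eps'=\eps/(100T)$. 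Finally I would note $T=\Theta(n^{1/3}\log(1/\eps))$, so $\log(1/\eps')=\tilde O(\log(1/\eps))$, the degree is $\tilde O(n^{1/3}\log(1/\eps))$, and the resulting $L_1$-regression call costs $2^{\tilde O(n^{1/3}\log(1/\eps))}$, consistent with the theorem.

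The one genuinely non-local point — and the main thing to get right — is the invariant $\pr[\bx\in U_t]=\Omega(\eps)$, which is what makes the noisy ratio $\hat P_i/\hat P_U$ a reliable proxy for the conditional marginal $\pr[\bx_i=1\mid\bx\in U_t]$ in both Step 1 and Step 2; everything else is tolerance bookkeeping together with one application each of Markov's inequality, \Cref{lem:approx-degree-coordinate}, and \Cref{fct:l1-regression}.
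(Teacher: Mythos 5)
Your \textsc{RemoveHeavyCoordinate} argument, Step~2 (Markov), and the two bookkeeping observations the paper glosses over (the loop invariant $\pr[\bx\in U_t]=\Omega(\eps)$ needed to make $\hat P_i/\hat P_U$ a reliable ratio estimate, and the $O(1/\eps)$ tolerance blow-up needed to simulate SQ access to the conditional distribution $D'$) are all correct and worth making explicit. However, Step~1 contains a genuine gap: the claim $S\subseteq I_{t+1}$ is false in general, and Step~3 depends on it.

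Here is the issue. Your Step~1 argument only shows that no coordinate of $S$ that is \emph{still in $I_t$} gets deleted during the current call to \textsc{$L_1$-RegressionOnLight}; that is, it establishes $S\cap I_t\subseteq I_{t+1}$, not $S\subseteq I_{t+1}$. If an earlier iteration correctly guessed that some $i\in S$ was heavy and ran \textsc{RemoveHeavyCoordinate}, then $i$ was removed from $I$ in that iteration and $S\not\subseteq I_t$ already. This is not an edge case — it is precisely the ``Case~2'' the algorithm is designed to iterate through. Without $S\subseteq I_{t+1}$, you cannot invoke \Cref{lem:approx-degree-coordinate} with index set $I=I_{t+1}$ to approximate $f_S$, because $f_S$ is not a disjunction over $I_{t+1}$. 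The missing idea, which the paper supplies, is that whenever \textsc{RemoveHeavyCoordinate} deletes $i\in S$ from $I$ it simultaneously removes every $\bx$ with $\bx_i=1$ from $U$; consequently $\bx_i=0$ for all $\bx\in U_t$ and all $i\in S\setminus I_{t+1}$, so $f_{S\cap I_{t+1}}(\bx)=f_S(\bx)$ for every $\bx\in U_t\supseteq B_t$. One then approximates $f_{S\cap I_{t+1}}$ (which \emph{is} a disjunction over $I_{t+1}$, with the claimed low weight on $B_t$) and transfers the error bound to $f_S$ via this pointwise agreement. Replacing your ``$S\subseteq I_{t+1}$'' with ``$f_{S\cap I_{t+1}}=f_S$ on $U_t$'' and routing Step~3 through $f_{S\cap I_{t+1}}$ closes the gap; the rest of your argument then goes through.
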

\begin{proof}
    We just need to show that the two properties hold for the output hypothesis of both \textsc{RemoveHeavyCoordinate} procedure and \textsc{$L_1$-RegressionOnLight} procedure.
    It is easy to see that for the procedure \textsc{RemoveHeavyCoordinate}, both properties follow from the definition of heavy coordinate.
    So, it only remains to verify them 
    for the \textsc{$L_1$-RegressionOnLight} procedure.
    Notice that in the \textsc{$L_1$-RegressionOnLight} procedure, 
    we want to remove any coordinate $i$ such that $\pr_{(\bx,y)\sim D}[\bx_i=1\mid \bx\in U_t]\geq r/n$, since they cannot be in $S$. 
    We do so by using $\hat{P}_i/\hat{P}_U$ as an estimate of $\pr_{(\bx,y)\sim D}[\bx_i=1\mid \bx\in U_t]$.
    We will first need the following fact, which shows that $\hat{P}_i/\hat{P}_U$ is an accurate estimate.
    \begin{fact} \label{fct:SQ-oracle-conditional}
        Let $P_1,P_2,\hat{P_1},\hat{P}_2\in [0,1]$ with $P_1\leq P_2$, $|P_1-\hat{P}_1|\leq \tau$, $|P_2-\hat{P}_2|\leq \tau$ and  $\hat{P}_2-\tau\geq \gamma>0$,.
        Then we have $|\hat{P}_1/\hat{P}_2-P_1/P_2|\leq 2\tau/\gamma$.
    \end{fact}
    \begin{proof}
        For the direction $\hat{P}_1/\hat{P}_2-P_1/P_2\ge 2\tau/\gamma$, we have
    \begin{align*}
    P_1/P_2
    \le \frac{\hat{P}_1+\tau}{\hat{P}_2-\tau}
    = \left (\frac{\hat{P}_1}{\hat{P}_2}+\frac{\tau}{\hat{P}_2}\right )\frac{\hat{P}_2}{\hat{P}_2-\tau}
    \le &\frac{\hat{P}_1}{\hat{P}_2}+2\frac{\tau}{\hat{P}_2-\tau}=\frac{\hat{P}_1}{\hat{P}_2}+2\tau/\gamma\; .
    \end{align*}
    For the direction $\hat{P}_1/\hat{P}_2-P_1/P_2\le 2\tau/\gamma$, we have
    \begin{align*}
    P_1/P_2
    \ge \frac{\hat{P}_1-\tau}{\hat{P}_2+\tau}
    = \left (\frac{\hat{P}_1}{\hat{P}_2}-\frac{\tau}{\hat{P}_2}\right )\frac{\hat{P}_2}{\hat{P}_2+\tau}
    \ge &\frac{\hat{P}_1}{\hat{P}_2}-2\frac{\tau}{\hat{P}_2+\tau}=\frac{\hat{P}_1}{\hat{P}_2}-2\tau/\gamma\; .
    \end{align*}
    \end{proof}
    A direct application of \Cref{fct:SQ-oracle-conditional} implies that    
    \begin{equation} \label{eq:SQ-oracle-conditional}
    \left |\hat{P}_i/\hat{P}_U-\pr_{(\bx,y)\sim D}[\bx_i=1\mid \bx\in U_t]\right |\leq (1/100)r/n \; .
    \end{equation}
    For Property~\ref{property:large-mass}, since we have removed any $i\in I_t$ such that $\hat{P}_i/\hat{P}_U\ge (1+1/100)r/n$, it follows from \Cref{eq:SQ-oracle-conditional} that for any remaining $i\in I_{t+1}$, $\pr_{(\bx,y)\sim D}[\bx_i=1\mid \bx\in U_t]\le (1+1/50)r/n$.
    Therefore, 
    $\E_{(\bx,y)\sim D}[W_I(\bx)\mid \bx\in U_t]=\sum_{i\in I} \E_{(\bx,y)\sim D}[\Ind(\bx_i=1)\mid \bx\in U_t]\leq (4/3)r/n$. By Markov's inequality, we have $\pr[\bx\in B_t\mid \bx\in U_t]\ge 1/3$.

For Property \ref{property:small-error}, we first show that $f_{S\cap I_{t+1}}(\bx)=f_{S}(\bx)$ for any $\bx\in U_t$, where $f_{S\cap I_{t+1}}(\bx)=\bigvee_{i\in S\cap I_{t+1}} \bx_i$.
Notice that any coordinate $i$ removed from $I_t$ must have $\hat{P}_i/\hat{P}_U\ge n^{-1/3}+n^{-1/3}/100$.
Therefore, by \eqref{eq:SQ-oracle-conditional}, any such coordinate $i$ must satisfy $\pr_{(\bx,y)\sim D}[\bx_i=1\mid \bx\in U]\geq n^{-1/3}$.
Given that all the guesses are correct, 
any such coordinate $i$ cannot be in $S$.
Therefore, we have that any $i\in S\backslash I_{t+1}$ must be removed from $I$ by the previous call to the 
\textsc{RemoveHeavyCoordinate} procedure.
Since the \textsc{RemoveHeavyCoordinate} procedure 
removed any $\bx$ such that $\bx_i=1$ from $U$, 
we must have that for any 
$i\in S\backslash I_{t+1}$ and $\bx\in U_t$, $\bx_i=0$.
Therefore, for any $\bx\in U_t$, 
\[f_{S\cap I_{t+1}}(\bx)=\bigvee_{i\in S\cap I_{t+1}} \bx_i=\bigvee_{i\in S\cap I_{t+1}} \bx_i\lor \bigvee_{i\in S\backslash I_{t+1}} \bx_i=\bigvee_{i\in S} \bx_i=f_{S}(\bx)\; .\] 
Then from \Cref{lem:approx-degree-coordinate}, we have the $\eps/(100T)$-approximate degree of disjunctions on $B_t$ is 
$$O(n^{1/3}\log(T/\eps))=\tilde{O}(n^{1/3}\log(1/\eps)) \;.$$
Using \Cref{fct:l1-regression}, we get that 
\[\pr_{(\bx,y)\sim D} [h_t(\bx)\neq y\mid \bx\in B_t]-\pr_{(\bx,y)\sim D} [f_{S\cap I_{t+1}}(\bx)\neq y\mid \bx\in B_t]\leq \eps/(100T)\; .\]
Since $f_{S\cap I_{t+1}}(\bx)=f_{S}(\bx)$ for all $\bx\in U_t$, we have
\[\pr_{(\bx,y)\sim D} [h_t(\bx)\neq y\mid \bx\in B_t]-\pr_{(\bx,y)\sim D} [f_{S}(\bx)\neq y\mid \bx\in B_t]\leq \eps/(100T)\;,\]
which implies 
\[\pr_{(\bx,y)\sim D} [h_t(\bx)\neq y\land \bx\in B_t]-\pr_{(\bx,y)\sim D} [f_S(\bx)\neq y\land \bx\in B_t]\leq \eps/(100T)\; .\]
This completes the proof.
\end{proof}
Given Property~\ref{property:large-mass} of Fact \ref{lem:sq-algorithm-error-progress}, since we are removing $B_t$ from $U_t$ in each iteration, we have that $\pr_{(\bx,y)\sim D}[\bx\in U_t]$ will decrease by a multiplicative factor of $1-n^{-1/3}$ in each iteration.
Therefore, after at most $T$ iterations, 
we have $\pr_{(\bx,y)\sim D}[\bx\in U_t]\leq \eps/4$ 
and the algorithm returns the hypothesis $h$ 
and terminates via Line~\ref{line:SQ-composite-hypothesis}. 
Suppose the algorithm terminates at the $T'$th iteration. 
Then the error of the output hypothesis is
\vspace{-0.2cm}
\begin{align*}
\pr_{(\bx,y)\sim D}[h(\bx)\neq y]\leq &\sum_{t=1}^{T'}\pr_{(\bx,y)\sim D}[h_t(\bx)\neq y\land \bx\in B_i]+\pr_{(\bx,y)\sim D}[\bx\in U_{T'+1}]\\
\leq & \pr_{(\bx,y)\sim D}[f_S(\bx)\neq y]+T'(\eps/(100T))+\eps/2\\
\leq  &\opt+\eps\; .
\end{align*}

It only remains to verify the query and computational complexity of \Cref{alg:learner-sq}. Notice that the smallest tolerance of any query that the algorithm directly asks is at least $\eps r/(800n)=\eps n^{-1/3}/800$,
and any query asked by the $L_1$-regression has tolerance at least $2^{-\Tilde{O}(n^{1/3}\log(1/\eps))}$.
Furthermore, the computational complexity of the algorithm is $Tn^{O(n^{1/3}\log(T/\eps))}=2^{\Tilde{O}(n^{1/3}\log(1/\eps))}$ and the total number of queries the algorithm asks must be bounded by the same quantity.
This completes the proof of the correctness of \Cref{alg:learner-sq}.

Given that \Cref{alg:learner-sq} is correct, 
we can simply repeat \Cref{alg:learner-sq} for 
$N=2^{\tilde{O}(n^{1/3}\log(1/\eps))}$ times 
with the error parameter set to $\eps/3$, 
and let $h_1,\cdots,h_N$ be the output hypotheses.
Let $\hat{\err}_i$ be the answer of $\stat_D(\eps/3 )$ for the query function $q(\bx,y)=\Ind(h_i(\bx)\neq y)$, 
which estimates the error of each output hypothesis.
Then simply output $h_k$, where $k={\rm argmin}_k \hat{\err}_i$.
The analysis here is straightforward.
Since \Cref{alg:learner-sq} succeeds with probability at least $2^{-\Tilde{O}(n^{1/3}\log(1/\eps))}$ and we repeat it for 
$N$ times, with probability at least a constant, 
the algorithm succeeds at least once. 
Therefore, we must have $\hat{\err}_k\leq \opt+2\eps/3$.
This implies that $\pr_{(\bx,y)\sim D}[h_k(\bx)\neq y]\leq \opt+\eps$ 
from the definition of the SQ oracle.
This gives us an SQ algorithm for distribution-free agnostic 
learning monotone disjunctions to additive error $\eps$.
To learn general disjunctions,  
as we have discussed in the previous section,
one can easily reduce learning general disjunctions 
to learning monotone disjunctions 
by including negated variables as additional features. 
This completes the proof.
\end{proof}

\section{Agnostic Learning with Approximate Error Guarantees} \label{sec:tradeoff-alg}

In the setup of Definition~\ref{def:agnostic-learning}, 
given an approximation factor $\alpha\in (1,\infty)$ 
and additive error $\eps\in(0,1)$, if an 
algorithm $\A$ outputs a hypothesis $h:X\to \{0, 1\}$ 
such that $\pr_{(\bx,y)\sim D}[h(\bx)\neq y]\leq \alpha \, \opt+\eps$, 
we will say that $\A$ $(\alpha,\eps)$-approximately agnostically learns $\C$. 
In this section, we give an SQ algorithm that provides 
a smooth trade-off between error and complexity. 
In particular, assuming $\alpha\in [32,\sqrt{n}]$ and $\eps\in (0,1)$,
there is an algorithm 
$\A$ that asks $q$ queries to $\stat_D(\tau)$, has computational complexity $T$,
and 
$(\alpha,\eps)$-approximately agnostically learns disjunctions,
where $\max(q,1/\tau,T)=2^{\Tilde{O}(n^{1/3}\alpha^{-2/3})}\poly(1/\eps)$.
Therefore, on one extreme point of the trade-off curve, 
we recover the guarantee of \Cref{alg:learner-sq} 
from \Cref{sec:sq-ub} (the requirement $\alpha\ge 32$ here 
is only used for convenience of the algorithm description); 
and on the other extreme point we recover the guarantee of an earlier 
algorithm by \cite{Pel07} (see also \cite{ABS10})
that runs in polynomial time 
and outputs a hypothesis with error $O(n^{1/2})\opt+\eps$.

We give the main theorem of this section.
\begin{theorem}[Approximate Agnostic Learner] \label{thm:main-tradeoff}
    Let $D$ be an unknwon joint distribution of $(\bx,y)$ supported on $\{0,1\}^n\times \{0,1\}$, $\alpha\in [32,\sqrt{n}]$ and $\eps\in (0,1)$.
    Then there is an algorithm that makes at most $q$ queries to $\stat_D(\tau)$, has $T$ computational complexity, 
    and $(\alpha,\eps)$-approximately agnostically learns disjunctions,
    where $\max(q,1/\tau,T)=2^{\Tilde{O}(n^{1/3}\alpha^{-2/3})}\poly(1/\eps)$. 
\end{theorem}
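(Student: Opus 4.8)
The plan is to run the recursive ``guess-and-peel'' scheme of \Cref{alg:learner-sq}, but instantiated with a coarser polynomial approximation and far fewer recursion rounds, and then compose it with agnostic boosting; as before it suffices to treat monotone disjunctions. First reduce to weak learning. Since $\opt\le\min(\Pr[y=0],\Pr[y=1])$, if $\opt\ge 1/(2\alpha)$ then the better of the two constant hypotheses already has error $\le 1/2\le\alpha\,\opt\le\alpha\,\opt+\eps$, so we may assume $\opt<1/(2\alpha)$. In that regime it is enough to output, with probability $2^{-\tilde{O}(n^{1/3}\alpha^{-2/3})}$, a hypothesis of error $1/2-\Omega(1/\alpha)$: repeating and keeping the output whose SQ-estimated error is smallest amplifies the success probability to a constant, and standard agnostic boosting (with $\poly(\alpha,1/\eps)$ rounds, since the advantage is $\Omega(1/\alpha)$) then yields error $\opt+\eps\le\alpha\,\opt+\eps$.

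For the weak learner I would take radius $r=n^{2/3}\alpha^{-1/3}$ (so a coordinate is ``heavy'' if it is $1$ on an $\Omega(r/n)$-fraction of the current domain) and run \Cref{alg:learner-sq} for only $T=\tilde{\Theta}(n^{1/3}\alpha^{-2/3})=\tilde{\Theta}(n/(r\alpha))$ rounds; the range $\alpha\le\sqrt n$ is exactly what keeps $T\ge 1$. Each round, with probability $1/2$ guess a surviving heavy coordinate $i$ and, assuming $i\in S$, peel off $\{\bx : \bx_i=1\}$ with label $1$; with probability $1/2$ guess that no heavy coordinate survives, discard every coordinate that is too frequent (none can lie in $S$), and run $L_1$-regression on the $W_I(\bx)\le 2r$ inputs (a $\ge 1/3$-fraction of the current domain, by Markov). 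The one substantive change from \Cref{alg:learner-sq} is that the regression now targets only a $(1/2-\Theta(1/\alpha))$-approximation of the disjunction rather than an $\eps$-approximation; by the $\eps\in[1/4,1/2)$ branch of \Cref{lem:approx-degree-coordinate} (this is where $\alpha\ge 32$ is used, to keep the approximation error in $[1/4,1/2)$) such a polynomial has degree $O(\sqrt{r/\alpha})=O(n^{1/3}\alpha^{-2/3})$, crucially with \emph{no} $\log(1/\eps)$ factor, which is why the final accuracy $\eps$ never enters the exponent. By \Cref{fct:l1-regression} the partial classifier produced on a chunk $B_t$ then satisfies $\Pr[h_t(\bx)\neq y\land \bx\in B_t]\le \Pr[f_S(\bx)\neq y\land \bx\in B_t]+(1/2-\Theta(1/\alpha))\Pr[\bx\in B_t]$ up to the chosen SQ tolerances. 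Output the decision-list combination of these partial classifiers, with a constant on the leftover domain $U_T$.

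For correctness, condition on all $T$ guesses being correct, an event of probability $\Omega(1/n)^T=2^{-\tilde{O}(n^{1/3}\alpha^{-2/3})}$, so that every peeled coordinate is genuinely outside $S$ and $f_S$ remains optimal on each restricted problem, exactly as in \Cref{lem:sq-algorithm-error-progress}. Since the $B_t$ are disjoint, summing the per-chunk bounds gives total error at most $\opt+(1/2-\Theta(1/\alpha))\cdot(\text{mass of the }L_1\text{-regression chunks})+\tfrac12\Pr[\bx\in U_T]$, up to lower-order terms from the tolerances; one $L_1$-regression round shrinks $\Pr[\bx\in U]$ by a factor $\le 2/3$, so after a constant number of such rounds (which occur with constant probability among the $T$ rounds) one has $\Pr[\bx\in U_T]\le 1/2$, and combining this with $\opt<1/(2\alpha)$ and the fact that the masses of the heavy chunks, the $L_1$-regression chunks, and $U_T$ sum to $1$ yields total error $1/2-\Omega(1/\alpha)$. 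The per-round cost is $n^{O(\sqrt{r/\alpha})}\poly(1/\eps)$ for the regression plus $O(1)$ queries of tolerance $\Theta(\eps r/n)$; dividing the total by the success probability gives $2^{O((T+\sqrt{r/\alpha})\log n)}\poly(1/\eps)=2^{\tilde{O}(n^{1/3}\alpha^{-2/3})}\poly(1/\eps)$, and the construction is manifestly an SQ algorithm.

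I expect the main difficulty to lie at the interface between the weak learner and the booster: one must verify that ``succeeds whenever $\opt<1/(2\alpha)$, with advantage $\Omega(1/\alpha)$'' is precisely enough for the agnostic booster to reach error $\opt+\eps$ rather than stalling at error $\Theta(1/\alpha)$, which needs the produced advantage to scale linearly with the best residual correlation on every reweighted distribution, and this in turn pins down the joint choice of the constant in the $1/2-\Theta(1/\alpha)$ approximation, the heavy-coordinate threshold, the SQ tolerances, and the $1/(2\alpha)$ cutoff. The remaining new work is routine: that $r=n^{2/3}\alpha^{-1/3}$ balances the recursion depth $\tilde{\Theta}(n/(r\alpha))$ against the regression degree $\Theta(\sqrt{r/\alpha})$ is a one-line optimization, and rechecking that the $\eps\in[1/4,1/2)$ case of \Cref{lem:approx-degree-coordinate} indeed supplies a degree-$O(\sqrt{r/\alpha})$ polynomial that is $L_1$-close to the disjunction on weight-$\le 2r$ inputs is immediate from that lemma's proof.
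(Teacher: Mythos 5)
Your weak learner reproduces the paper's Theorem \ref{thm:main-tradeoff-weak} and Algorithm \ref{alg:tradeoff} in every essential: radius $r=n^{2/3}\alpha^{-1/3}$, the $\eps\in[1/4,1/2)$ branch of Lemma \ref{lem:approx-degree-coordinate} yielding degree $O(\sqrt{r/\alpha})$ with no $\log(1/\eps)$ factor, guess-and-peel with $T=\tilde{O}(n/(r\alpha))$ rounds, and the per-chunk accounting. (The paper's algorithm returns the single partial classifier from the first round that passes an SQ check, padded by a random constant, and rules out a long all-heavy run by a contradiction with $\opt\le 1/\alpha$; your decision-list variant needs a bit of extra bookkeeping when no $L_1$-regression round occurs, but that is cosmetic.)

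The genuine gap is exactly where you flag it, and it does not resolve the way you hope. From a weak learner whose precondition is $\opt\le 1/(2\alpha)$, no agnostic booster can produce error $\opt+\eps$: after reweighting, the effective $\opt$ on the reweighted distribution generically exceeds $1/(2\alpha)$, the weak learner then returns nothing useful, and the booster stalls at additive error $\Theta(1/\alpha)$. This is not a detail to be verified about constants or tolerances; it is a structural barrier, and it is precisely why the target guarantee is $\alpha\opt+\eps$ rather than $\opt+\eps$. The correct tool is Feldman's distribution-independent agnostic booster for multiplicative error (ABoostDI, \Cref{fct:boost-approximate}): given an $(\alpha_0,\gamma)$-weak learner, i.e.\ one required to succeed only when $\opt\le 1/2-\alpha_0$, it produces error $\opt/(1-2\alpha_0)+\eps$. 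Setting $1/2-\alpha_0=1/(2\alpha)$ gives $1-2\alpha_0=1/\alpha$ and hence $\alpha\opt+\eps$, with $\poly(\gamma^{-1},\eps^{-1})$ invocations, which is absorbed into $2^{\tilde{O}(n^{1/3}\alpha^{-2/3})}\poly(1/\eps)$. Your worry that one would need the produced advantage to scale with the best residual correlation on every reweighted distribution is well-founded---this weak learner does not have that property---but ABoostDI does not require it.
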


Similar to how we described the high-level intuition 
of the algorithm in \Cref{sec:non-sq-ub},
we start by pointing out two simplifications 
that can be made without loss of generality. 
First, it suffices to consider {\em monotone} disjunctions, 
as discussed in the previous sections.
Second, similar to \Cref{sec:non-sq-ub}, 
it suffices to develop a {\em weak} agnostic learner 
with the desired complexity.
In the context of $(\alpha,\eps)$-approximate agnostic learning, 
a corresponding weak learner is an algorithm 
whose hypothesis performs slightly better 
than a random guess, when the input distribution satisfies $\opt\le 1/(2\alpha)$. 
Given such an algorithm, we can leverage standard agnostic 
boosting techniques to obtain our 
$(\alpha,\eps)$-approximate agnostic learner, 
i.e., an algorithm with accuracy $\alpha\opt+\eps$, 
with qualitatively the same complexity 
(up to a polynomial factor). 

Specifically, it suffices to establish the following result:
\begin{theorem} [Weak Learner for Monotone Disjunctions given $\opt\le 1/\alpha$] \label{thm:main-tradeoff-weak}
    Let $D$ be an unknown distribution supported on 
    $\{0, 1\}^{n}\times \{0, 1\}$, $\alpha\in [64,\sqrt{n}]$ and $\eps\in (0,1)$. 
    Suppose there is a monotone disjunction 
    $f: \{0, 1\}^{n}\to \{0, 1\}$ such that 
    $\pr_{(\bx,y)\sim D}[f(\bx)\neq y]\leq\ 1/\alpha$.
    Then there is an algorithm that makes at most $q$ queries to $\stat_D(\tau)$, has $T$ computational complexity, and with probability at least $p$
    returns a hypothesis $h:\{0, 1\}^{n}\to \{0, 1\}$ such that $\pr_{(\bx,y)\sim D}[h(\bx)\neq y]\leq 1/2-1/\poly(n)$,
    where $\max(q,1/\tau,T,1/p)=2^{\Tilde{O}(n^{1/3}\alpha^{-2/3})}$. 
\end{theorem}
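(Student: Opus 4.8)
The plan is to adapt the SQ peeling algorithm of \Cref{alg:learner-sq} --- which proves \Cref{thm:main-strong-sq} --- to the approximate regime, exploiting the promise $\Pr_{(\bx,y)\sim D}[f_S(\bx)\ne y]\le 1/\alpha$ twice over. I set the radius to $r\eqdef n^{2/3}\alpha^{-1/3}$, so that the per-round progress parameter is $r/n=n^{-1/3}\alpha^{-1/3}$ and the $L_1$-regression degree below will be $\Theta(\sqrt{r/\alpha})=\Theta(n^{1/3}\alpha^{-2/3})$. Exactly as in \Cref{alg:learner-sq}, call a coordinate $i$ \emph{heavy} (relative to the surviving domain $U_t$) if $i\in S$ and $\Pr_{(\bx,y)\sim D}[\bx_i=1\mid\bx\in U_t]\ge r/n$; in each round the algorithm flips a fair coin to either sample $i\sim u(I_t)$ and \emph{guess} that $i$ is heavy (running \textsc{RemoveHeavyCoordinate}), or guess that no heavy coordinate exists (running \textsc{$L_1$-RegressionOnLight}). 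Each guess is correct with probability at least $1/(2n)$, and the analysis conditions on all guesses being correct; as in the proof of \Cref{thm:main-strong-sq}, this implies $f_{S\cap I_{t+1}}\equiv f_S$ on $U_t$ throughout.

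The first --- essentially cosmetic --- modification is that the algorithm stops early: after each round it uses a $\stat_D(\Theta(1/\alpha))$ query to estimate $\Pr[\bx\in U_{t+1}]$ and halts, returning the composite decision-list hypothesis, as soon as \emph{either} (a) the round just performed was a light round, \emph{or} (b) the total mass removed by heavy rounds exceeds a threshold $c_0/\alpha$ for a suitable constant $c_0$. A run consisting only of heavy rounds removes a $\ge r/n$ fraction of $U_t$ each round, so (b) fires within $O((1/\alpha)/(r/n))=O(n^{1/3}\alpha^{-2/3})$ rounds; and because (b) has not fired earlier, any light round occurs while $\Pr[\bx\in U_t]\ge 1-O(1/\alpha)$, at which point Property~\ref{property:large-mass} of \Cref{lem:sq-algorithm-error-progress} gives that the light block $B_t=\{\bx\in U_t:W_{I_{t+1}}(\bx)\le 2r\}$ satisfies $\Pr[\bx\in B_t]\ge\tfrac13(1-O(1/\alpha))>1/4$. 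Hence the outer loop always runs for $T=O(n^{1/3}\alpha^{-2/3})$ rounds, and all guesses are correct with probability at least $(1/(2n))^{T}=2^{-\tilde{O}(n^{1/3}\alpha^{-2/3})}$.

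The second --- genuinely new --- modification is that on a light round we run polynomial $L_1$-regression only to additive error $\eps'\eqdef\tfrac12-4/\alpha$ rather than to error $\eps$. Since $\alpha\ge 64$ we have $\eps'\in[\tfrac14,\tfrac12)$, so the first case of \Cref{lem:approx-degree-coordinate} bounds the necessary degree by $O(\sqrt{(2r)(1-2\eps')})=O(\sqrt{r/\alpha})=O(n^{1/3}\alpha^{-2/3})$, and \Cref{fct:l1-regression} then realizes this subroutine in SQ with complexity $n^{O(n^{1/3}\alpha^{-2/3})}=2^{\tilde{O}(n^{1/3}\alpha^{-2/3})}$. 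The returned $h_t$ satisfies $\Pr_D[h_t(\bx)\ne y\wedge\bx\in B_t]\le\Pr_D[f_S(\bx)\ne y\wedge\bx\in B_t]+\eps'\Pr_D[\bx\in B_t]$ (using $f_{S\cap I_{t+1}}\equiv f_S$ on $U_t$), while on a heavy block $h_t\equiv 1$ coincides with $f_S$ and contributes exactly $\Pr_D[f_S(\bx)\ne y\wedge\bx\in B_t]$; the uncovered remainder $U_{\mathrm{final}}$ is labeled by its SQ-estimated majority bit at cost $\le\tfrac12\Pr[\bx\in U_{\mathrm{final}}]+\delta$, with $\delta$ the estimation slack. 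Writing $M_H$ and $M_L$ for the total mass removed by heavy and light rounds, the blocks are disjoint so $\sum_t\Pr_D[f_S\ne y\wedge\bx\in B_t]\le\opt\le 1/\alpha$, and the overall error is at most
\[
\tfrac12+\opt+\delta-\tfrac{4}{\alpha}M_L-\tfrac12 M_H\;.
\]
In case (b) we have $M_H\ge c_0/\alpha$ (up to slack), and in case (a) we have $M_L>1/4$; choosing $c_0$ and all estimation tolerances appropriately (of order $1/\alpha$ up to small constants) and using $\opt\le 1/\alpha$ and $\alpha\le\sqrt n$, this bound is at most $\tfrac12-\Omega(1/\alpha)\le\tfrac12-1/\poly(n)$, as required.

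The main obstacle --- the only step that is not a re-parametrization of \Cref{sec:sq-ub} --- is realizing that a \emph{coarse} $(\tfrac12-\Theta(1/\alpha))$-approximating polynomial already suffices on the light blocks, and that this is consistent with the global error budget precisely because the promise $\opt\le 1/\alpha$ forces the target disjunction's own contribution to the error to be $O(1/\alpha)$, i.e.\ below the $\Theta(1/\alpha)$ advantage over random guessing that the coarse approximation retains. Everything else is bookkeeping: fixing the stopping threshold $c_0$ and all SQ tolerances so that cases (a) and (b) each leave an $\Omega(1/\alpha)$ gap below $1/2$; checking, via \Cref{fct:SQ-oracle-conditional}, that the conditional-probability estimates used for coordinate pruning and for conditioning on $B_t$ are accurate enough (the direct queries need tolerance $\Omega(1/\poly(n))$ while the $L_1$-regression needs $n^{-O(n^{1/3}\alpha^{-2/3})}$); and confirming that the number of queries, the inverse tolerance, and the running time are all $2^{\tilde{O}(n^{1/3}\alpha^{-2/3})}$, dominated by the $L_1$-regression subroutine, while $1/p=(2n)^{T}=2^{\tilde{O}(n^{1/3}\alpha^{-2/3})}$.
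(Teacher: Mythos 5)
Your proof is correct and establishes the theorem, but it takes a genuinely different route from the paper's, so it is worth spelling out the difference. The paper's Algorithm~\ref{alg:tradeoff} terminates via a \emph{per-block correlation test}: after each round it queries $\E[q(\bx,y)]$ for $q=\Ind(\bx\in B_t\land h_t=y)-\Ind(\bx\in B_t\land h_t\neq y)$, and halts as soon as this exceeds $r/(4n)$; the returned hypothesis is $h_t'$ on the \emph{single} block $B_t$ that triggered the test, with a random constant elsewhere. Lemma~\ref{fct:tradeoff-iteration-removed-mass} then argues by contradiction that this threshold must fire within $T$ rounds: a light round necessarily produces a high-correlation $h_t'$, and an all-heavy run cannot persist for $T$ rounds without the accumulated label-disagreement on heavy blocks exceeding $\opt\le 1/\alpha$. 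You instead maintain the whole decision list of partial classifiers, terminate by an \emph{accumulated-mass} criterion (light round occurred, or heavy-block mass $\geq c_0/\alpha$), label the residual region by its majority bit, and bound the global error directly via $\tfrac12+\opt+\delta-\tfrac{4}{\alpha}M_L-\tfrac12 M_H$. Both analyses pivot on the same two ideas --- that only $O(1/\alpha)$ mass needs to be peeled by heavy rounds, and that the light-round $L_1$-regression can afford coarse accuracy $\tfrac12-\Theta(1/\alpha)$ so degree $O(\sqrt{r/\alpha})=O(n^{1/3}\alpha^{-2/3})$ suffices --- but your global accounting avoids the paper's contradiction argument and the need to locate a single well-correlated block, at the cost of maintaining the full decision list. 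The constants are tight but work out: with $\alpha\ge 64$, $M_L\ge\tfrac13(1-c_0/\alpha)\ge\tfrac{5}{16}$, so $\tfrac4\alpha M_L\ge\tfrac{5}{4\alpha}>\opt$, leaving an $\Omega(1/\alpha)\ge\Omega(1/\sqrt n)=1/\poly(n)$ advantage in case~(a); you should make this explicit (e.g.\ take $\eps'=\tfrac12-5/\alpha$ or record the $\tfrac13$-vs-$\tfrac14$ slack) since your write-up uses $M_L>1/4$, for which $\tfrac4\alpha M_L$ merely \emph{ties} $\opt=1/\alpha$.
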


The high-level idea of the weak learner is similar to \Cref{alg:learner-sq},
with the main differences being that the degree of the $L_1$ regression is lower, 
and we only need to remove at most $O(1/\alpha)$ mass of the domain through 
guessing heavy coordinates.
Let $f_S(\bx)=\bigvee_{i \in S} \bx_i$ be any 
optimal monotone disjunction as before.
We set the radius parameter $r=n^{2/3}\alpha^{-1/3}$ 
and call a coordinate $i\in [n]$ heavy 
if $i\in S$ and $\E_{(\bx,y)\sim D}[\Ind(\bx_i=1)]\ge r/n$.
Given an input distribution $D$ of $(\bx,y)$, 
we will either (a) guess that some $i\in [n]$ is a heavy coordinate, 
or (b) guess that there is no heavy coordinate.
Suppose that with probability $1/2$ we sample $i\sim u([n])$ 
and guess that $i$ is heavy, and with probability $1/2$ 
we guess that there is no heavy coordinate. 
Then this guess is always correct with probability $\Omega(1/n)$. 
Suppose that our guess is correct.
\begin{enumerate}[leftmargin=*]
    \item If the algorithm guesses (a), then we can obtain a partial classifier $h(\bx)=1$ for any $\bx$ that has $\bx_i=1$ and remove these points from the domain. We can also remove coordinate $i$ from further consideration, 
    since all $\bx$ remaining have $\bx_i=0$.
    \item If the algorithm guesses (b), 
    then we can discard any coordinate $i$ such that $\E_D[\bx_i]\ge r/n$ 
    (as in \Cref{alg:learner-sq}), 
    since they cannot be in $S$ by the definition of (b). 
    After that, we get $\E_D[W_I(\bx)]\leq r$, 
    where $I$ is the set of coordinates remaining in consideration.
    By Markov's inequality, at least $1/2$ of the probability mass 
    satisfies $W_I(\bx)\leq 2 r$.
    Since we assumed that $\opt=O(1/\alpha)$, 
    this implies that the error conditioned on this $1/2$ 
    is still $O(1/\alpha)$. By \Cref{fct:l1-regression} and 
    \Cref{lem:approx-degree-coordinate}, 
    applying $L_1$-regression with degree-$cr^{1/2}\alpha^{-1/2}$ 
    (where $c$ is a sufficiently large constant) allows us 
    to learn within additive error $1/2-1/(c\alpha)$, 
    which suffices for weak learning.    
\end{enumerate}
Notice that once the algorithm guesses case (b) (and is correct), 
then we immediately get a weak learner.
However, if the algorithm guesses case (a), 
we will also be able to remove $r/n$ mass 
and obtain a partial classifier $h$ that agrees with $f_S$ on these mass. 
Since we assumed that $\opt=O(1/\alpha)$, 
this can happen at most $\opt/(r/n)=\opt/(n^{-1/3}\alpha^{-1/3})=n^{1/3}\alpha^{-2/3}$ 
times before we see a partial classifier that is non-trivially correlated 
with the labels on the mass we remove. 
This in turn gives a weak learner.
Given the weak learner in \Cref{thm:main-tradeoff-weak}, 
we can use standard boosting 
techniques~\cite{KMV:08, KalaiK09, Feldman:10ab}   
to get a $(\alpha,\eps)$-approximate agnostic learner.
The algorithm and the proofs of \Cref{thm:main-tradeoff} and \Cref{thm:main-tradeoff-weak} are deferred to Appendix~\ref{app:tradeoff}.

\section{Conclusions and Open Problems} \label{sec:concl}
In this work, we give an $2^{\tilde{O}(n^{1/3})}$ time algorithm 
for agnostically learning disjunctions, substantially improving
on the previous bound of $2^{\tilde{O}(n^{1/2})}$. As a corollary, 
we obtain the first super-polynomial separation between CSQ and SQ
in the context of agnostic learning. The obvious open question is whether significantly faster agnostic learners for disjunctions exist. 
We note that any improvement 
on the complexity of our algorithm  
would also imply a similar improvement 
on the complexity of (realizable) PAC 
learning of DNFs, which would in turn improve 
upon the previous bound of \cite{KS01}.
Finally, it is worth pointing out that 
even for the much broader class of linear threshold functions, the best known lower bounds 
(SQ and cryptographic) are only quasi-polynomial in $n$ 
(for constant $\eps$) (see~\cite{Daniely16, DKMR22, Tiegel2022}).  
Closing this large gap between known upper and lower bounds is a 
challenging direction for future work.

\bibliographystyle{alpha}
\bibliography{clean2}

\newcommand{\etalchar}[1]{$^{#1}$}
\begin{thebibliography}{FGRW09}

\bibitem[ABS10]{ABS10}
P.~Awasthi, A.~Blum, and O.~Sheffet.
\newblock Improved guarantees for agnostic learning of disjunctions.
\newblock In {\em {COLT} 2010 - The 23rd Conference on Learning Theory}, pages
  359--367, 2010.

\bibitem[BF02]{BshoutyFeldman:02}
N.~Bshouty and V.~Feldman.
\newblock On using extended statistical queries to avoid membership queries.
\newblock {\em Journal of Machine Learning Research}, 2:359--395, 2002.

\bibitem[Blu03]{Blum03}
A.~Blum.
\newblock Machine learning: My favorite results, directions, and open problems.
\newblock In {\em 44th Symposium on Foundations of Computer Science {(FOCS}
  2003)}, pages 11--14, 2003.

\bibitem[BT22]{BT22}
M.~Bun and J.~Thaler.
\newblock Approximate degree in classical and quantum computing.
\newblock {\em Found. Trends Theor. Comput. Sci.}, 15(3-4):229--423, 2022.

\bibitem[Che66]{Cheney:66}
E.~Cheney.
\newblock {\em {Introduction to Approximation Theory}}.
\newblock McGraw-Hill, New York, New York, 1966.

\bibitem[Dan16]{Daniely16}
A.~Daniely.
\newblock Complexity theoretic limitations on learning halfspaces.
\newblock In {\em Proceedings of the 48th Annual Symposium on Theory of
  Computing, {STOC} 2016}, pages 105--117, 2016.

\bibitem[DFT{\etalchar{+}}15]{DFTWW15}
D.~Dachman{-}Soled, V.~Feldman, L.{-}Y. Tan, A.~Wan, and K.~Wimmer.
\newblock Approximate resilience, monotonicity, and the complexity of agnostic
  learning.
\newblock In Piotr Indyk, editor, {\em Proceedings of the Twenty-Sixth Annual
  {ACM-SIAM} Symposium on Discrete Algorithms, {SODA} 2015}, pages 498--511.
  {SIAM}, 2015.

\bibitem[DKMR22]{DKMR22}
I.~Diakonikolas, D.~M. Kane, P.~Manurangsi, and L.~Ren.
\newblock Cryptographic hardness of learning halfspaces with massart noise.
\newblock In {\em Advances in Neural Information Processing Systems (NeurIPS)},
  volume~35, pages 3624--3636, 2022.
\newblock Available at https://doi.org/10.48550/arXiv.2207.14266.

\bibitem[DKPZ21]{DKPZ21}
I.~Diakonikolas, D.~M. Kane, T.~Pittas, and N.~Zarifis.
\newblock The optimality of polynomial regression for agnostic learning under
  gaussian marginals.
\newblock In {\em Proceedings of The 34th Conference on Learning Theory,
  {COLT}}, 2021.

\bibitem[Fel09]{Feldman:09}
V.~Feldman.
\newblock A complete characterization of statistical query learning with
  applications to evolvability.
\newblock In {\em Proc. 50th Symposium on Foundations of Computer Science
  (FOCS)}, pages 375--384, 2009.

\bibitem[Fel10]{Feldman:10ab}
V.~Feldman.
\newblock Distribution-specific agnostic boosting.
\newblock In {\em Proceedings of Innovations in Computer Science}, pages
  241--250, 2010.

\bibitem[Fel11]{Feldman11}
V.~Feldman.
\newblock Distribution-independent evolvability of linear threshold functions.
\newblock In {\em {COLT} 2011 - The 24th Annual Conference on Learning Theory},
  volume~19 of {\em {JMLR} Proceedings}, pages 253--272. JMLR.org, 2011.

\bibitem[FGRW09]{FGRW09}
V.~Feldman, V.~Guruswami, P.~Raghavendra, and Y.~Wu.
\newblock Agnostic learning of monomials by halfspaces is hard.
\newblock In {\em FOCS}, pages 385--394, 2009.

\bibitem[GKK20]{GKK20}
A.~Gollakota, S.~Karmalkar, and A.~R. Klivans.
\newblock The polynomial method is universal for distribution-free
  correlational {SQ} learning.
\newblock {\em CoRR}, abs/2010.11925, 2020.

\bibitem[Hau92]{Haussler:92}
D.~Haussler.
\newblock {Decision theoretic generalizations of the PAC model for neural net
  and other learning applications}.
\newblock {\em Information and Computation}, 100:78--150, 1992.

\bibitem[Kea98]{Kearns:98}
M.~J. Kearns.
\newblock Efficient noise-tolerant learning from statistical queries.
\newblock {\em Journal of the ACM}, 45(6):983--1006, 1998.

\bibitem[KK09]{KalaiK09}
A.~Kalai and V.~Kanade.
\newblock Potential-based agnostic boosting.
\newblock In {\em Advances in Neural Information Processing Systems 22: 23rd
  Annual Conference on Neural Information Processing Systems 2009}, pages
  880--888, 2009.

\bibitem[KKMS08]{KKMS08}
A.~Kalai, A.~Klivans, Y.~Mansour, and R.~Servedio.
\newblock Agnostically learning halfspaces.
\newblock {\em SIAM Journal on Computing}, 37(6):1777--1805, 2008.
\newblock Special issue for FOCS~2005.

\bibitem[KMV08]{KMV:08}
A.~Kalai, Y.~Mansour, and E.~Verbin.
\newblock On agnostic boosting and parity learning.
\newblock In {\em Proc.\ 40th Annual ACM Symposium on Theory of Computing
  (STOC)}, pages 629--638, 2008.

\bibitem[KS01]{KS01}
A.~Klivans and R.~Servedio.
\newblock {Learning DNF in time $2^{\tilde{O}(n^{1/3})}$}.
\newblock In {\em Proc.\ 33rd Annual ACM Symposium on Theory of Computing
  (STOC)}, pages 258--265. ACM Press, 2001.

\bibitem[KS10]{KlivansS10}
A.~R. Klivans and A.~A. Sherstov.
\newblock Lower bounds for agnostic learning via approximate rank.
\newblock {\em Comput. Complex.}, 19(4):581--604, 2010.

\bibitem[KSS94]{KSS:94}
M.~Kearns, R.~Schapire, and L.~Sellie.
\newblock {Toward Efficient Agnostic Learning}.
\newblock {\em Machine Learning}, 17(2/3):115--141, 1994.

\bibitem[NS94]{NisanSzegedy:94}
N.~Nisan and M.~Szegedy.
\newblock {On the degree of {B}oolean functions as real polynomials}.
\newblock {\em Comput. Complexity}, 4:301--313, 1994.

\bibitem[Pat92]{Paturi:92}
R.~Paturi.
\newblock {On the degree of polynomials that approximate symmetric {B}oolean
  functions}.
\newblock In {\em Proceedings of the 24th Symposium on Theory of Computing},
  pages 468--474, 1992.

\bibitem[Pel07]{Pel07}
D.~Peleg.
\newblock Approximation algorithms for the label-cover\({}_{\mbox{max}}\) and
  red-blue set cover problems.
\newblock {\em J. Discrete Algorithms}, 5(1):55--64, 2007.

\bibitem[She08]{sh08}
A.~Sherstov.
\newblock The pattern matrix method for lower bounds on quantum communication.
\newblock In {\em Proceedings of the 40th Annual {ACM} Symposium on Theory of
  Computing}, pages 85--94, 2008.

\bibitem[Tie23]{Tiegel2022}
S.~Tiegel.
\newblock Hardness of agnostically learning halfspaces from worst-case lattice
  problems.
\newblock In {\em The Thirty Sixth Annual Conference on Learning Theory,
  {COLT}}, volume 195 of {\em Proceedings of Machine Learning Research}, pages
  3029--3064, 2023.

\bibitem[Val84]{val84}
L.~G. Valiant.
\newblock A theory of the learnable.
\newblock In {\em Proc.\ 16th Annual ACM Symposium on Theory of Computing
  (STOC)}, pages 436--445. ACM Press, 1984.

\end{thebibliography}

\newpage

\appendix 

\section*{Appendix} 
\section{Omitted Proofs from Section~\ref{sec:non-sq-ub}} 
\label{app:exact-nonsq}

We formalize the notion of weak agnostic learning in 
the following definition.

\begin{definition} [$(\alpha,\gamma)$-weak learner] \label{def:weak-learner}
Let $\C$ be a concept class of Boolean-valued functions $f:X\to \{0, 1\}$. 
Given $\alpha,\gamma\in(0,1/2)$ where $\alpha> \gamma$ and a distribution $D$ on $X\times \{0, 1\}$ such that $\opt\eqdef\min_{f\in \C}\pr_{(\bx,y)\sim D}[f(\bx)\neq y]\leq 1/2-\alpha$, we call a hypothesis $h:X\to \{0, 1\}$ a $\gamma$-weak hypothesis if $\pr_{(\bx,y)\sim D}[h(\bx)\neq y]\leq 1/2-\gamma$.
Given i.i.d.\ samples from $D$, the goal of the learning algorithm $A$ is to output a $\gamma$-weak hypothesis with at least constant probability.
We will say that the algorithm $\A$ distribution-free $(\alpha,\gamma)$-weak agnostically learns $\C$.  
\end{definition}

Given a distribution-free $(\alpha,\gamma)$-weak agnostic learner for a concept class $\C$, it is possible to boost it to a learner that distribution-free agnostically learns $\C$ to additive error $\alpha$ as stated in the following fact (see Theorem 1.1 of \cite{Feldman:10ab}). We remark that the results from \cite{KMV:08} and \cite{KalaiK09} would also suffice for the same purpose.
\begin{fact} \label{fct:boost-additive}
    There exists an algorithm ABoost that for every concept class $\C$, given a distribution-free $(\alpha,\gamma)$-weak agnostic learning algorithm $\A$ for $\C$, distribution-free agnostically learns $\C$ to additive error $\alpha$. Furthermore, ABoost invokes $\A$ $O(\gamma^{-2} )$ times and runs in time $\poly\left (T,1/\gamma\right )$, where $T$ is the time and sample complexity of $\A$.
\end{fact}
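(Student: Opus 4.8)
The statement is a standard black-box agnostic boosting result, and I would establish it by running a potential-based \emph{smooth} boosting procedure, following the template of \cite{Feldman:10ab} (the constructions of \cite{KMV:08,KalaiK09} are essentially interchangeable here). The weak learner $\A$ is used purely as an oracle, so the concept class $\C$ enters only through the promise that on any distribution $D'$ with $\opt(D')\le 1/2-\alpha$ a single run of $\A$ returns, with constant probability, a hypothesis of error $\le 1/2-\gamma$ on $D'$. First I would dispose of the trivial regime $\opt\ge 1/2-\alpha$: there the better of the two constant hypotheses has error $\le 1/2\le\opt+\alpha$, so it is enough to output whichever of (this baseline, the boosting output) tests better at the end. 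The boosting loop maintains the running aggregate $F_t:=\sum_{s\le t}(2h_s-1)$ of the $\pm1$ encodings of the weak hypotheses returned so far. In round $t+1$ it reweights $D$ by a $[0,1]$-valued weight $w_t(\bx,y)$ that is a fixed nonincreasing function of the (suitably normalized) margin $(2y-1)F_t(\bx)$ --- concretely, minus the derivative of a bounded convex surrogate of the $0/1$ loss --- chosen by the standard construction so that $w_t$ is $\Omega(1)$ on every example currently misclassified by $\sgn(F_t)$ and its antiderivative dominates the $0/1$ loss up to the additive slack used at the end. Writing $\nu_t:=\E_D[w_t]$ for the ``active mass'' and $D_t$ for $D$ reweighted by $w_t/\nu_t$, one estimates $D_t$ from i.i.d.\ samples of $D$ by importance weighting (or rejection sampling); since $w_t\le 1$ the density of $D_t$ relative to $D$ is at most $1/\nu_t$, so each round costs only a $\poly(1/\gamma)$ factor in samples on top of the sample complexity of $\A$.

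Two invariants carry the argument. \emph{Smoothness keeps the weak-learnability precondition alive:} since $D_t$ has density at most $1/\nu_t$ with respect to $D$, every classifier $c$ has $\err_{D_t}(c)\le\err_D(c)/\nu_t$, hence $\opt(D_t)\le\opt/\nu_t$; as long as $\nu_t$ stays above a threshold of order $\opt/(1/2-\alpha)$ this is $\le 1/2-\alpha$, so $\A$ --- amplified by $O(\log(1/\gamma))$-fold repetition-and-testing so a union bound over all rounds is a constant --- returns $h_{t+1}$ with $\E_{D_t}[(2h_{t+1}-1)(2y-1)]\ge 2\gamma$. \emph{The potential decreases:} convexity of the surrogate gives, for the normalized step, $\Phi_{t+1}\le\Phi_t-\nu_t\,\E_{D_t}[(2h_{t+1}-1)(2y-1)]+O(\gamma^2)$ where $\Phi_t:=\E_D[\psi((2y-1)F_t(\bx))]$ and $\psi$ is the surrogate; combined with the correlation bound this is a drop of $\Omega(\gamma^2)$ whenever $\nu_t$ is above threshold. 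Since $\Phi_t$ is bounded in $[0,O(1)]$, at most $O(\gamma^{-2})$ rounds can occur before $\nu_t$ falls below the threshold and the loop halts.

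At that point the accuracy bound follows: small $\nu_t$ together with the dominance property of $\psi$ forces $\pr_{(\bx,y)\sim D}[\sgn(F_t(\bx))\ne y]\le\opt+\alpha$ --- which is exactly where the built-in slack of the surrogate is spent --- so the algorithm outputs $h=\sgn(F_t)$ (ties broken, say, toward $1$), of error $\le\opt+\alpha$ on $D$, after $O(\gamma^{-2})$ invocations of $\A$ and in total time $\poly(T,1/\gamma)$ with $T$ the time and sample complexity of $\A$. I expect the real difficulty to be in calibrating the weights $w_t$ so that two requirements hold \emph{simultaneously}: (a) the reweighted distributions never leave $\A$'s guarantee region, i.e.\ $\opt(D_t)\le 1/2-\alpha$ throughout; and (b) the $D$-mass still misclassified at termination exceeds $\opt$ by only the \emph{additive} amount $\alpha$ rather than by a multiplicative factor of $\opt$. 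Reconciling (a) and (b) is precisely the technical core of agnostic --- as opposed to realizable --- boosting, and it is what simultaneously dictates the smoothness requirement on the $w_t$ and the $O(\gamma^{-2})$ round count.
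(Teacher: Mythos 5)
The paper does not prove this statement: it is stated as a Fact and attributed directly to Theorem~1.1 of \cite{Feldman:10ab}, with the remark that \cite{KMV:08} and \cite{KalaiK09} would also serve. Your proposal is a reconstruction of exactly that smooth, potential-based agnostic boosting argument, so it is the same approach as the cited source the paper relies on, and there is nothing in the paper proper to compare it against; the sketch captures the three essential mechanisms (boundedness of the reweighting so $\opt(D_t)\le \opt/\nu_t$, an $O(\log(1/\gamma))$-fold confidence amplification of $\A$ so a union bound over rounds holds, and a bounded convex potential whose per-round drop caps the iteration count at $O(\gamma^{-2})$). The one place your account is thinner than the cited proof is the calibration linking (a) the smoothness threshold that keeps $\opt(D_t)\le 1/2-\alpha$ and (b) the terminal additive-$\alpha$ error bound: you state the drop is $\Omega(\gamma^2)$ ``whenever $\nu_t$ is above threshold,'' but since the threshold $\opt/(1/2-\alpha)$ can be arbitrarily small, the drop $\nu_t\cdot\Omega(\gamma)$ is not automatically $\Omega(\gamma^2)$; the actual argument of \cite{Feldman:10ab} gets the $O(\gamma^{-2})$ round count from the bounded-range potential together with a carefully chosen step size, not from a uniform per-round drop, and closes the terminal bound via the specific surrogate $\psi$. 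You flag this reconciliation as ``the technical core,'' which is fair, but it is precisely the part that would need to be filled in rather than gestured at.
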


\begin{proof} [Proof of \Cref{thm:main-intro-informal}]
Given the above setup, a direct application 
of \Cref{fct:boost-additive} with $\alpha=\eps$ 
and $\gamma=\eps/100$, 
gives a learning algorithm for distribution-free agnostic learning monotone disjunctions to error $\opt+\eps$ 
with sample and computational complexity 
$$\poly\left (2^{\Tilde{O}(n^{1/3}\log(1/\eps))},1/\eps\right)
=2^{\Tilde{O}\left (n^{1/3}\log(1/\eps)\right )} \;.$$
Since one can easily reduce learning general disjunctions (which includes negation of the variables) to learning monotone disjunctions by including negated variables as additional features, this completes the proof.
\end{proof}

\section{Omitted Proofs from \Cref{sec:tradeoff-alg}}
\label{app:tradeoff}
The algorithm establishing \Cref{thm:main-tradeoff-weak} is provided as \Cref{alg:tradeoff} below.
\begin{algorithm} 
    \caption{Trade-off Algorithm for Distribution-free Agnostic Learning Monotone Disjunctions (Weak Learner)}
    \label{alg:tradeoff}
        \textbf{Input:} $\alpha \in [64,\sqrt{n}]$ and SQ query access to a joint distribution $D$ of $(\bx,y)$ supported on $\{0, 1\}^{n}\times \{0, 1\}$, where the error of the optimal monotone disjunction $\opt\leq 1/\alpha$.
                
        \textbf{Output:} With probability at least $2^{-\Tilde{O}(n^{1/3}\alpha^{-2/3})}$, the algorithm outputs a hypothesis $h:\{0, 1\}^{n}\to \{0, 1\}$ such that $\pr_{(\bx,y)\sim D}[h(\bx)\neq y]\leq 1/2-1/\poly(n)$.

        \vspace{0.3cm}

        \begin{algorithmic} [1]
            \Statex $\triangleright$ For convenience of the algorithm description and analysis, we fix any optimal monotone disjunction $f_S(\bx)=\bigvee_{i\in S} \bx_i$, which is unknown to the algorithm. 
            \State Set $r\leftarrow n^{2/3}\alpha^{-1/3}$, $T\leftarrow cn/(\alpha r)$, where $c$ is a sufficiently large constant and initialize 
            $U_0\leftarrow \{0, 1\}^{n}$ and $I_0\leftarrow [n]$.
            \Statex $\triangleright$ $U$ and $I$ keep track of the remaining domain and coordinates.
            \For{$t=\{0,\cdots, T\}$}
                \State Define a coordinate $i$ as heavy if $i\in S$ and $\pr_{(\bx,y)\sim D} [\Ind (\bx_i=1)|\bx\in U_t]\ge r/n$.
                \State \algparbox{With probability $1/2$, 
                sample an $i\sim u(I_t)$, guess that $i$ is heavy and run \Call{RemoveHeavyCoordinate}{}.
                With the remaining $1/2$ probability, guess that there is no heavy coordinate and run \Call{$L_1$-RegressionOnLight}{}.} \label{line:tradeoff-guess}
            \State $U_{t+1}\leftarrow U_t\backslash B_t$.
            \State \label{line:tradeoff-error-estimation}\algparbox{Let $\hat{P}$ be the answer of $\stat_D(r/(100n
            ))$ for the query function                   
            \[q(\bx,y)=\Ind(\bx\in B_t\land h_t(\bx)=y)-\Ind(\bx\in B_t\land h_t(\bx)\neq y)\; .\]
            }
            \If{$\hat{P}_U\ge r/(4n)$} \label{line:tradeoff-hypothesis-check}
            \State \algparbox{
            Sample $c'\sim u(\{0,1\})$ and define $h:\{0, 1\}\to \{0, 1\}$ as 
            $h(\bx)=h'_t(\bx)$, and 
            $h(\bx)=c'$ otherwise. Return $h$.}\label{line:tradeoff-success}
            \EndIf
        \EndFor
        \end{algorithmic}
        \hrulefill
        \begin{algorithmic} [1]
            \Procedure{RemoveHeavyCoordinate}{} 
                \State $I_{t+1}\leftarrow I_t\backslash \{i\}$
                \State \algparbox{Let $B_t=\{\bx\in U_t|\bx_i=1\}$ and $h'_t:B_t\to \{0, 1\}$ be a partial classifier on $B_t$ defined as $h'_t(\bx)=1$ if $\bx\in B_t$.}
            \EndProcedure
            \vspace{3mm}
        \end{algorithmic}
        \hrulefill           
        \begin{algorithmic} [1]
            \Procedure{$L_1$-RegressionOnLight}{} 
                \State \algparbox{Let $\hat{P}_U$ and $\hat{P}_i$ (for all $i\in I$) be the answer of $\stat_D( r/(800n))$ for the query function $q_U(\bx,y)=\Ind(\bx\in U_t)$ and $q_i(\bx,y)=\Ind(\bx_i=1\land \bx\in U_t)$ respectively. \label{line:tradeoff-estimate-conditional} \Comment{$\hat{P}_i/\hat{P}_U$ will be the empirical estimate of $\pr_{(\bx,y)\sim D}[\bx_i=1\mid \bx\in U_t]$.}}
                \State \algparbox{$I_{t+1}\leftarrow I_t\backslash \{i\mid \hat{P}_i/\hat{P}_U\ge (1+1/100)r/n\}$.}
                \State \algparbox{Let $B_t=\{\bx\in U_t|W_{I_{t+1}}(\bx)\leq 2r\}$ and $D'$ be the joint distribution of $(\bx,y)\sim D$ conditioned on $\bx\in B_t$, which we have SQ query access to by asking queries on the distribution $D$.}
                \State \algparbox{Apply the degree-$\left (cr^{1/2}\alpha^{-1/2}\right )$ polynomial $L_1$-regression algorithm in \Cref{fct:l1-regression} on $D'$ to learn a hypothesis $h'_t$.}
                \Comment{ The degree of the $L_1$ regression is lower compared with \cref{alg:learner-sq} due to different $r$.}
            \EndProcedure
            \vspace{3mm}
        \end{algorithmic}
\end{algorithm}

We now give the proof for \Cref{thm:main-tradeoff-weak}.
\subsection{Proof for \Cref{thm:main-tradeoff-weak}}
\begin{proof} [Proof for \Cref{thm:main-tradeoff-weak}]
    The proof here is similar to that for \Cref{alg:learner-sq}. 
    For convenience of the analysis, fix $f_S$ be the same optimal hypothesis we fixed in the algorithm to maintain a consistent definition of heavy coordinates.
    Then notice that as we have discussed before, the guess in Line~\ref{line:tradeoff-guess} is always correct with $1/(2n)$ probability.
    With probability at least $2^{-\Tilde{O}(n^{1/3}\alpha^{-2/3})}$, all the guesses made in Line~\ref{line:tradeoff-guess} are correct,
    and it suffices for us to show that 
    given all the guesses are correct, then the algorithm outputs a hypothesis with error at most $1/2-1/\poly(n)$ with at least constant probability. For the rest of the proof, we assume that all the guesses in Line~\ref{line:tradeoff-guess} are correct.

    We first show that the algorithm always succeeds with at least a constant probability if it terminates via Line~\ref{line:tradeoff-hypothesis-check}.
    Since the ``if'' condition in Line~\ref{line:tradeoff-hypothesis-check} is satisfied, we get that with at least constant probability that
    \begin{align*}
        \pr_{(\bx,y)\sim D}[h(\bx)\neq y]=&\pr_{(\bx,y)\sim D}[h(\bx)\neq y\land \bx\in B_t]+\pr_{(\bx,y)\sim D}[h(\bx)\neq y\land \bx\not\in B_t]\\
        \le &\frac{1}{2}\pr_{(\bx,y)\sim D}[\bx\in B_t]-\Omega(r/n)+\frac{1}{2}\pr_{(\bx,y)\sim D}[\bx\not\in B_t]\\
        \le &1/2-1/\poly(n)\; ,
    \end{align*}
    where the second from the last inequality follows from that we sampled $c'\sim u(\{0,1\})$.

    Given the statement in the above paragraph, it suffices for us to prove that the algorithm will terminate via Line~\ref{line:tradeoff-hypothesis-check} deterministically if all guesses in Line~\ref{line:tradeoff-guess} are correct.
    We first give the following lemma, which is an analog of \Cref{lem:sq-algorithm-error-progress}.
    \begin{lemma} \label{fct:tradeoff-iteration-removed-mass}
        In \Cref{alg:tradeoff}, given all the guesses in Line~\ref{line:tradeoff-guess} are correct, and suppose that the algorithm did not terminate via Line~\ref{line:tradeoff-success} in the first $t$ iterations.
        Then for all the first $t$ iterations, the algorithm must have guessed that there is a heavy coordinate and $\pr_{(\bx,y)\sim D}[\bx\in U_{t+1}]\geq 1/2$.
    \end{lemma}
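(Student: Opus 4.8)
The plan is to prove \Cref{fct:tradeoff-iteration-removed-mass} by induction on $t$. The inductive hypothesis for $t-1$ states that, if the algorithm has not terminated via Line~\ref{line:tradeoff-success} in iterations $0,\dots,t-1$, then each of those iterations guessed a heavy coordinate and $\pr_{(\bx,y)\sim D}[\bx\in U_t]\ge 1/2$; the base case $t=0$ is immediate since $U_0=\{0,1\}^n$. For the inductive step, assume the algorithm did not terminate in iterations $0,\dots,t$; in particular it did not terminate in $0,\dots,t-1$, so the inductive hypothesis applies.

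The first and main task is to rule out that iteration $t$ guesses ``no heavy coordinate''. Suppose it did; since all guesses are assumed correct, there is genuinely no heavy coordinate in $I_t$, and \Call{$L_1$-RegressionOnLight}{} runs. Applying \Cref{fct:SQ-oracle-conditional} (with tolerance $r/(800n)$ and the lower bound $\hat P_U\ge \pr_{(\bx,y)\sim D}[\bx\in U_t]-r/(800n)\ge 1/2-r/(800n)$ supplied by the inductive hypothesis), every coordinate discarded in this procedure has $\pr_{(\bx,y)\sim D}[\bx_i=1\mid\bx\in U_t]\ge r/n$, hence cannot be heavy, hence cannot lie in $S$; combined with the fact that coordinates of $S$ removed in earlier (heavy-coordinate) iterations never appear in $U_t$, we get $f_{S\cap I_{t+1}}(\bx)=f_S(\bx)$ for all $\bx\in U_t$, exactly as in the analysis of \Cref{alg:learner-sq}. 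The surviving coordinates satisfy $\pr_{(\bx,y)\sim D}[\bx_i=1\mid\bx\in U_t]\le(1+o(1))r/n$, so $\E_{(\bx,y)\sim D}[W_{I_{t+1}}(\bx)\mid\bx\in U_t]\le(1+o(1))r$ and Markov's inequality gives $\pr[\bx\in B_t\mid\bx\in U_t]=\Omega(1)$; together with $\pr[\bx\in U_t]\ge 1/2$ this yields $\pr[\bx\in B_t]=\Omega(1)$, whence the error of $f_S$ conditioned on $B_t$ is at most $\opt/\pr[\bx\in B_t]=O(1/\alpha)$. Now by \Cref{lem:approx-degree-coordinate} (with weight bound $2r$, in the regime $\eps\ge 1/4$, which holds because $\alpha\ge 64$ and $c$ is a fixed constant) together with \Cref{fct:l1-regression}, a degree-$(cr^{1/2}\alpha^{-1/2})$ polynomial $\eps$-approximates monotone disjunctions on $B_t$ for any $\eps\ge 1/2-\Omega(c^2/\alpha)$, so $h'_t$ has error at most $O(1/\alpha)+1/2-\Omega(c^2/\alpha)\le 1/2-\beta/\alpha$ on $B_t$ for a constant $\beta$ that can be made as large as we like by taking $c$ a sufficiently large constant. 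Consequently the true value of the query in Line~\ref{line:tradeoff-error-estimation} equals $\pr[\bx\in B_t]\cdot (2\beta/\alpha)=\Omega(1/\alpha)$, and since $\alpha\le\sqrt n$ we have $1/\alpha\ge r/n=n^{-1/3}\alpha^{-1/3}$, so this value exceeds $r/(4n)+r/(100n)$; hence the estimate $\hat P$ passes the test in Line~\ref{line:tradeoff-hypothesis-check}, and the algorithm terminates in iteration $t$, a contradiction. Therefore iteration $t$ also guesses a heavy coordinate.

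It remains to bound $\pr[\bx\in U_{t+1}]$. By the above, iterations $0,\dots,t$ all guess a heavy coordinate; since the guess at iteration $j$ is correct, the chosen index $i_j\in S$, so $f_S\equiv 1$ on $B_j=\{\bx\in U_j:\bx_{i_j}=1\}$, and thus the partial classifier $h'_j$ agrees with $f_S$ on $B_j$. Non-termination at iteration $j$ means $\hat P<r/(4n)$, so, accounting for the tolerance $r/(100n)$, the true correlation satisfies $\pr[\bx\in B_j]-2\pr[\bx\in B_j\land f_S(\bx)\ne y]\le (1/4+1/100)r/n$, i.e.\ $\pr[\bx\in B_j]\le 2\pr[\bx\in B_j\land f_S(\bx)\ne y]+(1/4+1/100)r/n$. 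Summing over $j=0,\dots,t$ and using that the sets $B_j$ are pairwise disjoint (so $\sum_{j\le t}\pr[\bx\in B_j\land f_S(\bx)\ne y]\le\opt\le 1/\alpha$), together with $t+1\le T+1$ and $T\cdot r/n=c/\alpha$ and $r/n\le 1/\alpha$, we get $\sum_{j=0}^t\pr[\bx\in B_j]\le 2/\alpha+O(c/\alpha)=O(1/\alpha)\le 1/2$, where the last step uses $\alpha\ge 64$ and $c=O(1)$. Since $U_{t+1}=\{0,1\}^n\setminus\bigcup_{j\le t}B_j$, this gives $\pr[\bx\in U_{t+1}]=1-\sum_{j=0}^t\pr[\bx\in B_j]\ge 1/2$, completing the induction.

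I expect the main obstacle to be the step that rules out a correct ``no heavy coordinate'' guess at iteration $t$: this requires chaining the conditional-estimation accuracy of \Cref{fct:SQ-oracle-conditional}, the structural identity $f_{S\cap I_{t+1}}=f_S$ on $U_t$, the Markov bound on the light mass, and the quantitative approximation-degree bound of \Cref{lem:approx-degree-coordinate}, and then checking that the resulting $\Omega(1/\alpha)$ correlation actually clears the $r/(4n)$ threshold of Line~\ref{line:tradeoff-hypothesis-check} — which is precisely where the assumption $\alpha\le\sqrt n$ is used. The remaining parts are routine bookkeeping with the recursion depth $T=cn/(\alpha r)$ and the per-iteration loss $O(r/n)$.
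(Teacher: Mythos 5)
Your inductive structure and the way you rule out a correct ``no heavy coordinate'' guess at iteration $t$ are essentially the paper's argument (accurate conditional estimation via \Cref{fct:SQ-oracle-conditional}, the identity $f_{S\cap I_{t+1}}=f_S$ on $U_t$, Markov to get $\pr[\bx\in B_t]=\Omega(1)$, and then $L_1$-regression plus \Cref{lem:approx-degree-coordinate} to make the correlation in Line~\ref{line:tradeoff-error-estimation} clear the $r/(4n)$ threshold).

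For the second part, $\pr[\bx\in U_{t+1}]\ge 1/2$, you diverge from the paper and your route has a real gap. You sum the per-iteration bound $\pr[\bx\in B_j]\le 2\pr[\bx\in B_j\wedge f_S(\bx)\ne y]+(1/4+1/100)(r/n)$ over $j\le t$ and then bound $t+1\le T+1$ with $T=cn/(\alpha r)$, arriving at $\sum_j\pr[\bx\in B_j]\le 2/\alpha + O(c/\alpha)$, which you assert is $\le 1/2$ ``since $\alpha\ge 64$ and $c=O(1)$.'' But $c$ is a \emph{sufficiently large} universal constant (chosen elsewhere in the proof so that, e.g., the final contradiction $\opt\ge \Omega(Tr/n)=\Omega(c/\alpha)>1/\alpha$ holds and so that the $L_1$-regression degree suffices), while $\alpha\ge 64$ is a fixed threshold independent of $c$; so $O(c/\alpha)\le 1/2$ is not automatic and the slop accumulated over up to $T+1$ iterations can in principle exceed $1/2$. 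The paper avoids this entirely by arguing by contradiction without ever invoking $T$: assuming $\pr[\bx\in U_{t+1}]<1/2$, i.e.\ $\sum_{i\le t}\pr[\bx\in B_i]\ge 1/2$, it lower-bounds the \emph{conditional} error $\pr[f_S(\bx)\ne y\mid \bx\in B_i]\ge 1/5$ for every $i$ (using $\pr[\bx\in B_i]\ge\tfrac12(r/n)$ from the heavy-coordinate definition together with the inductive hypothesis $\pr[\bx\in U_i]\ge 1/2$, and the non-termination bound on the correlation), so $\opt\ge \tfrac12\cdot\tfrac15=1/10$, contradicting $\opt\le 1/\alpha\le 1/64$. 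Your approach can be repaired along the same lines: combine the two per-iteration inequalities to get $\pr[\bx\in B_j\wedge f_S\ne y]\ge 0.12\,(r/n)$, which bounds the number of non-terminating iterations by $\opt/(0.12\,r/n)=O(n/(\alpha r))$ \emph{independently of $c$}, and then your summation gives $\sum_j\pr[\bx\in B_j]=O(1/\alpha)\le 1/2$ with an explicit, $c$-free constant. But as written the $t+1\le T+1$ step is the weak link.
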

    \begin{proof}
        We will prove the statement by induction. The statement is trivially true for $t=-1$.
        Given the statement is true for any $i\leq t-1$, we will prove that the statement is true for $t$.
        Suppose the algorithm did not terminate in the first $t$ iterations, then since the statement is true for $i\leq t-1$, we have that all the first $t-1$ iterations must all guess that there is a heavy coordinate and $\pr_{(\bx,y)\sim D}[\bx\in U_{i}]\geq 1/2$ for all $i\leq t$.
        
        We first show that the $t$-th iteration also guesses that there is a heavy coordinate.
        By \Cref{fct:SQ-oracle-conditional}, we have that 
        $|\hat{P}_i/\hat{P}_U-\pr_{(\bx,y)\sim D}[\bx_i=1\mid \bx\in U_t]|\leq (1/100)(r/n)$,
        therefore, 
        \[\E_{(\bx,y)\sim D}[W_{I_{t+1}}(\bx)\mid \bx\in  U_{t}]\leq (4/3)r\; .\]
        Then from $\pr_{(\bx,y)\sim D}[\bx\in U_{t}]\geq 1/2$, the definition of $B_t$ and Markov's inequality, we get  $\pr_{(\bx,y)\sim D}[\bx\in B_t]\geq 1/6$.
        Assume for the purpose of contradiction that the $t$-th iteration guesses that there is no heavy coordinate.
        Then since $\opt\leq 1/\alpha$, we have $\pr_{(\bx,y)\sim D'}[f_S(\bx)\neq y]\leq 6/\alpha<1/8$.        
        Notice that any $i\not\in I_{t+1}$ must be removed by \textsc{RemoveHeavyCoordinate} of previous iterations since all previous iterations guess that there is a heavy coordinate.
        Furthermore, since \textsc{RemoveHeavyCoordinate} also removes any $\bx$ such that $\bx_i=1$ from $U$, we must have that for any $\bx\in U_t$ and $i\not \in I_{t+1}$, $\bx_i=0$.
        This implies that for any $\bx\in B_t$, $W_{S}(\bx)\le W_{I_{t+1}}(\bx)\leq 2r$. 
        Then from \Cref{lem:approx-degree-coordinate} and \Cref{fct:l1-regression}, we have that the $L_1$-regression in \textsc{$L_1$-RegressionOnLight} will learn $f_S$ to additive error $1/2-7/\alpha$ on $D'$.
        This implies that 
        \begin{align*}            
        \pr_{(\bx,y)\sim D'}[h'_t(\bx)\neq y]
        \leq &\pr_{(\bx,y)\sim D'}[f_S(\bx)\neq y]+(1/2-7/\alpha)\\
        \leq &6/\alpha+1/2-7/\alpha\leq 1/2-1/\alpha\; .
        \end{align*}
        Therefore, we have $\E_{(\bx,y)\sim D}[q(\bx,y)]\geq 2\E_{(\bx,y)\sim D}[\bx\in B_t]/\alpha\geq 1/(3\alpha)$ where $q$ is the query function in Line~\ref{line:tradeoff-error-estimation}, and $\hat{P}$ must satisfy the ``if'' condition in Line~\ref{line:tradeoff-hypothesis-check}.
        Then the algorithm will terminate in iteration $t$. 
        This contradicts the assumption, and therefore, in the $t$-th iteration, the algorithm must also guess that there is a heavy coordinate.
        
        Now it only remains to show $\pr_{(\bx,y)\sim D}[\bx\in U_{t+1}]\geq 1/2$.
        Given the algorithm guesses that there is a heavy coordinate for the first $t$ iterations, we assume for the purpose of contradiction that $\pr_{(\bx,y)\sim D}[\bx\in U_{t+1}]<1/2$.
        Then using the fact that the ``if'' condition in Line~\ref{line:tradeoff-hypothesis-check} is never satisfied for the first $t$ iterations, we have that
        \begin{align*}  
            &\pr_{(\bx,y)\sim D}[f_S(\bx)\neq y]\\
            \ge &\sum_{i\in [t]}\pr_{(\bx,y)\sim D} [f_S(\bx)\neq y\land \bx\in B_i ]\\
            = &\sum_{i\in [t]}\pr_{(\bx,y)\sim D} [h'_t(\bx)\neq y\land \bx\in B_i ]\\
            = &\sum_{i\in [t]}\pr_{(\bx,y)\sim D} [h'_t(\bx)\neq y|\bx\in B_i]\pr_{(\bx,y)\sim D}[\bx\in B_i ]\\
            \ge &\min_i(\pr_{(\bx,y)\sim D} [h'_t(\bx)\neq y|\bx\in B_i])\sum_{i\in [t]}\pr_{(\bx,y)\sim D}[\bx\in B_i ]\\
            \ge &\min_i(\pr_{(\bx,y)\sim D} [f_S(\bx)\neq y|\bx\in B_i])/2\; .
        \end{align*}
        Notice that from the definition of heavy coordinates, we have 
        \[\pr_{(\bx,y)\sim D}[\bx\in B_i]\ge \pr_{(\bx,y)\sim D}[\bx\in U_i](r/n)=(1/2)(r/n)\; .\]
        Then, 
        \begin{align*}
            &\pr_{(\bx,y)\sim D}[h'_t(\bx)\neq y\mid \bx\in B_i]\\
            =&\pr_{(\bx,y)\sim D}[h'_t(\bx)\neq y\land \bx\in B_i]/\pr_{(\bx,y)\sim D}[\bx\in B_i]\\
            =&\frac{\pr_{(\bx,y)\sim D}[\bx\in B_i]-(\pr_{(\bx,y)\sim D}[h'_t(\bx)= y\land \bx\in B_i]-\pr_{(\bx,y)\sim D}[h'_t(\bx)\neq y\land \bx\in B_i])}{2\pr_{(\bx,y)\sim D}[\bx\in B_i]}\\
            \geq & 1/2-\frac{(1/4)(r/n)+(1/100)(r/n)}{r/n}\geq 1/5\; .&
        \end{align*}
        Therefore, plugging it back gives 
        $\pr_{(\bx,y)\sim D}[f_S(\bx)\neq y]\geq 1/10$.
        This contradicts the assumption that $\opt\leq 1/\alpha\leq 1/64$ and therefore, we must have 
        ${\pr_{(\bx,y)\sim D}[\bx\in U_{t+1}]}\geq 1/2$.
        This completes the proof.
    \end{proof}

    \Cref{fct:tradeoff-iteration-removed-mass} implies that once the algorithm guesses that there is no heavy coordinate, the algorithm must terminate via Line~\ref{line:tradeoff-hypothesis-check}. Therefore, we only need to show that the algorithm cannot keep guessing that there is a heavy coordinate for $T$ iterations.
    Notice that since all guesses are correct and all guesses are that there is a heavy coordinate, from the definition of heavy coordinates, we have $h_i(\bx)=f_S(\bx)$ for any $\bx\in B_i$ for all iteration $i$.
    Given the algorithm does not terminate for $T$ iterations and the ``if'' condition in Line~\ref{line:tradeoff-hypothesis-check} is never satisfied for these $T$ iterations, we would have
    \begin{align*}    
    \pr_{(\bx,y)\sim D}[f_S(\bx)\neq y]
    \ge &\sum_{i=1}^T\pr_{(\bx,y)\sim D}[h_i(\bx)\neq y\land \bx\in B_i]\\
    \ge &\sum_{i=1}^T (\pr_{(\bx,y)\sim D}[\bx\in B_i]\\
    &-(\pr_{(\bx,y)\sim D}[h_i(\bx)= y\land \bx\in B_i]-\pr_{(\bx,y)\sim D}[h_i(\bx)\neq y\land \bx\in B_i])  )\\
    \ge &\sum_{i=1}^T(r/(2n)-r/(3n))
    =\Omega\left (Tr/n\right )=c/\alpha\; ,
    \end{align*}
    where $c$ is a sufficiently large constant.
    This contradicts the assumption that $\opt\leq 1/\alpha$.
    This completes the proof that given all the guesses in Line~\ref{line:tradeoff-guess} are correct, \Cref{alg:tradeoff} will, with at least constant probability, outputs a hypothesis $h$ such that $\pr_{(\bx,y)\sim D}[h(\bx)\neq y]\leq 1/2-1/\poly(n)$.

    It only remains to verify the query and computational complexity. Notice that the smallest query tolerance that the algorithm directly asked is at least $1/\poly(n)$, and the smallest query tolerance asked by the $L_1$-regression is at least $d^{-cr^{1/2}\alpha^{-1/2}}=2^{-\Tilde{O}(n^{1/3}\alpha^{-2/3})}$. Furthermore, the computational complexity of the algorithm is $Tn^{O(r^{1/2}\alpha^{-1/2})}=2^{\Tilde{O}(n^{1/3}\alpha^{-2/3})}$, and the total number of queries the algorithm asks must be bounded by the same quantity.
    This completes the proof for \Cref{alg:tradeoff}.
\end{proof}

\subsection{Proof of \Cref{thm:main-tradeoff}}

    Given \Cref{thm:main-tradeoff-weak}, we are now ready to prove \Cref{thm:main-tradeoff}.
    We will first need the following fact about boosting for agnostic learning to multiplicative error from \cite{Feldman:10ab}.
    \begin{fact} \label{fct:boost-approximate}
        There exists an algorithm ABoostDI that for every concept class $C$ over $X$, given a distribution independent $(\alpha,\gamma)$-weak agnostic learning algorithm $A'$ for $C$, for every distribution $A=(D,f)$ over $X$ and $\eps>0$, produces a hypothesis $h$ such that $\pr_{(\bx,y)\sim D}[f(\bx)\neq y]\leq \opt/(1-2\alpha)+\eps$. Further, ABoostDI invokes $A'$ $O(\gamma^{-2}\Delta^{-1}\log(1/\Delta))$ times for $\Delta=\opt/(1-2\alpha)$ and runs in time $\poly(T,1/\gamma,1/\eps)$, where $T$ is the running times of $A'$.
    \end{fact}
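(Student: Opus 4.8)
The plan is to realize ABoostDI as a \emph{smooth agnostic boosting} procedure, adapting a SmoothBoost/MadaBoost-style scheme to the agnostic setting. Encode labels and hypotheses in $\{\pm 1\}$ and maintain a real-valued aggregate $F_t=\eta\sum_{s\le t}h_s$ with final output $h=\sign(F_T)$, where $\eta=\Theta(\gamma)$ is a fixed step size. At round $t$ we reweight the target distribution $D$ by a bounded, non-increasing function of the current margin $y\,F_{t-1}(\bx)$ --- concretely a truncated exponential $w_t(\bx,y)=\min\{1,\exp(-\,y\,F_{t-1}(\bx))\}$ --- and normalize to a probability distribution $D_t$; we then invoke the distribution-free weak learner $A'$ on $D_t$, amplify its constant success probability by repetition, obtain a hypothesis $h_t$, and update $F_t$. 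Because $A'$ is distribution-free it handles each of the reweighted $D_t$, which is exactly why distribution-free \emph{weak} learning suffices for distribution-free \emph{strong} (approximate) learning. All of $D_t$, the stopping test, and the final output selection are implemented from a $\poly(T,1/\gamma,1/\eps)$-size i.i.d.\ sample of $D$, with estimation error controlled by \Cref{fct:vc-inequality} and Chernoff bounds.

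The analysis rests on two invariants that pull in opposite directions. \emph{(i) Smoothness keeps the weak-learner precondition valid.} Truncating $w_t$ at $1$ yields a density bound $D_t(\bx)/D(\bx)\le 1/\theta_t$, where $\theta_t$ is a constant times the unnormalized total mass at round $t$, and one arranges $\theta_t\ge\theta$ for a target smoothness $\theta$ until the scheme halts. Hence $\opt_{D_t}\le\opt/\theta$, so whenever $\theta\ge\opt/(1/2-\alpha)$ we get $\opt_{D_t}\le 1/2-\alpha$, the guarantee of \Cref{def:weak-learner} applies, and $A'$ (after $O(\log(1/\Delta))$ repetitions to boost its constant success probability over all rounds) returns $h_t$ with $\E_{D_t}[y\,h_t(\bx)]\ge 2\gamma$. \emph{(ii) Progress.} A potential argument on the unnormalized mass $\sum_{(\bx,y)}w_t(\bx,y)$ (a smoothed $0/1$ loss of $F_t$) shows that each successful round with advantage $\ge 2\gamma$ shrinks the potential by $\Omega(\gamma^2\theta)$, while the potential is bounded below by $\Pr_{(\bx,y)\sim D}[\sign(F_t)\ne y]-\theta/2$; hence the scheme halts after $O(\gamma^{-2}\theta^{-1}\log(1/\Delta))$ invocations of $A'$ with $\Pr_{(\bx,y)\sim D}[\sign(F_T)\ne y]\le\theta/2+O(\eps)$. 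Taking $\theta$ as small as the precondition allows, $\theta=\opt/(1/2-\alpha)=2\Delta$, gives error $\Delta+O(\eps)$ and round count $O(\gamma^{-2}\Delta^{-1}\log(1/\Delta))$; rescaling $\eps$ absorbs the $O(\cdot)$'s. Since $\opt$ (hence the right $\theta$) is unknown, one runs the loop once for each $\theta$ in a geometric grid between $\eps$ and $1/2$, empirically tests every resulting hypothesis on a fresh sample, and keeps the best --- an extra $O(\log(1/\eps))$ factor absorbed into $\poly(1/\eps)$.

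The main obstacle is reconciling invariants (i) and (ii): progress wants $D_t$ concentrated on the examples currently misclassified by $F_{t-1}$, but concentration raises $\opt_{D_t}$ and can break the precondition $\opt_{D_t}\le 1/2-\alpha$ that $A'$ needs; truncated (smooth) reweighting is precisely the device that caps this concentration, and the quantitative shape of the statement --- the blow-up $1/(1-2\alpha)$ in the error and the $\Delta^{-1}$ in the round count --- is exactly what falls out of how finely one may smooth subject to $\opt/\theta\le 1/2-\alpha$. A secondary subtlety is that the agnostic weak-learner guarantee is weaker than classical weak PAC learning: it only promises an advantage against the \emph{best} classifier in $\C$, and only when the reweighted optimum is below $1/2-\alpha$. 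So in a round where the precondition cannot be certified we must fall back on a stopping alternative: if $A'$ fails on $D_t$ --- detected by empirically estimating $h_t$'s error on $D_t$ --- then necessarily $\opt_{D_t}>1/2-\alpha$, which by the smoothness bound forces $\Pr_{(\bx,y)\sim D}[\sign(F_{t-1})\ne y]\le\theta/2+O(\eps)$, so halting there is safe. Pushing the standard smooth-boosting potential computation through with these modified weak-learner guarantees, and bookkeeping the sample complexity of every empirical estimate against \Cref{fct:vc-inequality}, completes the proof.
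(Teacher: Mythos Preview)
The paper does not prove \Cref{fct:boost-approximate}; it is quoted as a black box from \cite{Feldman:10ab}. Your proposal is essentially a reconstruction of the smooth-boosting argument underlying that reference (truncated weights to cap the density ratio, distribution-free weak learner invoked on the reweighted distribution, potential/weight-mass argument for progress, geometric search over the unknown $\opt$), so in spirit you are reproducing the cited proof rather than diverging from the paper.

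One quantitative point deserves care. Your claim that the potential is bounded below by $\Pr_D[\sign(F_t)\ne y]-\theta/2$, and hence that halting yields error $\le \theta/2$, is not what the standard truncated-exponential analysis gives: every misclassified example has $yF_t(\bx)<0$ and thus $w_t=1$, so one only gets $\theta_t\ge \Pr_D[\sign(F_t)\ne y]$ with no $-\theta/2$ slack. The same issue recurs in your ``stopping alternative'': from $\opt_{D_t}>1/2-\alpha$ and the density bound $D_t/D\le 1/\theta_t$ you can conclude $\theta_t<\theta$, hence error $<\theta=2\Delta$, but not error $<\theta/2=\Delta$. Matching the stated constant $\opt/(1-2\alpha)$ rather than $2\opt/(1-2\alpha)$ requires a sharper weighting/potential choice (as in Feldman's actual analysis); with the scheme you wrote down you are a factor of two off. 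This does not affect the downstream application in \Cref{thm:main-tradeoff}, where any constant in the approximation factor is absorbed, but if you want the fact exactly as stated you should revisit that step.
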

    We then prove \Cref{thm:main-tradeoff}.
    \begin{proof} [Proof of \Cref{thm:main-tradeoff}]
    Given \cref{thm:main-tradeoff-weak}, to $(\alpha',\eps)$-approximate agnostically learn disjunctions, we first, without loss of generality, assume that $\alpha'\opt\geq\eps$ (if this is not true, take a new $\eps'=\eps/4$ and $\alpha''$ such that $\eps/4\le\alpha''\opt\le\eps/2$).

    Then we can simply apply \Cref{fct:boost-approximate} and take the $\alpha$ in \Cref{fct:boost-approximate} as $(1/2-\alpha'/2)$ and the boosting algorithm ABoostDI invokes \Cref{alg:tradeoff} at most $\poly(n/\eps)$ times and its own runtime is $\poly(T,1/\gamma,1/\eps)=2^{\Tilde{O}(n^{1/3}\alpha^{-2/3})}\poly(1/\eps)$. This gives the algorithm for agnostic learning monotone disjunctions to error $\alpha'\opt+\eps$.
    Notice that this algorithm is still SQ since the boosting algorithm does not require sample access to the distribution.
    As we have argued before,
    one can easily reduce learning general disjunctions (which includes negation of the variables) to learning monotone disjunctions by including negated variables as additional features. This completes the proof.
    \end{proof}

\section{CSQ Complexity of Distribution-free Agnostic Learning Disjunctions} 
\label{sec:csq-lb}
In this section, we characterize the complexity of distribution-free agnostic learning disjunctions in the \emph{Correlational Statistical Query} (CSQ) model.

\paragraph{Basics on CSQ Model}
In the context of \Cref{def:sq}, 
a \emph{Correlational Statistical Query} is 
a bounded function $q:X \rightarrow [-1,1]$. 
We define $\mathrm{CSTAT}(\tau)$ to be the oracle 
that, given any such query $q$, outputs a value $v\in [-1,1]$ 
such that $|v-\E_{(\bx,y)\sim D}[(2y-1)q(\bx)]|\leq\tau$, 
where $\tau>0$ is the \emph{tolerance} parameter of the query.
A \emph{Correlational Statistical Query (CSQ)) algorithm} 
is an algorithm whose objective is to learn some information about an unknown 
distribution $D$ by making adaptive calls to the corresponding 
$\mathrm{CSTAT}(\tau)$ oracle.
The query complexity of a CSQ algorithm is defined as $m/\tau^2$, where $m$ is the number of queries and $\tau$ is the smallest tolerance of queries the algorithm calls to the corresponding 
$\mathrm{CSTAT}(\tau)$ oracle.

It is well-known that CSQ queries are a special case of SQ queries, 
and, therefore, have weaker power.
In particular, any SQ query function $q_{\mathrm{sq}}:X\times \{0,1\}\to [-1,1]$ 
can always be decomposed to $q_{\mathrm{sq}}(\bx,y)=q_1(\bx,y)+q_2(\bx,y)$, 
where $q_1(\bx,y)=(q_{\mathrm{sq}}(\bx,0)+q_{\mathrm{sq}}(\bx,1))/2$ 
is a query function independent of the label $y$, 
and $q_2(\bx,y)=(2y-1)(-q_{\mathrm{sq}}(\bx,0)+q_{\mathrm{sq}}(\bx,1))/2$ 
is equivalent to a CSQ query.
An intuitive interpretation is that, compared with the SQ model, 
the CSQ model loses exactly the power to make label-independent queries 
about the distribution, i.e., the power to ask queries about the marginal 
distribution of $\bx$.

\medskip

\paragraph{CSQ Upper Bound on Weak Agnostic Learning}
We note that there is a CSQ weak agnostic learner with $2^{\tilde{O}(n^{1/2}\log(1/\eps))}$ time and query complexity that 
outputs a hypothesis with error $1/2-\Omega(\eps)$ 
given that $\opt\leq 1/2-\eps$.
\begin{fact} \label{fact:csq-eps-weak-learning}
    Let $D$ be an unknown distribution supported on 
    $\{0, 1\}^{n}\times \{0, 1\}$ and $\eps\in (0,1/2)$. 
    Suppose there is a monotone disjunction 
    $f: \{0, 1\}^{n}\to \{0, 1\}$ such that 
    $\pr_{(\bx,y)\sim D}[f(\bx)\neq y]\leq\ 1/2-\eps$.
    There is an algorithm that makes at most $q$ queries to $\mathrm{CSTAT}_D(\tau)$, and deterministically 
    returns a hypothesis $h:\{0, 1\}^{n}\to \{0, 1\}$ such that $\pr_{(\bx,y)\sim D}[h(\bx)\neq y]\leq 1/2-\Omega(\eps)$, where $\max(q,1/\tau)=2^{\Tilde{O}(n^{1/2}\log(1/\eps))}$.
\end{fact}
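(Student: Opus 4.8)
The plan is to realize a purely correlational version of the $L_1$-polynomial regression guarantee (\Cref{fct:l1-regression}). As in \Cref{sec:non-sq-ub}, it suffices to treat monotone disjunctions (negated variables become extra features) and to produce a weak hypothesis. Applying \Cref{lem:approx-degree-coordinate} with $I=[n]$, $X=\{0,1\}^n$, and $r=n$, every monotone disjunction $f$ is approximated on the whole cube by a polynomial $p^\ast$ of degree $d=O(n^{1/2}\log(1/\eps))$ with $|p^\ast(\bx)-f(\bx)|\le \eps/4$ for all $\bx\in\{0,1\}^n$. Writing $q^\ast:=2p^\ast-1$, we then have $\|q^\ast\|_\infty\le 1+\eps/2$ on the cube and, since $\pr_{(\bx,y)\sim D}[f(\bx)\ne y]\le 1/2-\eps$ gives $\E_{(\bx,y)\sim D}[(2y-1)(2f(\bx)-1)]\ge 2\eps$,
\[
\E_{(\bx,y)\sim D}\!\left[(2y-1)\,q^\ast(\bx)\right]\ \ge\ 2\eps-\tfrac{\eps}{2}\ =\ \tfrac{3\eps}{2}\; .
\]
So there \emph{exists} a bounded degree-$d$ polynomial that is $\Omega(\eps)$-correlated with the labels; the entire task is to \emph{find} one using only correlational queries.

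First I would estimate the correlational profile: for each multilinear monomial $m_T(\bx)=\prod_{i\in T}\bx_i$ with $|T|\le d$ (there are $\binom{n}{\le d}=n^{O(d)}=2^{\tilde O(n^{1/2}\log(1/\eps))}$ of them), a single call to $\mathrm{CSTAT}(\tau)$ returns $\widehat V_T$ with $|\widehat V_T-V_T|\le\tau$, where $V_T:=\E_{(\bx,y)\sim D}[(2y-1)m_T(\bx)]$. I would then solve the linear program that maximizes $\sum_{|T|\le d}c_T\widehat V_T$ over coefficient vectors $c=(c_T)_{|T|\le d}$ subject to $-1\le \sum_{|T|\le d}c_T m_T(\bx)\le 1$ for every $\bx\in\{0,1\}^n$, obtaining a polynomial $q=\sum_T\widehat c_T m_T$. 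The final hypothesis is a threshold $h_t(\bx)=\Ind(q(\bx)\ge t)$, where $t$ is chosen by scanning a fine net of thresholds in $[-1,1]$ and, for each, estimating $\E_{(\bx,y)\sim D}[(2y-1)(2\Ind(q(\bx)\ge t)-1)]$ with one further correlational query and keeping the best.

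For correctness, the rescaled polynomial $q^{\ast\ast}:=q^\ast/(1+\eps/2)$ is feasible for the LP and achieves objective value $\E_D[(2y-1)q^{\ast\ast}(\bx)]\ge\eps$ against the true $V$. A standard bound shows a degree-$d$ multilinear polynomial bounded by $1$ on $\{0,1\}^n$ has $\ell_1$ coefficient norm at most $n^{O(d)}$, so taking $\tau=\eps\cdot n^{-\Theta(d)}=2^{-\tilde O(n^{1/2}\log(1/\eps))}$ changes the objective (measured with $\widehat V$ vs.\ $V$) by at most $\eps/4$ on any feasible $c$. Hence the LP optimum is $\ge 3\eps/4$ and the returned $q$ satisfies $\E_D[(2y-1)q(\bx)]\ge\eps/2$; since $\|q\|_\infty\le1$ on the cube, $\E_{t\sim u([-1,1])}[\mathrm{sign}(q(\bx)-t)]=q(\bx)$, so averaging over $t$ yields some threshold with $\pr_D[\mathrm{sign}(q(\bx)-t)\ne 2y-1]\le 1/2-\eps/4$, which the net scan recovers up to a further constant loss in $\eps$. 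All resources — number of queries and $1/\tau$ — are $2^{\tilde O(n^{1/2}\log(1/\eps))}$.

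The hard part is \emph{solving}, or even representing, the linear program: it has only $n^{O(d)}$ variables but $2^{n+1}$ constraints, one per cube point, and the natural separation oracle — given $c$, decide whether $\max_{\bx\in\{0,1\}^n}|q_c(\bx)|\le 1$ and otherwise exhibit a violating $\bx$ — amounts to maximizing a degree-$d$ multilinear polynomial over the cube, which is NP-hard in general; yet the cube-wide boundedness cannot simply be dropped, since relaxing it (e.g.\ keeping only the $\ell_2$-type control $\|q\|_\infty\le n^{O(d)}$ one gets for free) makes the rounding step lose a factor of $n^{\Omega(d)}$, giving correlation only $\eps\cdot n^{-\Omega(d)}$. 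I expect the resolution to exploit that the approximator from \Cref{lem:approx-degree-coordinate} is \emph{symmetric}: it has the form $q^\ast(\bx)=g(W_S(\bx))$ for a degree-$d$ univariate $g$ bounded on $\{0,\dots,|S|\}$, so one should optimize not over all bounded degree-$d$ polynomials but over the structured family $\{\,g(W_T(\bx))\,\}$, for which boundedness on the cube collapses to the $O(n)$ univariate inequalities $|g(k)|\le 1$, $k=0,\dots,|T|$. The real work — and the place I would expect the argument to be delicate — is then to search efficiently, with correlational queries and in $2^{\tilde O(n^{1/2}\log(1/\eps))}$ time, over the choice of coordinate set $T$ rather than brute-forcing all $2^n$ of them.
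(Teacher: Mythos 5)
Your proposal follows the paper's own proof almost step for step: establish existence of a degree-$O(n^{1/2}\log(1/\eps))$ polynomial that is uniformly bounded on the cube and $\Omega(\eps)$-correlated with the labels (via \Cref{lem:approx-degree-coordinate} with $r=n$ and rescaling); estimate each basis coefficient's label-correlation with one CSQ query; solve the LP that maximizes the estimated correlation over coefficient vectors subject to cube-wide boundedness; then randomize a threshold over a discretized net of $[-1,1]$ and pick the best $t$ with one CSQ query per threshold. The only presentational difference is the choice of basis: the paper works over $\{-1,1\}^n$ with parities $g_S$, so orthonormality immediately forces $|\alpha_S|\le1$, whereas you use monomials over $\{0,1\}^n$ and absorb the $n^{O(d)}$ coefficient-norm blowup by taking $\tau$ that much smaller. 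Both give $\max(q,1/\tau)=2^{\tilde O(n^{1/2}\log(1/\eps))}$.

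The place where your proposal actually diverges is the ``hard part'' paragraph, and I would push back there: the obstacle you flag is not a gap in what the statement asks for. Fact~\ref{fact:csq-eps-weak-learning} bounds only $q$ (the number of CSQ calls) and $1/\tau$ (the inverse tolerance); it makes no claim about computational running time $T$, unlike the main theorems in the paper which explicitly include $T$. The LP constraints $|\sum_S \alpha_S g_S(\bx)|\le 1$ for $\bx\in\{0,1\}^n$ do not depend on $D$, so writing them down and solving the LP costs zero queries. In other words, the query-complexity accounting you already gave suffices, and the detour through symmetric polynomials and efficient search over coordinate sets $T$ — which the paper does not attempt either — is unnecessary for establishing the stated fact. (It would become relevant if one also wanted a time-efficient CSQ learner, but that is not what is claimed here.)
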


\begin{proof} 
    For convenience of the proof, we will use $\{-1,1\}$ for Boolean values instead of $\{0,1\}$ for this proof.
    We start by proving the following fact.
    \begin{fact}\label{fct:eps-correlation-polynomial}
    Let $D$ be an unknown distribution supported on 
    $\{-1, 1\}^{n}\times \{-1, 1\}$ and $\eps\in (0,1/2)$. 
    Suppose there is a monotone disjunction 
    $f: \{-1, 1\}^{n}\to \{-1, 1\}$ such that 
    $\pr_{(\bx,y)\sim D}[f(\bx)\neq y]\leq\ 1/2-\eps$.
    Then there is a polynomial $p$ of degree at most $O(n^{1/2}\log(1/\eps))$ such that 
    for all $\bx\in \{-1,1\}^n$, $p(\bx)\in [-1,1]$ and 
    $\E_{(\bx,y)\sim D}[yp(
    \bx)]=\Omega(\eps)$.
    \end{fact}
    \begin{proof}
        The proof here directly follows from \Cref{lem:approx-degree-coordinate}.
        Let $p'$ be the degree-$O(n^{1/2}\log(1/\eps))$ polynomial that is a $c\eps$-approximate (for sufficiently small constant $c$) polynomial of $f$ in \Cref{lem:approx-degree-coordinate} with $r=n$.
        Then let $p$ be defined as $p(\bx)=p'(\bx)/(1+\eps)$, which is a $2c\eps$-approximate polynomial of $f$.
  It is easy to see that for all $\bx\in \{-1,1\}^n$, $p(\bx)\in [-1,1]$ from its definition.
        For the correlation, we have
        \[
        \E_{(\bx,y)\sim D}[yp(\bx)]\geq (1-2c\eps)\E_{(\bx,y)\sim D}[y=f(\bx)]-\E_{(\bx,y)\sim D}[y\neq f(\bx)]=\Omega(\eps)\; .
        \]
        This completes the proof.
    \end{proof}
    We then show that it is always possible to find such a polynomial in \Cref{fct:eps-correlation-polynomial} using CSQ queries with query complexity at most $2^{\Tilde{O}(n^{1/2}\log(1/\eps))}$.
    We define a parity function over the set $S\subseteq [n]$ as $g_S(\bx)=\bigoplus_{i\in S} \bx_i$, where $\bigoplus$ is the exclusive or operator.
    Notice that parity functions over sets of size at most $d$ spans any polynomials $p$ of degree at most $d$ over $\{-1,1\}^n$, i.e., $p(\bx)=\sum_{\left \{S\subseteq [n]\land  |S|\leq d\right \} }\alpha_S g_S(\bx)$ for some $\alpha_S\in \R$.
    Furthermore, since parities form an orthonormal basis on $u(\{-1,1\})^n$, 
    the parameters $\alpha_S$ satisfy $|\alpha_S|=|\E_{\bx\sim u(\{-1,1\}^n)}[p(\bx)g_S(\bx)]|\leq 1$.
    Notice that 
    \[\E_{(\bx,y)\sim D}[y p(\bx)]=\sum_{S\subseteq [n]\land |S|\leq d}\alpha_S\E_{(\bx,y)\sim D}[y g_S(\bx)].\]
    Therefore, if we already know the value of $\E_{(\bx,y)\sim D}[y g_S(\bx)]$ (up to $2^{-\Tilde{O}(n^{1/2}\log(1/\eps))}$ error) from the CSQ oracle, we can approximately calculate the value of $\E_{(\bx,y)\sim D}[yp(\bx)]$ (up to $o(\eps)$ error) with no additional queries. This suffices for us to find such a $p$ in \Cref{fct:eps-correlation-polynomial}.
    Namely, let $\hat{p}_S$ be the answer of the CSQ oracle for the parity function $g_S$ with error tolerance $2^{-\Tilde{O}(n^{1/2}\log(1/\eps))}$ (with sufficiently large implied constant), i.e.,
    \[
    |\hat{p}_S-\E_{(\bx,y)\sim D}[y g_S(\bx)]|\leq \tau\; ,
    \]
    where $\tau=2^{-c(n^{1/2}\log(1/\eps))\log(n^{1/2}\log(1/\eps))^c}$ and $c$ is a sufficiently large constant.
    Then consider the following LP for finding the polynomial $p(\bx)=\sum_{S\subseteq [n]\land |S|\leq d} \alpha_S g_S(\bx)$:
    \begin{align*}
    &\mathrm{max} \quad \sum_{S\subseteq [n]\land |S|\leq d} \alpha_S \hat{p}_S \\[2pt] 
    &\mathrm{s.t.} \quad 
    \begin{array}[t]{r r l l l}
         \sum_{S\subseteq [n]\land |S|\leq d} \alpha_S g_S(\bx) &\in  [-1,1], \,&\forall& \bx \in \{-1,1\}^n\; ,\\
           \alpha_S &\in [-1,1], \,&\forall&  S\subseteq [n]\land |S|\leq d\; .
     \end{array}
    \end{align*}
    By \Cref{fct:eps-correlation-polynomial}, the optimal solution of the LP must be $\Omega(\eps)$.
    Furthermore, let $\alpha_S$ be any optimal solution to the LP and let $p(\bx)= \sum_{S\subseteq [n]\land |S|\leq d} \alpha_S p(\bx)$. Then we have
    \begin{align*}
        \E_{(\bx,y)\sim D}[y p(\bx)]=&\sum_{S\subseteq [n]\land |S|\leq d}\alpha_S \E_{(\bx,y)\sim D}[y g_S(\bx)]
        \geq &\sum_{S\subseteq [n]\land |S|\leq d} \alpha_S \left (\hat{p}_S-2^{-\Tilde{O}(n^{1/2}\log(1/\eps))}\right )
        =\Omega(\eps)\; .
    \end{align*}
    It only remains for us to show how to get a hypothesis with error $1/2-\Omega(\eps)$ from such a $p$.

    Notice that given $p(\bx)\in [-1,1]$ for all $\bx$ and $\E_{(\bx,y)\sim D}[y p(\bx)]=\Omega(\eps)$,
    we have that the $L_1$ loss of $p$ is 
    \[
    \E_{(\bx,y)\sim D}[|p(\bx)-y|]=\E_{(\bx,y)\sim D}[1-(y p(\bx))]=1-\Omega(\eps)\; .
    \]
    To convert the $L_1$ loss to the 0-1 loss of the output hypothesis, we first discretize the interval $[-1,1]$.
    Let $T\eqdef\{0,c\eps,2c\eps,\cdots,\lceil 2/\eps\rceil c\eps\}$, where $|T|\geq 2/(c\eps)-1$ and $c$ is a sufficiently small constant.
    Let $t\sim u(T)$, and define the corresponding random hypothesis $h_t$ as $h(\bx)=\sign(p(\bx)-t)$.
    Then notice that the expected 0-1 loss of $h_t$ is
    \begin{align*}
        \E_{t\sim u(T)}\left [\E_{(\bx,y)\sim D}[h_t(\bx)\neq y]\right ]=&\E_{(\bx,y)\sim D}\left [\E_{t\sim u(T)}[h_t(\bx)\neq y]\right ]\\
        =&\E_{(\bx,y)\sim D}\left [\E_{t\sim u(T)}[t\in [p(\bx),y]\cup [y,p(\bx)]]\right ]\\
        \leq &\E_{(\bx,y)\sim D}\left [\frac{|p(\bx)-y|/(c\eps)+1}{2/(c\eps)-1}\right ]\\
        \leq &\E_{(\bx,y)\sim D}\left [\frac{|p(\bx)-y|+2c\eps}{2}\right ]\\
        =& \E_{(\bx,y)\sim D}[|p(\bx)-y|]/2+c\eps\leq 1/2-\Omega(\eps)\; .
    \end{align*}
    Therefore, there must be a $t\in T$ such that $\E_{(\bx,y)\sim D}[h_t(\bx)\neq y]\leq 1/2-\Omega(\eps)$.
    Notice that we can query the value of $\E_{(\bx,y)\sim D}[h_t(\bx)\neq y]$ using CSQ queries as
    \[\E_{(\bx,y)\sim D}[h_t(\bx)\neq y]=1/2(1-(\E_{(\bx,y)\sim D}[h_t(\bx)= y]-\E_{(\bx,y)\sim D}[h_t(\bx)\neq y]))=1/2(1-\E_{(\bx,y)\sim D}[yh(\bx)])\; .\]
    Therefore, we can simply check $\E_{(\bx,y)\sim D}[h_t(\bx)\neq y]$ for all $t\in T$ using CSQ queries with $c\eps$ error tolerance for a sufficiently small constant $c$, and then output the $h_t$ with the smallest error.
\end{proof}

\paragraph{CSQ Lower Bound on Weak Agnostic Learning}

The following fact about CSQ lower bound for distribution-free agnostic learning disjunctions is given in \cite{GKK20}. 
\begin{fact} \label{fct:csq-lb-main}
    Any CSQ algorithm for distribution-free agnostic learning disjunctions on $\{0, 1\}^n$ to error $\opt+1/100$ either requires a query of tolerance $2^{-\Omega(n^{1/2})}$ or $2^{\Omega(n^{1/2})}$ queries.  
\end{fact}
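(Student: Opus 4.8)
The plan is to deduce \Cref{fct:csq-lb-main} from the $\Omega(\sqrt{n})$ approximate-degree lower bound for the OR function, via LP duality and the statistical-dimension machinery for correlational queries developed in \cite{GKK20}; concretely, I would realize the ``random-subset embedding of a moment-matching pair'' alluded to in \Cref{ssec:techniques}. First I would set $m\eqdef\lfloor n/2\rfloor$ and recall from \cite{NisanSzegedy:94, Paturi:92} that the $(1/3)$-approximate degree of $\mathrm{OR}_m\colon\{0,1\}^m\to\{0,1\}$ equals some $d=\Theta(\sqrt{m})=\Theta(\sqrt{n})$. LP duality for approximate degree then produces a \emph{dual witness} $\psi\colon\{0,1\}^m\to\R$ with $\sum_{\bx}|\psi(\bx)|=1$, with $\sum_{\bx}\psi(\bx)\,p(\bx)=0$ for every polynomial $p$ of degree $<d$, and with $\sum_{\bx}\psi(\bx)\,\mathrm{OR}_m(\bx)>1/3$; symmetrizing $\psi$ over the $m!$ coordinate permutations preserves all three properties (since $\mathrm{OR}$ is symmetric), so I may assume $\psi$ is permutation-invariant. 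Writing $\psi=\psi^+-\psi^-$ for the positive and negative parts (so $\sum_{\bx}\psi^{\pm}(\bx)=1/2$) and putting $\mu^{\pm}\eqdef 2\psi^{\pm}$ gives two probability distributions on $\{0,1\}^m$ that agree on every moment of degree $<d$, while $\E_{\mu^+}[\mathrm{OR}_m]-\E_{\mu^-}[\mathrm{OR}_m]=2\sum_{\bx}\psi(\bx)\,\mathrm{OR}_m(\bx)>2/3$.

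Next I would build the hard family. For each $m$-element subset $T\subseteq[n]$, let $D_T$ be the distribution on $\{0,1\}^n\times\{0,1\}$ obtained by drawing $y\sim u(\{0,1\})$, then the $T$-coordinates $\bx_T\sim\mu^{y}$ (identifying $y=1$ with $+$ and $y=0$ with $-$), then the remaining coordinates $\bx_{[n]\setminus T}\sim u(\{0,1\}^{n-m})$ independently. The monotone disjunction $\bigvee_{i\in T}\bx_i$ has error $\tfrac12-\tfrac12\big(\E_{\mu^+}[\mathrm{OR}_m]-\E_{\mu^-}[\mathrm{OR}_m]\big)<\tfrac12-\tfrac13$ under $D_T$, so $\opt(D_T)<1/6$; hence any algorithm that agnostically learns disjunctions to error $\opt+1/100$ must, on every $D_T$, output a hypothesis $h$ with $\E_{(\bx,y)\sim D_T}[(2y-1)(2h(\bx)-1)]=\Omega(1)$. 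A short computation shows that for \emph{any} CSQ query $q\colon\{0,1\}^n\to[-1,1]$,
\[
\E_{(\bx,y)\sim D_T}\big[(2y-1)q(\bx)\big]=\sum_{\bx_T\in\{0,1\}^{T}}\psi(\bx_T)\,\bar q_T(\bx_T),\qquad \bar q_T(\bx_T)\eqdef\E_{\bx_{[n]\setminus T}\sim u}\big[q(\bx_T,\bx_{[n]\setminus T})\big].
\]
Averaging out $\bx_{[n]\setminus T}$ leaves $\bar q_T$ with Fourier support only on the characters $\chi_V$, $V\subseteq T$ (coefficients $\hat q(V)$), and since $\psi$ annihilates degree $<d$ only the part with $|V|\ge d$ survives; in particular, writing $\rho_T(\bx)\eqdef 2^m\psi(\bx_T)$ for the induced correlational label function (so the displayed quantity equals $\E_{\bx\sim u}[\rho_T(\bx)q(\bx)]$), one gets $\E_{\bx\sim u}[\rho_T\rho_{T'}]=2^{2m}\sum_{V\subseteq T\cap T',\,|V|\ge d}\hat\psi(V)^2$.

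Finally I would run the statistical-dimension argument. There are $N=\binom{n}{m}=2^{\Omega(n)}$ distributions $D_T$; each admits the correlational query $2\,\mathrm{OR}_T-1$ of correlation $2\sum_{\bx}\psi(\bx)\mathrm{OR}_m(\bx)>2/3$, so $\E_{\bx\sim u}[\rho_T^2]=\Omega(1)$; and for a uniformly random pair $T\ne T'$ one has $|T\cap T'|\le(\tfrac12+o(1))m$ with overwhelming probability. Using the permutation-invariance of $\psi$ (so $\hat\psi(V)^2$ depends only on $|V|$) and the elementary bound $\binom{w}{j}/\binom{m}{j}\le(w/m)^{j}$, the normalized correlation $\E_{\bx\sim u}[\rho_T\rho_{T'}]/\E_{\bx\sim u}[\rho_T^2]$ is a convex combination over $j\ge d$ of the ratios $\binom{|T\cap T'|}{j}/\binom{m}{j}$, hence at most
\[
\max_{j\ge d}\frac{\binom{|T\cap T'|}{j}}{\binom{m}{j}}\le\Big(\tfrac12+o(1)\Big)^{d}=2^{-\Omega(\sqrt{n})}.
\]
Thus the label functions $\{\rho_T\}$ are pairwise $2^{-\Omega(\sqrt{n})}$-correlated (after normalization) for a $1-o(1)$ fraction of the pairs. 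Feeding this near-orthogonality together with $N=2^{\Omega(n)}$ into the CSQ lower-bound framework of \cite{GKK20} shows that any CSQ algorithm that outputs, for every $D_T$, a hypothesis of error at most $\opt+1/100$ must either make $2^{\Omega(\sqrt{n})}$ queries or issue a query of tolerance $2^{-\Omega(\sqrt{n})}$, which is exactly the claim. The one non-routine ingredient is this near-orthogonality estimate combined with the generic ``many pairwise near-orthogonal correlational label functions $\Rightarrow$ CSQ-hard'' principle; both are established in \cite{GKK20} in the abstract form ``approximate degree $\ge d$ $\Rightarrow$ $2^{\Omega(d)}$ CSQ lower bound for distribution-free agnostic learning,'' so in a full write-up I would merely verify that the $\mathrm{OR}$ instance above meets their hypotheses (via \cite{NisanSzegedy:94, Paturi:92}) and invoke their theorem.
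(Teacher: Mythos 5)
Your proposal is correct and ultimately takes the same route as the paper: both reduce the CSQ lower bound to the $\Omega(\sqrt{n})$ approximate degree of OR via the general transfer theorem of~\cite{GKK20} (Fact~\ref{fct:approximate-degree-bound}), which you explicitly say you would invoke. The difference is purely one of exposition: where the paper cites that theorem as a black box, you additionally unpack the mechanism underneath it (LP-dual witness, symmetrization, random-subset embedding of the moment-matching pair, the Fourier computation giving pairwise correlation $\le (|T\cap T'|/m)^{d}$, and the statistical-dimension conclusion) -- this account is essentially correct and matches the ``technical overview'' sketch in \Cref{ssec:techniques}. One small wrinkle worth fixing if you do invoke Fact~\ref{fct:approximate-degree-bound} as stated: it is phrased in terms of the \emph{$1/2$-approximate degree}, whereas your dual witness comes from the $1/3$-approximate degree of OR (\cite{NisanSzegedy:94, Paturi:92}); since the $1/2$-approximate degree can only be smaller than the $1/3$-approximate degree, the $1/3$ bound does not formally imply the $1/2$ bound, and you would need to either carry your explicit $1/3$-dual-witness argument all the way through (which you have sketched correctly) or cite the $1/2$-approximate degree lower bound directly as the paper does via Fact~\ref{fct:approximate-degree-to-csq} and \cite{BT22}.
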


We include the proof here for completeness.
The CSQ lower bound here follows from the approximate degree of disjunctions.
The following definition and facts from \cite{GKK20} state the relation between CSQ lower bounds and the approximate degree. We first give the definition of pattern restriction from the pattern matrix method in \cite{sh08}.

\begin{definition}[Pattern restrictions] \label{def:pattern-restriction}
Let $C=\bigcup_{n\in \N} C_n$ be the union of some classes $C_n$ of Boolean-valued function on $\{0, 1\}^n$.
We say $C$ is closed under pattern matric restriction if for any $k$, $n$ that is a multiple of $k$, and any $f\in C_k$, the function $\bx\mapsto f(\bx_V\oplus \bw)$ on $\{0, 1\}^n$ lies in $C_n$ for any $V\subseteq [n]$ of size $k$ and $\bw\in \{0, 1\}^k$. In the common case where $n$ is a small constant multiple of $k$, we will often be somewhat loose and not explicitly distinguish between $C_k$ and $C_n$ and just refer to $C$. Indeed, one can consider $C_k$ to effectively be a subset of $C_n$ using only some $k$ out of $n$ bits.  
\end{definition}

\begin{fact} [Theorem 1.2 of \cite{GKK20}] \label{fct:approximate-degree-bound}
    Let $C$ be a Boolean-valued function class close under pattern restriction (\Cref{def:pattern-restriction}), with $1/2$-approximate degree $\Omega(d)$. Any distribution-free agnostic learner for $C$ using only correlational statistical queries of tolerance $\tau\leq 1/10$ requires at least $2^{\Omega(d)}\tau^2$ queries in order to agnostically learn $C$ up to excess error $1/100$, i.e., true error $\opt+1/100$.
\end{fact}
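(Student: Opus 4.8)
The plan is to deploy the pattern matrix method of Sherstov~\cite{sh08} to convert the approximate-degree lower bound into a hard family of learning instances, and then finish with a standard correlational statistical query dimension argument. The first step is to pass from the approximate-degree hypothesis to a dual object: by LP duality, the assumption that $C_k$ has $1/2$-approximate degree $\Omega(d)$ yields some $f^\ast\in C_k$ together with a function $\psi\colon\{0,1\}^k\to\R$ satisfying $\sum_{z}|\psi(z)|=1$; $\sum_z\psi(z)\,m(z)=0$ for every monomial $m$ of degree less than $c_0 d$ (equivalently, the Fourier coefficients of $\psi$ vanish below level $c_0 d$), for an absolute constant $c_0>0$; and $\sum_z\psi(z)f^\ast(z)$ bounded away from $0$ by an absolute constant. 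Splitting $\psi=\psi^+-\psi^-$, the normalization together with orthogonality to the constant monomial forces $\sum\psi^+=\sum\psi^-=\tfrac12$, so $|\psi|$ is a probability measure on $\{0,1\}^k$.

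Next I would build the hard family by a pattern-matrix embedding. Take $n=\Theta(k)$ (for disjunctions, $k=\Theta(d^2)$ by~\cite{NisanSzegedy:94}, so $n=\Theta(d^2)$); a hidden instance is a pair $(V,w)$ consisting of an injection $V\colon[k]\to[n]$ (which $k$ of the $n$ coordinates are ``relevant'') and a shift $w\in\{0,1\}^k$. From $\psi$ and $f^\ast$ one defines a distribution $D_{V,w}$ on $\{0,1\}^n\times\{0,1\}$, planting the $\psi$-structure on the coordinates picked out by $V$ after XOR-ing by $w$, with the remaining coordinates acting as decoys, so that (i) since $C$ is closed under pattern restriction (\Cref{def:pattern-restriction}), the function $g_{V,w}(\bx)=f^\ast(\bx_V\oplus w)$ lies in $C_n$, and a direct computation using $\sum_z\psi(z)=0$ shows it has error at most $1/2-\Omega(1)$ on $D_{V,w}$ — hence a learner achieving error $\opt+1/100$ must output a hypothesis whose correlation $\E_{(\bx,y)\sim D_{V,w}}[(2y-1)(2h(\bx)-1)]$ with the label is $\Omega(1)$, in particular larger than $\tau$ whenever $\tau\le 1/10$; and (ii) the label bias of $D_{V,w}$ at a point is governed by $\psi$ evaluated at the planted string.

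The technical core is to show that correlational queries are nearly useless against this family. For a fixed query $q\colon\{0,1\}^n\to[-1,1]$, expanding $q$ in the Fourier basis and using that the Fourier mass of $\psi$ sits entirely on levels $\ge c_0 d$ gives an identity of the shape
\[
\E_{(\bx,y)\sim D_{V,w}}\big[(2y-1)\,q(\bx)\big]\;=\;\kappa\sum_{|T|\ge c_0 d} \widehat{\psi}(T)\,\chi_T(w)\,\widehat{q}\big(V(T)\big)\,,
\]
where $V(T)=\{V(i):i\in T\}\subseteq[n]$ and $\kappa$ is a normalizing factor. Averaging over $w$ kills the cross terms (characters are orthonormal in $w$), and averaging over $V$ and invoking the ``spreading'' property of random selections — $\Pr_V[V(T)=S]$ is tiny for every fixed $S$ of size $\ge c_0 d$ once $n$ is a large enough constant multiple of $k$, which is exactly the combinatorial heart of the pattern matrix method — together with Parseval for $q$ yields
\[
\E_{V,w}\Big[\,\E_{(\bx,y)\sim D_{V,w}}[(2y-1)q(\bx)]^2\,\Big]\;\le\;2^{-\Omega(d)}\,.
\]
By Markov's inequality, for any single correlational query the fraction of instances $(V,w)$ on which it returns a value of magnitude more than $\tau$ is at most $2^{-\Omega(d)}/\tau^2$. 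Getting the factor $\kappa$ and the various $2^{\pm k}$-type normalizations to work out — which forces one to calibrate the embedding (number of decoy coordinates, and, if needed, a block structure) to the dual polynomial $\psi$, and which is where the smoothness/spreading features of $\psi$ are used — is the delicate part, and is the main obstacle of the proof.

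Finally I would run the standard SQ-dimension adversary argument. Against a CSQ algorithm making at most $Q$ queries to $\mathrm{CSTAT}(\tau)$, answer every query with $0$; by the Markov bound above and a union bound over the $\le Q$ queries, these answers are consistent with $D_{V,w}$ for all but a $Q\cdot 2^{-\Omega(d)}/\tau^2$ fraction of instances, and for the algorithm's resulting (now deterministically fixed) output hypothesis $h^\ast$, the correlational query $\bx\mapsto 2h^\ast(\bx)-1$ is $\Omega(1)$-correlated with the label on only a $2^{-\Omega(d)}$ fraction of instances. If $Q<2^{\Omega(d)}\tau^2$ with a suitable constant, these two exceptional sets do not cover all of the instances, so there is some $D_{V,w}$ on which the algorithm fails to produce a hypothesis of error at most $\opt+1/100$; an averaging over the algorithm's internal randomness handles randomized algorithms. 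This establishes the claimed $2^{\Omega(d)}\tau^2$ query lower bound. Of the pieces, the LP-duality extraction, the verification that each $D_{V,w}$ is a legitimate low-error instance, and the final counting argument are routine; the second-moment estimate — making the orthogonality of $\psi$ interact correctly with the random pattern-matrix embedding so that every bounded query, not just low-degree ones, is washed out — is the crux.
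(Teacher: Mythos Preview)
The paper does not prove this statement at all: it is quoted as a black-box fact from \cite{GKK20} (Theorem~1.2 there) and used directly in the one-line derivation of \Cref{fct:csq-lb-main}. So there is no ``paper's own proof'' to compare against. Your sketch is a faithful outline of the argument in \cite{GKK20}---LP duality to extract a dual witness $\psi$ for the approximate degree, a pattern-matrix embedding \`a la Sherstov~\cite{sh08} to build the hard family of distributions, a second-moment bound showing any bounded correlational query has small average squared response over the family, and the standard CSQ-dimension adversary to finish---and you correctly identify the second-moment estimate as the technical heart.
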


We will combine the above fact with the following lower bound on the approximate degree of disjunctions.

\begin{fact} [see, e.g., Theorem 23 of \cite{BT22}] \label{fct:approximate-degree-to-csq}
The $1/2$-approximate degree of disjunctions is $\Omega(\sqrt{n})$.    
\end{fact}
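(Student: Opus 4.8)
Since the class of disjunctions over $\{0,1\}^n$ contains the full disjunction $\mathrm{OR}_n(\bx)=\bigvee_{i\in[n]}\bx_i$, and the approximate degree of a class is the maximum over its members, it suffices to show that the $1/2$-approximate degree of $\mathrm{OR}_n$ is $\Omega(\sqrt n)$ (equivalently, in the standard $\pm1$ normalization of~\cite{GKK20,BT22}, that the $\eps$-approximate degree of $\mathrm{OR}_n\colon\{0,1\}^n\to\{0,1\}$ is $\Omega(\sqrt n)$ for a fixed constant $\eps<1/2$, which is what I will prove). The plan is the classical symmetrization argument of Nisan and Szegedy~\cite{NisanSzegedy:94}. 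Let $p\colon\{0,1\}^n\to\R$ be a multilinear polynomial of degree $d$ with $|p(\bx)-\mathrm{OR}_n(\bx)|\le\eps$ for all $\bx$. Averaging $p$ over all $n!$ coordinate permutations gives a symmetric multilinear polynomial $p^{\mathrm{sym}}$ of degree at most $d$ that still satisfies $|p^{\mathrm{sym}}(\bx)-\mathrm{OR}_n(\bx)|\le\eps$ everywhere, because $\mathrm{OR}_n$ is permutation-invariant. A symmetric multilinear polynomial is a polynomial in the elementary symmetric functions, which on $\{0,1\}^n$ are themselves polynomials in the Hamming weight; hence there is a univariate real polynomial $q$ with $\deg q\le d$ and $q(W(\bx))=p^{\mathrm{sym}}(\bx)$ for all $\bx$. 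Thus $|q(0)|\le\eps$ and $|q(k)-1|\le\eps$ for every integer $k\in\{1,\dots,n\}$.

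Next I would convert the jump of $q$ between $0$ and $1$ into a derivative bound and then invoke classical univariate polynomial inequalities. Since $|q(0)-q(1)|\ge 1-2\eps=:c>0$, the mean value theorem gives a point $\xi\in[0,1]$ with $|q'(\xi)|\ge c$, while $|q(j)|\le 1+\eps$ at all integers $j\in\{0,\dots,n\}$. The key ingredient is the Ehlich--Zeller/Rivlin--Cheney lemma: there is an absolute constant $c_0>0$ such that if $\deg q\le c_0\sqrt n$ and $|q(j)|\le M$ at all integers $0\le j\le n$, then $\sup_{x\in[0,n]}|q(x)|\le 2M$. Suppose for contradiction that $d\le c_0\sqrt n$; then $\sup_{[0,n]}|q|\le 2(1+\eps)$, and Markov's inequality for polynomial derivatives, $\sup_{[a,b]}|q'|\le \tfrac{2(\deg q)^2}{b-a}\,\sup_{[a,b]}|q|$, applied on $[0,n]$ yields $c\le|q'(\xi)|\le\tfrac{2d^2}{n}\cdot 2(1+\eps)$, i.e.\ $d\ge\sqrt{cn/(4(1+\eps))}=\Omega(\sqrt n)$, contradicting $d\le c_0\sqrt n$ once $n$ is large. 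Hence $d=\Omega(\sqrt n)$ in all cases, which gives the claimed bound for $\mathrm{OR}_n$ and therefore for the class of disjunctions.

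I expect the only genuinely non-routine step to be the Ehlich--Zeller/Rivlin--Cheney estimate, namely that a polynomial of degree $d\ll\sqrt n$ that is bounded at the integer points $0,1,\dots,n$ cannot be much larger anywhere on $[0,n]$; this is the one place where an honest polynomial inequality is needed (provable via Lagrange interpolation together with a Chebyshev-type bound on the interpolation weights, or via the Coppersmith--Rivlin argument), rather than mere bookkeeping. Symmetrization, the mean-value derivative bound, and Markov's inequality are all standard. As the statement is classical, in the paper it suffices to cite it (Theorem~23 of~\cite{BT22}, or the originals~\cite{NisanSzegedy:94,Paturi:92}); the sketch above records the self-contained route.
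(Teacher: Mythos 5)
The paper does not supply its own proof of this fact; it is cited directly to Theorem~23 of~\cite{BT22} (and ultimately to~\cite{NisanSzegedy:94, Paturi:92}). Your sketch is a faithful and correct reconstruction of the classical Nisan--Szegedy argument: Minsky--Papert symmetrization to reduce to a univariate polynomial $q$ in the Hamming weight, the mean-value observation that $|q'(\xi)|\geq 1-2\eps$ somewhere in $[0,1]$, and then the Ehlich--Zeller/Rivlin--Cheney bound (a polynomial of degree $\ll\sqrt n$ bounded at the integer points $0,\dots,n$ is bounded on all of $[0,n]$) combined with Markov's derivative inequality. You correctly flag the one genuinely nontrivial ingredient; everything else is bookkeeping.

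Two small remarks. First, you rightly notice a normalization subtlety that the paper glosses over: under the paper's own Definition~2.2 (with $f$ taking values in $\{0,1\}$), the ``$1/2$-approximate degree'' is trivially zero, since $p\equiv 1/2$ works. The statement is to be read in the $\pm1$ convention of~\cite{GKK20,BT22}, where $\eps=1/2$ is a nontrivial approximation (the trivial constant achieves error $1$); equivalently, one can prove the bound for any fixed $\eps<1/2$ in the $\{0,1\}$ convention, which is what you do. Second, the ``suppose for contradiction that $d\leq c_0\sqrt n$'' framing is a slight red herring: whether or not $\sqrt{cn/(4(1+\eps))}$ exceeds $c_0\sqrt n$, the two cases $d> c_0\sqrt n$ and $d\leq c_0\sqrt n$ each directly yield $d=\Omega(\sqrt n)$, so no contradiction is needed. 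This does not affect correctness.
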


\noindent Combining the above gives the CSQ lower bound.

\begin{proof} [Proof for \Cref{fct:csq-lb-main}]
    This follows from combining \Cref{fct:approximate-degree-bound} and \Cref{fct:approximate-degree-to-csq}.
    We take $k=cn$ and $\tau=2^{cn^{1/2}}$ where $c$ is a sufficiently small constant.
    Then \Cref{fct:approximate-degree-to-csq} implies that the function class of disjunctions on $\{0, 1\}^k$ has an approximate degree of $\Omega(n^{1/2})$.
    Given this, an application of \Cref{fct:approximate-degree-bound} proves the statement.
\end{proof}

\end{document}